\documentclass[oneside,11pt]{article} 
\hyphenation{op-tical net-works semi-conduc-tor}
\hoffset=0in \voffset=0in \evensidemargin=0in \oddsidemargin=0in
\textwidth=6.5in \topmargin=0in \headheight=0.0in \headsep=0.0in
\textheight=9in

\renewenvironment{abstract}
  {{\centering\large\bfseries Abstract\par}\vspace{0.7ex}%
    \bgroup
       \leftskip 20pt\rightskip 20pt\small\noindent\ignorespaces}%
  {\par\egroup\vskip 0.25ex}

\synctex=1

\usepackage{microtype}
\usepackage{graphicx}



\usepackage{amsmath,amsfonts,amssymb,amsthm,commath}

\usepackage{algorithm,algorithmic}

\usepackage{booktabs,enumitem}
\usepackage{url}
\usepackage{xcolor}

\usepackage{hyperref} 

\usepackage{balance} 

\usepackage{pifont}


\theoremstyle{plain}
\newtheorem{theorem}{Theorem}
\newtheorem{lemma}[theorem]{Lemma}
\newtheorem{proposition}[theorem]{Proposition}

\theoremstyle{definition}
\newtheorem{definition}[theorem]{Definition}

\newtheorem{claim}[theorem]{Claim}
\newtheorem{assumption}{Assumption}


\newcommand{\defeq}{:=}

\renewcommand{\(}{\left(}
\renewcommand{\)}{\right)}

\newcommand{\poly}[1]{\operatorname{poly}\del{#1}}
\newcommand{\polylog}[1]{\operatorname{polylog}\del{#1}}
\newcommand{\sign}[1]{\operatorname{sign}\del{#1}}
\renewcommand{\Pr}{\operatorname{Pr}}
\newcommand{\EXP}{\operatorname{\mathbb{E}}}

\newcommand{\err}{\operatorname{err}}

\newcommand{\trans}{^{\top}}

\newcommand{\ind}[1]{\boldsymbol{1}_{\cbr{#1}}}

\newcommand{\R}{\mathbb{R}}
\newcommand{\Rd}{\mathbb{R}^d}

\newcommand{\calD}{\mathcal{D}}
\newcommand{\calX}{\mathcal{X}}
\newcommand{\calY}{\mathcal{Y}}

\newcommand{\calF}{\mathcal{F}}

\newcommand{\calR}{\mathcal{R}}


\newcommand{\twonorm}[1]{\left\lVert #1 \right\rVert_{2}}

\newcommand{\oraclex}{\mathrm{EX}_{\eta}^x(D, w^*)}
\newcommand{\oracley}{\mathrm{EX}^y}

\newcommand{\lambdamax}{\lambda_{\max}}

\newcommand{\mumax}{\mu_{\max}}

\newcommand{\nastyoraclexy}{\mathrm{EX}_{\eta}(D, w^*; N)}
\newcommand{\nastyoraclex}{\mathrm{EX}_{\eta}^x(D, w^*; N)}

\newcommand{\TC}{T_{\mathrm{C}}}
\newcommand{\TD}{T_{\mathrm{D}}}
\newcommand{\TE}{T_{\mathrm{E}}}

\newcommand{\hatTC}{\hat{T}_{\mathrm{C}}}
\newcommand{\hatTE}{\hat{T}_{\mathrm{E}}}

\newcommand{\ND}{N_{\mathrm{D}}}
\newcommand{\pD}{p^{}_{\mathrm{D}}}
\newcommand{\AC}{A_{\mathrm{C}}}
\newcommand{\AD}{A_{\mathrm{D}}}
\renewcommand{\AE}{A_{\mathrm{E}}}

\newcommand{\citep}[1]{\cite{#1}}
\newcommand{\citet}[1]{\cite{#1}}

\title{Sample-Optimal PAC Learning of Halfspaces with Malicious Noise}

\author{%
  Jie Shen \\
Stevens Institute of Technology\\
  \texttt{jie.shen@stevens.edu} 
}

\begin{document}

\maketitle

\begin{abstract}
We study efficient PAC learning of homogeneous halfspaces in $\mathbb{R}^d$ in the presence of malicious noise of Valiant~(1985). This is a challenging noise model and only until recently has near-optimal noise tolerance bound been established under the mild condition that the unlabeled data distribution is isotropic log-concave. However, it remains unsettled how to obtain the optimal sample complexity simultaneously. In this work, we present a new analysis for the algorithm of Awasthi~et~al.~(2017) and show that it essentially achieves the near-optimal sample complexity bound of $\tilde{O}(d)$, improving the best known result of $\tilde{O}(d^2)$. Our main ingredient is a novel incorporation of a matrix Chernoff-type inequality to bound the spectrum of an empirical covariance matrix for well-behaved distributions, in conjunction with a careful exploration of the localization schemes of Awasthi~et~al.~(2017). We further extend the algorithm and analysis to the more general and stronger nasty noise model of Bshouty~et~al.~(2002), showing that it is still possible to achieve near-optimal noise tolerance and sample complexity  in polynomial time under a mild relaxation of the noise model.
\end{abstract}

\section{Introduction}\label{sec:intro}

In this paper, we study computationally efficient PAC learning of homogeneous halfspaces~--~arguably one of the most important problems in machine learning~\cite{valiant1984theory}. In the absence of noise, the problem is well understood and can be efficiently solved by linear programming~\cite{maass1994fast} or the Perceptron~\cite{rosenblatt1958perceptron}. However, when the unlabeled data\footnote{We will also refer to unlabeled data as instances in this paper, and refer to labeled data as samples.} or the labels are corrupted, it becomes subtle to develop polynomial-time algorithms that are resilient to the noise~\cite{valiant1985learning,angluin1988learning,kearns1988learning,kearns1992toward}.

Generally speaking, a large body of existing works study the problem of learning halfspaces under {\em label} noise. This includes early works on random classification noise where the label of each instance is independently flipped with a fixed probability~\cite{blum1996polynomial}, a more general model termed Massart noise where the probability of flipping a given label may vary from instance to instance but is bounded away from $\frac12$~\cite{sloan1988types,massart2006risk}, the Tsybakov noise where the flipping probability can be arbitrarily close to $\frac12$ for a fraction of samples~\cite{tsybakov2004optimal}, and the much stronger adversarial (i.e. agnostic) noise where the adversary may choose an arbitrary joint distribution over the instance and label spaces~\cite{haussler1992decision,kearns1992toward,kalai2005agnostic,daniely2015ptas}. When only the labels are corrupted, significant progress towards establishing near-optimal performance guarantees has been witnessed in recent years; see, e.g.~\citet{awasthi2017power,diakonikolas2019distribution,diakonikolas2020near,diakonikolas2020polynomial,diakonikolas2020learning,zhang2020efficient,shen2020power}.

Compared to the fruitful set of positive results on efficient learning of halfspaces under label noise, less is known for the significantly more challenging regime where {\em both} instances and labels are corrupted. Specifically, one of such strong noise models that has played a crucial role in learning theory is the malicious noise model of~\citet{valiant1985learning,kearns1988learning}, defined as follows:

\begin{definition}[Malicious noise]\label{def:malicious}
Let $\calX = \Rd$ and $\calY = \{-1, 1\}$ be the instance and label space, respectively. Let $D$ be an unknown distribution over $\calX$, and $w^* \in \Rd$ be an unknown halfspace. Each time the learner requests a sample, with probability $1 - \eta$, the adversary draws $x$ from $D$ and returns the clean sample $(x, \sign{w^* \cdot x})$; with probability $\eta$, it may return an arbitrary pair $(x, y) \in \calX \times \calY$ called dirty sample. The parameter $\eta \in [0, \frac12)$ is termed noise rate.
\end{definition}

Notably, when the adversary is allowed to search for dirty samples, it is assumed to have unlimited computational power and can construct the sample based on the state of the learning algorithm and the history of its outputs. Since this is a much more demanding noise model (compared to label-only noise), even the achievability of optimal noise tolerance by efficient algorithms remained unsettled for decades. For example, the early work of \citet{kearns1988learning} presented a general analysis showing that even without any distributional assumptions, it is possible to tolerate the malicious noise at a rate of ${\Omega}(\epsilon/d)$, but a noise rate greater than $\frac{\epsilon}{1+\epsilon}$ cannot be tolerated, where $\epsilon \in (0, 1)$ is the target error rate given to the learner. The noise model was then broadly studied in the literature, though the learning algorithms may be inefficient; see e.g. \citet{schapire1992design,bshouty1998new,cesa1999sample}. Under different distributional assumptions, there are more positive results for efficient learning with malicious noise. In particular, when the distribution $D$ is uniform over the unit sphere, \citet{kalai2005agnostic} developed an efficient learning algorithm and obtained a noise tolerance $\Omega(\epsilon/d^{1/4})$, which was later improved to $\Omega(\epsilon^2/\log(d/\epsilon))$ in terms of the dependence on the dimension by \citet{klivans2009learning}. It is, however, well recognized that the uniform distribution is often restrictive in practice. As a remedy, \citet{klivans2009learning} also investigated the remarkably more general isotropic log-concave distributions~\cite{lovasz2007geometry}, and showed for the first time a noise tolerance of $\Omega(\epsilon^3/\log^2(d/\epsilon))$ under such mild condition. Unfortunately, owing to the strong power of the adversary, the barrier of achieving the information-theoretic limit of $\frac{\epsilon}{1+\epsilon}$ was not broken for many years (even under the uniform distribution).  Very recently, a near-optimal noise tolerance of the form $\Omega(\epsilon)$ was established by \citet{awasthi2017power} for isotropic log-concave distributions through a dedicated iterative localization technique, which stands for the state of the art.

In addition to the degree of noise tolerance, another yet important quantity that characterizes the performance of a learning algorithm is sample complexity. Unfortunately, it turns out that none of the prior works obtained near-optimal sample complexity and noise tolerance simultaneously under the mild condition that the (clean) instances are drawn from an isotropic log-concave distribution. In particular, \citet{awasthi2017power,shen2020attribute} obtained state-of-the-art noise tolerance but the sample complexity of \citet{awasthi2017power} reads as $\tilde{O}(d^3)$. \citet{shen2020attribute} considered learning of $s$-sparse halfspaces with malicious noise and showed through a refined analysis that a sample size of $\tilde{O}(s^2 \cdot \polylog{d})$ suffices; when specified to the non-sparse setting (which is the focus of this paper), it still leads to a suboptimal bound of $\tilde{O}(d^2)$. Prior to these two recent works, even a noise tolerance of the form $\Omega(\epsilon)$ was not established, nor an optimal sample complexity bound. On the other hand, it is worth mentioning that under the fairly restrictive uniform distribution over the unit ball, the analysis of \citet{awasthi2017power} does imply a near-optimal sample complexity bound of $\tilde{O}(d)$. In this regard, a natural question is: {\em can we design a computationally efficient algorithm that is able to tolerate $\Omega(\epsilon)$ malicious noise while enjoying the optimal sample complexity bound of $O(d)$ under isotropic log-concave distributions?}

In this paper, we answer the question in the affirmative. First, we formally describe our assumption on clean instances.

\begin{assumption}\label{as:x}
The distribution $D$ is isotropic log-concave over $\Rd$; namely, it has zero mean and unit covariance matrix, and the logarithm of its density function is concave.
\end{assumption}

Observe that the family of isotropic log-concave distributions covers prominent distributions such as Gaussian, exponential, and logistic distributions~\cite{lovasz2007geometry,vempala2010random}. In particular, general isotropic log-concave distributions are asymmetric in nature and the magnitude of the instances drawn from them is often dimension-dependent, making it nontrivial to extend results developed for the uniform distribution over the unit ball.

\subsection{Main results}

Recall that $D$ and $w^*$ are the underlying distribution and the correct halfspace as stated in Definition~\ref{def:malicious}, respectively. 
For any homogeneous halfspace $h_w: x \mapsto \sign{w\cdot x}$, let $\err_D(w) := \Pr_{x \sim D}(\sign{{w}\cdot x} \neq \sign{w^* \cdot x})$ be the error rate of $w$ with respect to $D$ and $w^*$. The following is our main result.

\begin{theorem}\label{thm:malicious-informal}
Consider the malicious noise model under Assumption~\ref{as:x}. There is an algorithm such that for any  target error rate $\epsilon \in (0, 1)$  and any failure probability $\delta \in (0, 1)$, if  $\eta \leq O(\epsilon)$, it outputs a halfspace $\tilde{w}$ satisfying $\err_D(\tilde{w}) \leq \epsilon$ with probability $1-\delta$. The running time is $\poly{d, \frac{1}{\epsilon}, \frac{1}{\delta}}$ and the sample complexity is $O\Big(\frac{d}{\epsilon}\cdot \polylog{d, \frac{1}{\epsilon}, \frac{1}{\delta}}\Big)$.
\end{theorem}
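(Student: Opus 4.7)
The plan is to analyze the iterative localization algorithm of Awasthi et al.~(2017) with a refined matrix-concentration argument that avoids the $\tilde{O}(d^2)$ sample cost of prior analyses. The algorithm proceeds in $T = O\!\left(\log \tfrac{1}{\epsilon}\right)$ phases. In phase $k$ it maintains an iterate $w_{k-1}$ with angular error $\theta_k := \theta(w_{k-1},w^*) \asymp 2^{-k}$, draws a fresh batch of $n_k$ samples, retains only those whose $w_{k-1}$-projection lies in a band $B_k = \{x : |w_{k-1} \cdot x| \le b_k\}$ with $b_k \asymp \theta_k$, performs a soft outlier-removal step to down-weight samples with abnormally large projection along any direction, and finally minimizes a regularized hinge loss on the reweighted band samples to obtain $w_k$. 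Because malicious corruption at rate $\eta$ is amplified to effective rate $\eta / \Pr_{x\sim D}(x \in B_k) \asymp \eta/b_k$ inside the band, the regime $\eta \le O(\epsilon)$ arises naturally from requiring the in-band noise rate to stay bounded away from $\tfrac{1}{2}$.

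The sample-complexity bottleneck lies in the outlier-removal step, which requires the (weighted) empirical second-moment matrix $\hat{\Sigma}_k = \frac{1}{n_k}\sum_{i} \mathbf{1}\{x_i \in B_k\}\, x_i x_i\trans$ to be spectrally close to its population counterpart $\Sigma_k$. The prior $\tilde{O}(d^2)$ bound comes from using only a crude coordinate-wise (or trace) inequality together with the uniform bound $\twonorm{x_i}^2 \lesssim d$, which is too lossy: the operator-norm concentration rate it delivers is $\tilde{O}(\sqrt{d^2/n_k})$.

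The main new ingredient I would introduce is a matrix Chernoff/Bernstein inequality applied directly to the matrix-valued summands $Z_i = \mathbf{1}\{x_i \in B_k\}\, x_i x_i\trans$. Under Assumption~\ref{as:x}, a sub-exponential tail gives $\twonorm{x_i} \le R := O\!\left(\sqrt{d \log(n_k/\delta)}\right)$ uniformly over the batch with high probability, so each $Z_i$ can be truncated to spectral norm $\tilde{O}(d)$. Moreover, the thin-band structure of isotropic log-concave measures yields $\spenorm{\EE[Z_i]} = \spenorm{\Sigma_k} = O(b_k)$ and hence $\spenorm{\EE[Z_i^2]} \le R^2\cdot \spenorm{\Sigma_k} = \tilde{O}(d\cdot b_k)$. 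Plugging these into matrix Bernstein gives the target spectral bound $\spenorm{\hat{\Sigma}_k - \Sigma_k} \le \tfrac{1}{2}\spenorm{\Sigma_k}$ from $n_k^{\mathrm{band}} = \tilde{O}(d)$ in-band points, i.e.\ $n_k = \tilde{O}(d/b_k)$ fresh samples. The main obstacle is handling the rare event that some $\twonorm{x_i}$ exceeds $R$: I would control this by a truncation-plus-union argument, bounding the truncation bias through the sub-exponential tail of $\twonorm{x}$ and absorbing the tiny residual perturbation into the slack of the soft outlier-removal guarantee.

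Summing over phases, $n_k = \tilde{O}(d\cdot 2^k)$ is geometric in $k$, so the final phase dominates and the total sample complexity is $\tilde{O}(d/\epsilon)$, matching the claimed bound. The remaining pieces---angular error halving per phase, correctness of the hinge-loss minimizer inside the band via log-concave anti-concentration, and translation of angular error to $\err_D$---carry over from Awasthi et al.~(2017) once the improved spectral concentration is substituted in place of their looser bound. Each phase reduces to a polynomial-size convex program plus a semidefinite feasibility step for outlier removal, so the total running time remains $\poly{d, 1/\epsilon, 1/\delta}$.
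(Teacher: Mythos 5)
You have the right high-level plan (iterative localization, matrix concentration in place of VC/trace arguments, $\tilde O(d/b_k)$ per phase, geometric summation over phases), but the spectral deliverable you target is the wrong one and would not certify the soft outlier-removal step. The feasibility of the removal step requires the \emph{localized} bound $\sup_{w\in W}\frac{1}{|\TC|}\sum_{x\in\TC}(w\cdot x)^2 \leq c(b^2+r^2) = O(b_k^2)$, where $W = \{w : \twonorm{w}\le 1,\ \twonorm{w-u}\le r\}$. Your target $\|\hat\Sigma_k-\Sigma_k\|\le \tfrac12\|\Sigma_k\| = O(b_k)$, with $\hat\Sigma_k = \frac{1}{n_k}\sum_i\mathbf 1\{x_i\in B_k\}x_ix_i^\top$, is an additive operator-norm bound, and for $w\in W$ it only yields $w^\top\hat\Sigma_k w \le w^\top\Sigma_k w + O(b_k)\twonorm{w}^2 = O(b_k^3) + O(b_k) = O(b_k)$. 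After dividing by $|\TC|\approx b_k n_k$, this gives only $O(1)$ rather than the needed $O(b_k^2)$ — off by a factor of $b_k^{-2}$. The extra $O(b_k)\cdot I$ term that matrix Bernstein leaves in unrestricted directions simply swamps the $O(b_k^3)$ quantity you actually need to control in the localized directions. Tightening the Bernstein error to $O(b_k^3)$ would require $n_k = \tilde O(d/b_k^5)$, far worse than $\tilde O(d/b_k)$.

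The missing ingredient is the reparametrization that lets the concept-space localization do the shrinking. Write $w = u + rv$ with $\twonorm{v}\le 1$, and use $(w\cdot x)^2 \le 2r^2(v\cdot x)^2 + 2(u\cdot x)^2$ together with the deterministic fact $(u\cdot x)^2 \le b^2$ for $x$ in the band. Then $\sup_{w\in W}\frac{1}{|\TC|}\sum_{\TC}(w\cdot x)^2 \le 2r^2\,\lambdamax(M) + 2b^2$, where $M = \frac{1}{|\TC|}\sum_{x\in\TC}xx^\top$ is the (unnormalized-by-$b_k$) conditional empirical second moment. The $r^2$ factor, not a spectral bound on $\hat\Sigma_k-\Sigma_k$, is what delivers the $O(b^2+r^2)$ scale. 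What remains is only $\lambdamax(M) = O(1)$, which has the correct magnitude for matrix Chernoff with $\mumax = \lambdamax(\EXP[xx^\top\mid x\in B_k]) = O(1)$ (since $\EXP_{D_{u,b}}[(w\cdot x)^2] \le C_2(b^2+r^2)$ for $\twonorm{w-u}\le r$, hence $\EXP[(v\cdot x)^2]\le 4C_2(b^2+r^2)/r^2 = O(1)$) and $\Lambda = \sup\lambdamax(xx^\top) = \tilde O(d)$ with high probability via the sub-exponential tail — exactly the $|\TC| = \tilde O(d)$ regime. Your Bernstein bookkeeping ($R^2 = \tilde O(d)$, $\|\EXP[Z_i^2]\| = \tilde O(db_k)$) is internally consistent, and the truncation argument for $\twonorm{x}$ is sound, but without the $w = u + rv$ decomposition the matrix concentration certifies the wrong quantity. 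The rest of your outline — in-band noise amplification to $\eta/b_k$, per-phase angular halving, hinge-loss surrogate, $\tilde O(d/\epsilon)$ total — matches the paper.
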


We highlight that this is the first result for efficient PAC learning of homogeneous halfspaces with both near-optimal malicious noise tolerance and sample complexity under isotropic log-concave distributions. On the algorithmic spectrum, we in fact show that the active learning algorithm proposed by \citet{awasthi2017power} inherently enjoys the announced properties and the noise tolerance bound in Theorem~\ref{thm:malicious-informal} directly inherits from their results. 
Regarding sample complexity, their original analysis made use of the pseudo-dimension from VC theory~\cite{anthony1999neural} to give an $\tilde{O}(d^3)$ sample complexity bound which is suboptimal. Even using a careful Rademacher complexity bound, we would only obtain an $\tilde{O}(d^2)$ bound. Our improvement comes from a reformulation of the objective function used by \citet{awasthi2017power} and a novel utilization of a matrix Chernoff-type inequality due to \citet{tropp2012user}, together with a careful exploration of the localization schemes of \citet{awasthi2017power}; see Section~\ref{sec:mal} for more details.

\subsection{Extension to the nasty noise model}

We also consider learning of homogeneous halfspaces with nasty noise of \citet{bshouty2002pac}, which is a strict generalization and is stronger than the malicious noise model.

\begin{definition}[Nasty noise]\label{def:nasty}
The learner specifies the total number of needed samples $N$. The adversary takes as input $N$, draws such many independent instances from $D$, and labels them correctly according to $w^*$. Then it may replace an arbitrary $\eta$ fraction of them with arbitrary samples in $\calX \times \calY$. The corrupted sample set is returned to the learner.
\end{definition}

Observe that when $N=1$, it reduces to the malicious noise model. The additional power of the nasty  adversary is that when $N > 1$, it may inspect {\em all} the clean samples, and then decides which of them will be replaced, while in the malicious noise model it can only inject dirty samples (when it is permitted). Note that such extra power of erasing clean instances may modify the marginal distribution of the clean instances returned to the learner, which is one of the technical barriers that we have to carefully address.

For the problem of learning homogeneous halfspaces with nasty noise, we show that the algorithm of \citet{awasthi2017power} still works well (hence, our contribution is a new  analysis). We have the following performance guarantee.

\begin{theorem}\label{thm:nasty-informal}
Consider the nasty noise model under Assumption~\ref{as:x}. There is an algorithm such that for any  target error rate $\epsilon \in (0, 1)$ and any failure probability $\delta \in (0, 1)$, if $\eta \leq O(\epsilon)$ and the learner is allowed to call the nasty oracle $O(\log\frac{1}{\epsilon})$ times, it outputs a halfspace $\tilde{w}$ satisfying $\err_D(\tilde{w}) \leq \epsilon$ with probability $1-\delta$. The running time is $\poly{d, \frac{1}{\epsilon}, \frac{1}{\delta}}$ and the sample complexity is $O\Big(\frac{d}{\epsilon}\cdot \polylog{d, \frac{1}{\epsilon}, \frac{1}{\delta}}\Big)$.
\end{theorem}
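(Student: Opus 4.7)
The plan is to apply the same iterative-localization algorithm of \citet{awasthi2017power} that drives Theorem~\ref{thm:malicious-informal}, and to show its analysis transfers to the nasty model with only a controlled loss. At a high level, the algorithm maintains a running hypothesis $w_k$ whose angle to $w^*$ is at most $\theta_k \approx 2^{-k}$; in round $k$ it queries the noise oracle for a fresh batch, restricts attention to instances lying in a band $\{x : |w_k \cdot x| \leq b_k\}$ of width $b_k = \Theta(\theta_k)$, and solves a localized hinge-loss minimization on the band to produce $w_{k+1}$ with $\theta_{k+1} \leq \theta_k/2$. Stopping after $K = O(\log(1/\epsilon))$ rounds yields $\err_D(\tilde{w}) \leq \epsilon$, which matches both the claimed oracle-call count and the total sample complexity via the same per-round budget used in the analysis underlying Theorem~\ref{thm:malicious-informal}.

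The new difficulty relative to the malicious case is that the nasty adversary may inspect all $N$ clean draws and erase an $\eta$-fraction before inserting $\eta N$ dirty points, so the surviving clean subset is no longer i.i.d.\ from $D$. To handle this I would use a uniform-over-subsets argument: \emph{before} conditioning on the adversary's choices, draw $N = \tilde{O}(d/\epsilon)$ i.i.d.\ samples from $D$, and show that with probability $1-\delta$, every subset of size at least $(1-\eta)N$ simultaneously satisfies the band-restricted matrix Chernoff spectral bound and the hinge-loss concentration required by the localization analysis. The adversary selects only one such subset, so its adaptive erasure is absorbed for free; the price is an extra $\log \binom{N}{\eta N} = \tilde{O}(N)$ term in the union-bound failure probability, which is swallowed by the polylog factors.

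The contribution of the $\eta N$ dirty samples the adversary actually injects is then handled by the same band-boundedness arguments as in the malicious case. Because both the localized hinge loss and its subgradient are bounded by $O(b_k)$ on the band, and because the band has probability mass $\Theta(b_k)$ under an isotropic log-concave $D$, the empirical perturbation induced by a relative $\eta$-fraction of dirty points is at most $O(\eta/b_k) = O(\eta/\theta_k)$ in both loss value and covariance spectrum. Setting $\eta = O(\epsilon)$ and using the schedule $b_k = \Theta(\theta_k)$ makes this perturbation smaller than the per-round improvement, exactly as in the analysis behind Theorem~\ref{thm:malicious-informal}.

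The main obstacle is the uniform-over-subsets concentration statement: the matrix Chernoff and hinge-loss bounds must hold simultaneously for \emph{all} $\binom{N}{\eta N}$ possible erasure patterns. Fortunately, the matrix Chernoff bound is sharp enough that inflating the failure probability to $\delta/\binom{N}{\eta N}$ only perturbs the per-round sample budget by polylog factors, so the overall sample complexity inherits the $\tilde{O}(d/\epsilon)$ bound from the malicious case, the running time remains $\poly{d, \frac{1}{\epsilon}, \frac{1}{\delta}}$, and the oracle is called $O(\log(1/\epsilon))$ times as claimed.
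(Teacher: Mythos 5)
Your algorithmic setup (same iterative localization, band schedule $b_k=\Theta(2^{-k})$, $K=O(\log\frac1\epsilon)$ phases, one oracle call per phase, perturbation $O(\eta/b_k)$ from inserted dirty points) matches the paper, but your key technical device for handling erasure does not work, and the paper uses a quite different trick.

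The uniform-over-subsets union bound cannot be made to close. For the matrix Chernoff step you need, per fixed subset, a failure probability of order $\delta/\binom{N}{\eta N}$. The Chernoff bound of Lemma~\ref{lem:matrix-chernoff} gives $d\cdot(e/4)^{\,\mu_{\max}/\Lambda}$ with $\mu_{\max}\approx K\,|\TC|$ but $\Lambda = \Theta\del{d\,\polylog{\cdot}}$ (Lemma~\ref{lem:spectrum}); i.e.\ the exponent is $\Theta(|\TC|/d)$, not $\Theta(|\TC|)$. Meanwhile the number of erasure patterns you must union-bound over satisfies $\log\binom{N}{\eta N}=\Theta(\eta N \log\frac1\eta)$, and since the adversary can target the band and the band has mass $\Theta(b)$, the erased fraction inside the band is $\eta/b \leq c_5$ -- a constant. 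So $\log\binom{N}{\eta N}$ grows linearly in $|\TC|$. The inequality you need is therefore of the form $|\TC|/d \gtrsim C\,|\TC| + \log(d/\delta)$, which is infeasible for $d\geq 2$; there is no choice of sample size, let alone an $\tilde O(d)$ one, that makes the union bound go through. The extra $\log\binom{N}{\eta N}$ term is not ``swallowed by polylog factors''; it is linear in $N$ and dominates.

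The paper sidesteps the union bound entirely. Letting $\TE$ be the clean in-band instances the adversary erased, the set $\TC\cup\TE$ \emph{is} i.i.d.\ from $D_{u,b}$, so matrix Chernoff is applied once to $\TC\cup\TE$ with no loss (Proposition~\ref{prop:var-nasty}). Because $(w\cdot x)^2\geq 0$ and Lemma~\ref{lem:nasty-sample-set-size} gives $|\TE|/|\TC|\leq \xi/(1-\xi)\leq 1$, the empirical second moment over the accessible set $\TC$ is within a factor $2$ of that over $\TC\cup\TE$, and similarly the reweighted hinge loss over $T$ is shown to approximate the loss over $\hatTC\cup\hatTE$ by extending the weight function $q$ with $q\equiv 0$ on $\TE$ (Theorem~\ref{thm:outlier-guarantee-nasty}, Proposition~\ref{prop:l(TC)=l(p)-nasty}). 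This preserves the $\tilde O(d/\epsilon)$ budget. Your proposal would need to be replaced by this (or a comparably union-bound-free) argument at the erasure step.
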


Since the malicious noise is a special case of the nasty noise, the information-theoretic limit of the noise tolerance established in \citet{kearns1988learning}, i.e. $\frac{\epsilon}{1+\epsilon}$, also applies to the nasty noise. In other words, the noise tolerance in the above theorem is near-optimal as well.

Another salient feature coming with the learning algorithm we consider, i.e. the one developed in \citet{awasthi2017power}, is label efficiency; that is, the label complexity of the algorithm is $O\Big(d \cdot \polylog{d, \frac{1}{\epsilon}, \frac{1}{\delta}}\Big)$ which has an exponential improvement on the dependence of $\frac{1}{\epsilon}$; see Appendix~\ref{sec:app:main-proof} for the proof. To the best of our knowledge, this is also the first label-efficient algorithm that tolerates the nasty noise.

We remark, however, that in many prior works, the learner typically makes a one-time call throughout the learning process to gather all the labeled instances~\cite{bshouty2002pac,diakonikolas2018learning}. Since we will study an algorithm that proceeds in multiple phases and draws samples adaptively, we consider a natural relaxation which allows the learner to make a one-time call per phase, with a total number of calls being $O(\log\frac{1}{\epsilon})$.\footnote{The crucial difference between the algorithm we consider and prior passive learning algorithms lies in the number of rounds that the learner communicates with the adversary.} Therefore, our results under the nasty noise model are {\em not} strictly comparable to prior results such as \citet{diakonikolas2018learning}. It remains open of how to design an efficient algorithm which gathers all samples in one batch while still enjoying  near-optimal nasty noise tolerance and sample complexity simultaneously.

\subsection{Related works}

The malicious and nasty noise models are strong contamination models for the problem of robustly learning Boolean functions. It turns out that most prior works on learning of halfspaces concentrated on obtaining optimal noise tolerance while not pursuing the $O(d)$ sample complexity, in that the former problem alone is already quite challenging~\cite{kearns1988learning,awasthi2017power}. \citet{diakonikolas2018learning} considered learning of more general concept classes, e.g. low-degree polynomial threshold functions and intersections of halfspaces, and showed that the underlying concept can be efficiently learned with $\tilde{O}(d^{\gamma})$ sample complexity for some unspecified constant $\gamma > 1$. When adapted to our setting (i.e. learning homogeneous halfspaces under isotropic log-concave marginal distributions), Theorem~1.5 of \citet{diakonikolas2018learning} only gives noise tolerance $\eta \leq O(\epsilon^{\gamma'})$ for some constant $\gamma' > 1$ which is suboptimal.

Recent works such as \citet{diakonikolas2016robust,lai2016agnostic} studied mean estimation under a nasty-type model where in addition to returning dirty instances, the adversary has also the power of eliminating a few clean instances. The key technique of robust mean estimation is to use the spectral norm of the empirical covariance matrix to detect dirty instances, and a sample complexity bound of $\tilde{O}(d)$ was obtained, typically under Gaussian distributions rather than the more general isotropic log-concave distributions. More recently, such technique was extensively investigated for a variety of problems such as clustering and linear regression; we refer the readers to the comprehensive survey of~\citet{diakonikolas2019recent}. From a high level, the idea of certifying clean instances with a small second order moment roots in a much earlier work by \citet{blum1996polynomial}, and was then serving as a crucial component in learning halfspaces with malicious noise \cite{klivans2009learning}. We note, however, that near-optimal sample complexity for learning halfspaces under  isotropic log-concave marginal distributions  is not implied by  these results. 




\vspace{0.1in}

\noindent{\bfseries Notations.} \ 
For a unit vector $u \in \Rd$ and a positive scalar $b$, we will frequently use $X_{u, b}$ to denote the band $\{x \in \Rd: \abs{u \cdot x} \leq b\}$. Let $T$ be a set of unlabeled data. We will use $\hat{T}$ to denote its labeled set, i.e. $\hat{T} = \{ (x, y_x): x \in T\}$ where $y_x$ is the label that the adversary is committed to. We write $\tilde{O}(f) := O( f \cdot \polylog f)$. The letters $c$ and $C$, and their subscript variants such as $c_1$, $C_1$, are reserved for specific absolute constants; see Appendix~\ref{sec:app:constants}.

\vspace{0.1in}
\noindent{\bfseries Roadmap.} \ 
In Section~\ref{sec:mal}, we briefly describe the algorithm of \citet{awasthi2017power}, followed by a refined theoretical analysis on the sample complexity. In Section~\ref{sec:nasty}, we extend the algorithm and analysis to the nasty noise model. We conclude this paper in Section~\ref{sec:conclusion}, and defer all the proof details to the appendix.

\section{Learning with Malicious Noise}\label{sec:mal}

We elaborate on our analytic tools used to obtain the near-optimal sample complexity bound in this section. Since we will give a new analysis for the algorithm developed by \citet{awasthi2017power}, we first briefly introduce their main mechanisms; readers are referred to their original work for more detailed technical descriptions.

To improve readability, throughout this section, we will always implicitly assume that Assumption~\ref{as:x} is satisfied.

\subsection{The approach of \citet{awasthi2017power}}\label{subsec:awasthi}

The malicious-noise-tolerant algorithm, i.e. Algorithm~2 of \citet{awasthi2017power}, is built upon the celebrated margin-based active learning framework of \citet{balcan2007margin}. For convenience, we record it in Algorithm~\ref{alg:main} with a minor simplification (to be clarified). At a high level, it proceeds in $K = O(\log\frac{1}{\epsilon})$ phases, where the key idea is to find in each phase an empirical minimizer of a certain hinge loss that is a good proxy of the loss on clean samples drawn from a localized instance space. The margin-based framework will then assert that this suffices for PAC learnability. To this end, in each phase it has three major steps: rejection sampling, soft outlier removal, and hinge loss minimization.

Let $X_{u, b} := \{x \in \Rd: \abs{u \cdot x} \leq b\}$ for some given unit vector $u$ and certain scalar $b \in [\epsilon, O(1)]$ where $\epsilon \in (0, 1)$ is the given target error rate; in the notation of Algorithm~\ref{alg:main}, $u$ should be thought of as $w_{k-1}$ and $ b= b_k$. Let $D_{u, b}$ be the distribution $D$ conditioned on the event that $x \in X_{u, b}$. During rejection sampling, the learner calls the adversary $\oraclex$ to collect a set $T$ of unlabeled data lying in the band $X_{u, b}$.\footnote{The algorithm of \citet{awasthi2017power} is active in nature. Thus, the adversary initially hides the label and only returns the instance; the learner must make a separate call to reveal the label.} Since the set $T$ is corrupted by the adversary, the goal of soft outlier removal is to find proper weights for all instances in $T$ such that the reweighted hinge loss over $T$ is almost equal to the one evaluated on clean samples. Based on the detection results, during hinge loss minimization, the learner makes an additional call to the oracle $\oracley$ to reveal the labels and finds an empirical minimizer of the reweighted hinge loss: 
\begin{equation*}
\ell_{\tau}(w; p \circ \hat{T}) := \frac{1}{\abs{T}} \sum_{ (x, y) \in \hat{T}} p(x) \cdot \max\Big\{ 0, 1 - \frac{1}{\tau} y w \cdot x \Big\}.
\end{equation*}
We remark that the sample complexity at phase $k$ refers to the number of calls to $\oraclex$, and the label complexity refers to that of $\oracley$.

It is known from standard margin-based active learning results that if the soft outlier removal step finds good weights in all the phases, then the final output of Algorithm~\ref{alg:main} will have small error rate~\cite{balcan2007margin,awasthi2017power}. Therefore, most of our discussions will be dedicated to this crucial step. Note that the sample complexity refers to the total number of calls to $\oraclex$ (which happens during rejection sampling) and the label complexity refers to that of $\oracley$ (which happens during loss minimization).

\begin{algorithm}[t]
\caption{Efficient and Sample-Optimal Algorithm Tolerating Malicious Noise}
\label{alg:main}
\begin{algorithmic}[1]
\REQUIRE Error rate $\epsilon$, failure probability $\delta$, instance generation oracle $\oraclex$, label revealing oracle $\oracley$.
\ENSURE Halfspace $\tilde{w}$ with  $\err_D(\tilde{w}) \leq \epsilon$ with probability $1-\delta$.
\STATE Initialize $w_0$ as the zero vector in $\Rd$.
\STATE $K \gets O(\log\frac{1}{\epsilon})$.
\FOR{phases $k = 1, 2, \dots, K$}
\STATE Clear the working set ${T}$.
\STATE $b_k \gets \Theta(2^{-k})$, $r_k \gets \Theta(2^{-k})$, $\tau_k \gets \Theta(2^{-k})$.
\STATE  Call $\oraclex$ for $N_k$ times to form instance set $A$. If $k = 1$, ${T} \gets A$; otherwise,  $T \gets \{x \in A: \abs{w_{k-1} \cdot x} \leq b_k\}$.

\STATE Apply Algorithm~\ref{alg:reweight} to $T$ with $u \leftarrow w_{k-1}$, $b \leftarrow b_k$, $r \leftarrow r_k$, $\xi \leftarrow \frac12- \Theta(1)$, $c \leftarrow 2 C_2$, and let $q = \cbr{q(x)}_{x \in T}$ be the returned function. Normalize $q$ to form a probability distribution $p$ over $T$.

\STATE $W_k \gets \{w: \twonorm{w} \leq 1, \twonorm{w - w_{k-1}} \leq r_k\}$, $\hat{T} \gets $ call $\oracley$ to reveal the labels of $T$. Find $v_k \in W_k$ with
\begin{equation*}
\ell_{\tau_k}(v_k; p \circ \hat{T}) \leq \min_{w \in W_k} \ell_{\tau_k}(w; p \circ \hat{T}) + O(1).
\end{equation*}
\STATE $w_k \leftarrow \frac{v_k}{\twonorm{v_k}}$.

\ENDFOR

\STATE {\bfseries return} $\tilde{w} \gets w_K$.
\end{algorithmic}
\end{algorithm}

\begin{algorithm}[t]
\caption{Localized Soft Outlier Removal}
\label{alg:reweight}
\begin{algorithmic}[1]
\REQUIRE{Reference unit vector $u$, band width $b>0$, radius $r = \Theta(b)$, empirical noise rate $\xi \in [0, 1/2)$, absolute constant $c > 0$, a set $T$ of instances  drawn from $D_{u, b}$.}
\ENSURE{A function $q: T \rightarrow [0, 1]$.}
\STATE \label{step:W} Let  $W = \big\{ w \in \Rd: \twonorm{w} \leq 1,\ \twonorm{w-u} \leq r \big\}$.

\STATE Find a function $q: T \rightarrow [0, 1]$ satisfying the following:
\begin{enumerate}
\item \label{item:alg2:0-1}  for all $x \in T, 0 \leq q(x) \leq 1$;

\item \label{item:alg2:err} $\sum_{x \in T} q(x) \geq (1 - \xi)\abs{T}$;

\item \label{item:alg2:var} $\sup_{w \in W} \frac{1}{\abs{T}} \sum_{x \in T} q(x)  (w \cdot x)^2 \leq c (b^2 + r^2)$.

\end{enumerate}

\STATE {\bfseries return} $q$.
\end{algorithmic}
\end{algorithm}

Decompose $T = \TC \cup \TD$ where $\TC$ denotes the set of clean instances in $T$ and $\TD$ for the dirty instances. The key algorithmic insight of \citet{awasthi2017power} is that in order to guarantee the success of soft outlier removal, i.e. Algorithm~\ref{alg:reweight} finds a feasible function $q: T \rightarrow [0, 1]$ in polynomial time, it is equivalent for the following to hold for some absolute constant $c > 0$:
\begin{equation}\label{eq:var}
\sup_{w \in W} \frac{1}{\abs{\TC}} \sum_{x \in \TC} (w \cdot x)^2 \leq c (b^2 + r^2),
\end{equation}
where
\begin{equation}\label{eq:W}
W := \big\{ w \in \Rd: \twonorm{w} \leq 1,\ \twonorm{w-u} \leq r \big\}.
\end{equation}

In order to prove \eqref{eq:var}, \citet{awasthi2017power} showed the following useful result.

\begin{lemma}\label{lem:E[wx^2]}
There is an absolute constant ${C}_2 \geq 1$ such that
\begin{equation*}
\sup_{w: \twonorm{w-u} \leq r} \EXP_{x \sim D_{u, b}}\sbr[1]{(w\cdot x)^2} \leq {C}_2 (b^2 + r^2).
\end{equation*}
\end{lemma}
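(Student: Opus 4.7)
The plan is to decompose $w$ around the reference direction $u$ and bound the pieces separately on the band $X_{u,b}$, reducing the lemma to a standard fact about how conditioning an isotropic log-concave distribution on a thin slab affects orthogonal second moments. Set $v \defeq w - u$, so $\twonorm{v} \leq r$, and split $v = \alpha u + v^{\perp}$ with $v^{\perp} \perp u$, $\abs{\alpha} \leq r$, and $\twonorm{v^{\perp}} \leq r$. Then $w \cdot x = (1+\alpha)(u \cdot x) + v^{\perp} \cdot x$, so by the elementary inequality $(p+q)^2 \leq 2p^2 + 2q^2$,
$$
(w \cdot x)^2 \leq 2(1+\alpha)^2 (u \cdot x)^2 + 2 (v^{\perp} \cdot x)^2.
$$
On $X_{u,b}$ we have $(u \cdot x)^2 \leq b^2$ and $(1+\alpha)^2 \leq (1+r)^2 = O(1)$ for the values $r = \Theta(2^{-k}) = O(1)$ used in Algorithm~\ref{alg:main}. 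Taking expectations under $D_{u,b}$ therefore reduces the lemma to showing $\EXP_{x \sim D_{u,b}}[(v^{\perp} \cdot x)^2] \leq O(\twonorm{v^{\perp}}^2)$, i.e., that conditioning on the band does not blow up the second moment in a direction orthogonal to $u$.

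For this step I would project $D$ onto the two-dimensional subspace spanned by $u$ and $\hat{v} \defeq v^{\perp}/\twonorm{v^{\perp}}$. The induced joint distribution on $(S,T) \defeq (u \cdot x,\ \hat{v} \cdot x)$ is isotropic log-concave on $\RR^2$, and the target becomes $\EXP[T^2 \mid \abs{S} \leq b] \leq C$ for an absolute constant $C$. I would then invoke standard regularity properties of 2D isotropic log-concave distributions: (i) the conditional law of $T$ given $\abs{S} \leq b$ is again log-concave, since conditioning on a convex slab and then marginalizing both preserve log-concavity; (ii) the 1D isotropic log-concave marginal of $S$ has density bounded below at the origin by an absolute constant, so $\Pr(\abs{S} \leq b) = \Omega(b)$; and (iii) Prekopa--Leindler-type control on the joint density yields that the conditional density of $T$ is pointwise comparable to its unconditional marginal up to absolute constants, whence $\EXP[T^2 \mid \abs{S} \leq b] = O(1)$ since the marginal of $T$ has unit second moment.

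The step I expect to be the main obstacle is (iii). A naive Cauchy--Schwarz using only the sub-exponential tail of $T$ together with $\Pr(\abs{S} \leq b) = \Theta(b)$ yields only $\EXP[T^2 \mid \abs{S} \leq b] = O(b^{-1/2})$, which would introduce a spurious $b^{-1/2}$ factor and ruin the bound for small $b$. It is the genuine two-dimensional log-concavity of $(S,T)$ that prevents conditioning on a thin slab in the $S$-direction from inflating the variance in $T$; I would either invoke an off-the-shelf isotropic log-concave estimate from the margin-based active learning literature or derive it directly from Prekopa--Leindler. With the $O(1)$ conditional second moment in hand, the two pieces combine to $\EXP_{x \sim D_{u,b}}[(w \cdot x)^2] \leq O(b^2) + 2C \twonorm{v^{\perp}}^2 \leq C_2(b^2 + r^2)$, as claimed.
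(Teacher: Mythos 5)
Note first that the paper does not actually prove this lemma; it is cited directly from \citet{awasthi2017power}, so there is no in-paper proof to compare against. Your decomposition---writing $w\cdot x = (1+\alpha)(u\cdot x) + v^{\perp}\cdot x$, using $(u\cdot x)^2 \le b^2$ on the band, and reducing the remaining term to a 2D projection onto $\mathrm{span}(u, v^{\perp})$---is the right reduction and matches the structure of the original argument, and your steps (i) and (ii) are correct. However, step (iii) rests on a claim that is false: the conditional density of $T$ given $\abs{S}\le b$ is \emph{not} in general pointwise comparable to the unconditional marginal of $T$, even for isotropic log-concave laws. For a counterexample, take the uniform distribution on a triangle with base on the $s$-axis and apex at $(0,h)$ (and then recenter/rescale to isotropy, which preserves ratios of densities at corresponding points). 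The conditional density of $T$ at $S=0$ is uniform on $[0,h]$, while the marginal density of $T$ is proportional to $h-t$ and vanishes at the apex, so the ratio blows up as $t\to h$.

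The statement you actually need---$\EXP\big[T^2 \mid \abs{S}\le b\big] = O(1)$ for a 2D isotropic log-concave pair $(S,T)$---\emph{is} true (in the triangle example the ratio of second moments is a constant even though the densities are not pointwise comparable), and it is a genuine lemma about log-concave distributions used throughout the margin-based active learning literature; it is what \citet{awasthi2017power} prove to obtain Lemma~\ref{lem:E[wx^2]}, and cousins appear in \citet{balcan2013active}. But it cannot be obtained via pointwise density comparability. A correct replacement for (iii) bounds the conditional second moment directly, e.g.\ by exploiting the 1D log-concavity of each slice $t \mapsto f(s,t)$, the uniform upper bound on the sup of a 2D isotropic log-concave density, and the lower bound on the 1D marginal density of $S$ near the origin (Part~\ref{item:ilc:anti-concen} of Lemma~\ref{lem:logconcave}), to control the effective width of the slice. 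Your instinct to flag this step as the main obstacle and to fall back on an off-the-shelf estimate is sound; it is the proposed Pr\'ekopa--Leindler comparability route that would not survive scrutiny.
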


On the other hand, \citet{anthony1999neural} proved that with high probability,
\begin{equation}\label{eq:VC}
\sup_{w \in W} \abs[3]{ \frac{1}{\abs{\TC}} \sum_{x \in \TC} (w \cdot x)^2 - \EXP_{x \sim D_{u, b}}\sbr[1]{(w\cdot x)^2} } \leq \alpha
\end{equation}
provided $\abs{\TC} = O\del[1]{\frac{(\rho^+ - \rho^-)^2}{\alpha^2} d}$ where $\rho^- := \inf_{w \in W} (w \cdot x)^2$ and $\rho^+ := \sup_{w \in W} (w \cdot x)^2$. 

Hence, \citet{awasthi2017power} combined Lemma~\ref{lem:E[wx^2]} and Eq.~\eqref{eq:VC} with $\alpha = {C}_2 (b^2 + r^2)$ and showed that \eqref{eq:var} holds with high probability if $\abs{\TC} = O\del[1]{\frac{(\rho^+ - \rho^-)^2}{\alpha^2} d}$. If the unlabeled data distribution $D$ were uniform over the unit sphere, then this bound would read as $O(d/\alpha^2)$ which has optimal dependence on $d$. However, since we are considering the significantly more general family of log-concave distributions, this bound becomes suboptimal. 
\begin{lemma}\label{lem:rho}
Consider $x \sim D_{u, b}$. Then with probability $1-\delta$, $(\rho^+ - \rho^-)^2 \leq O(d^2 \cdot b^4 \log^4\frac{1}{b\delta})$.
\end{lemma}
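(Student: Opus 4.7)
The plan is to first reduce the claim to a high-probability upper bound on $\rho^+$ alone: since $\rho^- \geq 0$ and $\rho^+ \geq \rho^-$, we have $(\rho^+ - \rho^-)^2 \leq (\rho^+)^2$. For any $w \in W$, decompose $w = u + v$ with $v := w - u$, so $\norm{v} \leq r$. Using $\abs{u \cdot x} \leq b$ (since $x \in X_{u, b}$) and the Cauchy--Schwarz inequality,
\begin{equation*}
  \abs{w \cdot x} \;\leq\; \abs{u \cdot x} + \abs{v \cdot x} \;\leq\; b + r \norm{x},
\end{equation*}
hence $(w \cdot x)^2 \leq 2 b^2 + 2 r^2 \norm{x}^2$. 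Taking the supremum over $w \in W$ and using $r = \Theta(b)$ gives $\rho^+ \leq O\bigl(b^2 (1 + \norm{x}^2)\bigr)$, a bound that now depends only on the single scalar random quantity $\norm{x}$.

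It thus remains to control $\norm{x}$ in high probability under the conditional distribution $D_{u, b}$. For this I will invoke two classical facts about isotropic log-concave distributions~\citep{lovasz2007geometry}. First, the density of the marginal $u \cdot x$ under $D$ is bounded by an absolute constant, so the band carries nontrivial mass: $\Pr_D(X_{u, b}) \geq \Omega(b)$. Second, the norm of an isotropic log-concave random vector in $\Rd$ is sub-exponential: $\Pr_D(\norm{x} \geq t \sqrt{d}) \leq e^{-\Omega(t)}$ for $t$ larger than an absolute constant. Combining these via the conditional-probability inequality $\Pr_{D_{u, b}}(E) \leq \Pr_D(E) / \Pr_D(X_{u, b})$ yields
\begin{equation*}
  \Pr_{D_{u, b}}\!\left( \norm{x} \geq t \sqrt{d} \right) \;\leq\; \frac{1}{\Omega(b)} \cdot e^{-\Omega(t)}.
\end{equation*}

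Setting this probability equal to $\delta$ and solving for $t$ gives $t = O\bigl(\log \frac{1}{b \delta}\bigr)$, so with probability at least $1 - \delta$, $\norm{x}^2 \leq O\bigl(d \log^2 \frac{1}{b \delta}\bigr)$. Plugging this into the bound on $\rho^+$ gives $\rho^+ \leq O\bigl(b^2 \cdot d \log^2 \frac{1}{b \delta}\bigr)$, and squaring delivers the claimed $(\rho^+ - \rho^-)^2 \leq O\bigl(d^2 b^4 \log^4 \frac{1}{b \delta}\bigr)$.

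I do not foresee a serious obstacle here; the only non-elementary ingredients are the marginal density bound and the sub-exponential norm tail for isotropic log-concave distributions, both of which are standard and already implicitly used elsewhere in \citet{awasthi2017power}. The important conceptual point is that after the Cauchy--Schwarz decomposition, no uniform convergence or VC-type argument over the infinite set $W$ is needed: the entire bound on $\rho^+$ depends on the single scalar random variable $\norm{x}$, which is precisely why the resulting dependence on $d$ is only $d^2$ (arising from $\norm{x}^2 \lesssim d$, then squared) rather than anything larger.
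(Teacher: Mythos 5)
Your proof is correct and follows essentially the same route as the paper: reduce to bounding $\rho^+$ (your inequality $(\rho^+ - \rho^-)^2 \leq (\rho^+)^2$ is even slightly tighter than the paper's $4(\rho^+)^2$), then bound $\abs{w \cdot x} \leq b + r\norm{x}$ via triangle plus Cauchy--Schwarz, and finally control $\norm{x}$ under $D_{u,b}$ by combining the sub-exponential norm tail of isotropic log-concave distributions with the band-mass lower bound via the conditional-probability inequality (this is exactly the paper's Lemma~\ref{lem:x-2norm}). No gaps.
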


Therefore, the VC theory only leads to a suboptimal sample size $\abs{\TC} = {O}(d^3)$, which implies that the number of calls to the instance generation oracle must be $O(d^3)$.

We note that while Rademacher complexity may sometimes offer improved sample complexity as illustrated by \citet{zhang2018efficient,shen2020attribute}, for our problem it only gives suboptimal guarantee of $\abs{\TC} = \tilde{O}(d^2)$; see Appendix~\ref{sec:app:proof-mal}. Since the trouble roots in the suboptimal concentration bound of \eqref{eq:VC} which only involves quadratic functions, one may also wants to apply the well-known Hanson-Wright inequality~\cite{rudelson2013hanson} for better bound. The main barrier to apply it is that this inequality requires a sub-gaussian tail for the random vectors while that of log-concave distributions behaves as sub-exponential (see Part~\ref{item:ilc:tail} of Lemma~\ref{lem:logconcave}). 

\subsection{Our results and techniques}

In contrast to the quadratic dependence on the dimension $d$, we show that \eqref{eq:var} holds as soon as $\abs{\TC} = \tilde{O}(d)$.

\begin{theorem}\label{thm:var}
With probability $1-\delta$, Eq.~\eqref{eq:var} holds if $\ \abs{\TC} \geq d \cdot \polylog{d, \frac{1}{b}, \frac{1}{\delta}}$.
\end{theorem}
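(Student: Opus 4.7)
The plan is to reformulate \eqref{eq:var} as a spectral-norm bound on an empirical second-moment matrix, and then apply a matrix Chernoff/Bernstein inequality of \citet{tropp2012user} after truncating the unbounded log-concave tails. For any $w \in W$, set $v := w - u$ so that $\twonorm{v} \leq r$, and split using $(w \cdot x)^2 \leq 2 (u \cdot x)^2 + 2 (v \cdot x)^2$. Because every $x \in \TC$ lies in the band $X_{u, b}$, the first term contributes at most $2 b^2$ deterministically, while the second collapses to $2 r^2 \spenorm{\hat\Sigma}$ with $\hat\Sigma := \frac{1}{\abs{\TC}} \sum_{x \in \TC} x x^{\top}$. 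The schedule $b_k = r_k = \Theta(2^{-k})$ of Algorithm~\ref{alg:main} makes $b^2 / r^2$ a universal constant, so \eqref{eq:var} reduces to showing $\spenorm{\hat\Sigma} = O(1)$ with probability $1 - \delta$.

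To control $\spenorm{\hat\Sigma}$ I would apply matrix Bernstein to the independent summands $x_i x_i^{\top}$. The main obstacle is that matrix Bernstein demands a uniform bound on $\spenorm{x_i x_i^{\top}} = \twonorm{x_i}^2$, whereas isotropic log-concave distributions only have sub-exponential magnitude (cf.\ Lemma~\ref{lem:logconcave}). I would handle this by a truncation: using $\Pr_{x \sim D}[\twonorm{x} > R] \leq \exp(-c R / \sqrt{d})$ together with the fact that conditioning on $X_{u, b}$ inflates the density by at most a factor of $O(1/b)$, a union bound supplies a radius $R = O\bigl(\sqrt{d} \cdot \polylog{d, \frac{1}{b}, \frac{1}{\delta}}\bigr)$ under which $\max_{x \in \TC} \twonorm{x} \leq R$ holds with probability $1 - \delta/2$. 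On this event $\hat\Sigma$ equals its truncated counterpart $\hat\Sigma_R := \frac{1}{\abs{\TC}} \sum_{x \in \TC} x x^{\top} \ind{\twonorm{x} \leq R}$, whose summands have spectral norm at most $R^2$, and matrix Bernstein yields
\begin{equation*}
\Pr\bigl[\spenorm{\hat\Sigma_R - \Sigma_R} \geq t\bigr] \leq 2 d \cdot \exp\Bigl( - \frac{c' \abs{\TC} \, t^2}{R^2 (\spenorm{\Sigma_R} + t)} \Bigr),
\end{equation*}
where $\Sigma_R := \EXP_{x \sim D_{u, b}}[x x^{\top} \ind{\twonorm{x} \leq R}] \preceq \Sigma := \EXP_{x \sim D_{u, b}}[x x^{\top}]$. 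Taking $t$ a small absolute constant, the requirement $\abs{\TC} \geq C R^2 \log(d/\delta) = d \cdot \polylog{d, \frac{1}{b}, \frac{1}{\delta}}$ drives the right-hand side below $\delta/2$.

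The remaining ingredient is $\spenorm{\Sigma_R} \leq \spenorm{\Sigma} = O(1)$, which follows from $D_{u, b}$ being log-concave with conditional second moment $O(1)$ along every unit direction whenever the band half-width $b$ is bounded above by a constant (true throughout Algorithm~\ref{alg:main}); this is in the same spirit as Lemma~\ref{lem:E[wx^2]}. Combining the three pieces, $\spenorm{\hat\Sigma} \leq \spenorm{\hat\Sigma_R - \Sigma_R} + \spenorm{\Sigma_R} = O(1)$ with probability $1 - \delta$, which together with the deterministic $2 b^2$ contribution proves \eqref{eq:var}. I expect the truncation/density-inflation step to be the only genuine technical hurdle: the whole sample-size calculation hinges on the density blow-up from conditioning on the thin slab $X_{u, b}$ costing only a $\polylog(1/b)$ factor in the truncation radius, rather than a polynomial one, which is exactly where the log-concave (as opposed to uniform-on-sphere) geometry forces a careful analysis.
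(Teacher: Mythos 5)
Your proposal follows essentially the same route as the paper: the same $(w-u)$ split plus band localization to reduce \eqref{eq:var} to $\lambda_{\max}\bigl(\tfrac{1}{|\TC|}\sum_{x\in\TC} xx^\top\bigr)=O(1)$, the same union-bound truncation of $\|x\|$ at radius $O(\sqrt{d}\,\polylog{d,1/b,1/\delta})$ to tame the sub-exponential log-concave tails, and the same use of Lemma~\ref{lem:E[wx^2]} together with $b=\Theta(r)$ to bound the expected second-moment matrix by a constant. The only divergence is cosmetic: you invoke matrix Bernstein with an explicit truncated covariance $\Sigma_R\preceq\Sigma$, whereas the paper conditions on the bounded-norm event and applies the PSD matrix Chernoff inequality (Lemma~\ref{lem:matrix-chernoff}) directly; both are equivalent tools from \citet{tropp2012user} and yield the same $\tilde O(d)$ sample requirement.
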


{Our technical novelty to show Theorem~\ref{thm:var} is to move away from  uniform concentration inequalities used by prior works.} Rather, we reformulate the objective function of \eqref{eq:var} which naturally leads to bounding the spectrum of a sum of random matrices. We then crucially explore the power of the localization scheme for the instance and concept spaces as used in Algorithm~\ref{alg:main}, and show that such spectrum norm acts as a constant over the phases, leading to the announced sample complexity. 

First of all, we use the basic fact that  for any $a_1, a_2 \in \R$, $(a_1 + a_2)^2 \leq 2(a_1^2 + a_2^2)$, and obtain  that
\begin{equation*}
\sup_{w \in W}\sum_{x \in \TC} (w \cdot x)^2 \leq \sup_{w \in W}\sum_{x \in \TC} \del{(w - u) \cdot x}^2 + \sum_{x \in \TC} (u \cdot x)^2.
\end{equation*}
Recall that in view of rejection sampling (namely localization in the instance space), for all $x \in \TC$, it was drawn from $D$ conditioned on the event $\abs{u \cdot x} \leq b$, implying  $(u\cdot x)^2 \leq b^2$ with certainty. Hence, it remains to upper bound $\sup_{w \in W} \del{(w - u) \cdot x}^2$. By the definition of $W$ in \eqref{eq:W}, we know that $w - u \in r \cdot V$ where $V := \{ v: \twonorm{v} \leq 1 \}$. It thus follows that
\begin{equation*}
\sup_{w \in W} \del{(w - u) \cdot x}^2 \leq r^2 \sup_{v \in V} (v \cdot x)^2 = r^2 \sup_{v \in V} v\trans (xx\trans ) v.
\end{equation*}
Putting all pieces together, we have that the left-hand side of \eqref{eq:var} can be upper bounded as follows:
\begin{equation}\label{eq:var-surrogate}
\sup_{w \in W} \frac{1}{\abs{\TC}} \sum_{x \in \TC} (w \cdot x)^2 \leq r^2 \sup_{v \in V} v\trans M v + b^2,
\end{equation}
where $M = \del[2]{ \frac{1}{\abs{\TC}} \sum_{x \in \TC} xx\trans }$. Observe that $v\trans M v$ corresponds to an eigenvalue of the matrix $M$. This motivates the consideration of the spectrum norm of the random matrix $M$, and is exactly where we need the matrix Chernoff bound of \citet{tropp2012user}.

\begin{lemma}[Matrix Chernoff inequality]\label{lem:matrix-chernoff}
Consider a finite sequence $\{ M_i \}_{i=1}^n$ of independent, random, self-adjoint matrices with dimension $d$. Assume that each random matrix satisfies $M_i \succeq 0$ and $\lambdamax(M_i) \leq \Lambda$ almost surely where $\lambdamax(\cdot)$ denotes the maximum eigenvalue. Define $\mumax := \lambdamax(\sum_{i=1}^{n} \EXP[M_i])$. Then for all $\alpha \geq 0$, with probability at least $1 - d \cdot \sbr{ \frac{e^{\alpha}}{(1+\alpha)^{1+\alpha}} }^{ \frac{\mumax}{\Lambda}}$,
\begin{equation*}
\lambdamax\del[2]{\sum_{i=1}^{n} M_i}  \leq (1+\alpha) \mumax.
\end{equation*}
\end{lemma}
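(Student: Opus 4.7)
The plan is to follow the matrix Laplace transform method that underlies all Chernoff-type tail bounds for sums of random self-adjoint matrices. Set $S \defeq \sum_{i=1}^n M_i$. Since $M_i \succeq 0$, the eigenvalue $\lambdamax(S)$ is nonnegative, and for any $\theta > 0$ the function $\lambda \mapsto e^{\theta \lambda}$ is monotone increasing. Thus the standard inequality
\begin{equation*}
\Pr\bigl(\lambdamax(S) \geq t\bigr) = \Pr\bigl(e^{\theta \lambdamax(S)} \geq e^{\theta t}\bigr) \leq e^{-\theta t}\, \EXP\bigl[\tr{\exp(\theta S)}\bigr]
\end{equation*}
holds (using $e^{\theta \lambdamax(S)} = \lambdamax(e^{\theta S}) \leq \tr{e^{\theta S}}$ for PSD arguments, together with Markov's inequality). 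The task then reduces to controlling the matrix moment generating function $\EXP[\tr{\exp(\theta S)}]$.

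The main obstacle, and the step that makes this a properly matrix-valued statement rather than a routine scalar Chernoff argument, is handling non-commutativity: one cannot simply factor $\EXP[\exp(\theta S)]$ into a product of $\EXP[\exp(\theta M_i)]$ because the $M_i$ need not commute. The remedy is to invoke Lieb's concavity theorem, which implies that for a fixed self-adjoint $H$, the map $A \mapsto \tr{\exp(H + \log A)}$ is concave on the PSD cone. Applied inductively over the independent summands via Jensen's inequality, this yields the master inequality
\begin{equation*}
\EXP\bigl[\tr{\exp(\theta S)}\bigr] \leq \tr{\exp\Bigl( \sum_{i=1}^n \log \EXP\bigl[\exp(\theta M_i)\bigr]\Bigr)}.
\end{equation*}
I would state this subadditivity of the matrix cumulant generating function as a black box; a fully self-contained derivation would require a separate proof of Lieb's theorem.

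Next I would linearize each matrix MGF using the almost sure bound $0 \preceq M_i \preceq \Lambda I$. By the operator version of the scalar inequality $e^{\theta x} \leq 1 + \frac{e^{\theta \Lambda}-1}{\Lambda} x$ for $x \in [0,\Lambda]$, applied via spectral calculus, one gets $\EXP[\exp(\theta M_i)] \preceq I + \frac{e^{\theta \Lambda} - 1}{\Lambda}\, \EXP[M_i]$, and then $\log \EXP[\exp(\theta M_i)] \preceq \frac{e^{\theta \Lambda} - 1}{\Lambda}\, \EXP[M_i]$ by operator monotonicity of the logarithm. Summing over $i$, using monotonicity of $A \mapsto \tr{\exp(A)}$ on the Loewner order, and bounding $\tr{\exp(\cdot)} \leq d\, \exp(\lambdamax(\cdot))$, I obtain
\begin{equation*}
\EXP\bigl[\tr{\exp(\theta S)}\bigr] \leq d \cdot \exp\!\left( \frac{e^{\theta \Lambda} - 1}{\Lambda}\, \mumax \right).
\end{equation*}

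Plugging this back into the Laplace transform bound with $t = (1+\alpha)\mumax$ and optimizing over $\theta > 0$ finishes the proof. The optimal choice is $\theta = \frac{1}{\Lambda}\log(1+\alpha)$, which makes the exponent collapse to $\frac{\mumax}{\Lambda}\bigl[\alpha - (1+\alpha)\log(1+\alpha)\bigr]$, exactly producing the advertised bound $d \bigl[\frac{e^\alpha}{(1+\alpha)^{1+\alpha}}\bigr]^{\mumax/\Lambda}$. The only nontrivial calculation is this final optimization, but it is identical to the scalar Chernoff derivation once the matrix ingredients above are in place.
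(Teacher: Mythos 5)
Your proof is correct. The paper does not prove this lemma at all --- it is imported verbatim as a known result of Tropp (2012) --- and your argument is a faithful reconstruction of the standard proof from that reference: the matrix Laplace transform bound, subadditivity of the matrix cumulant generating function via Lieb's concavity theorem, the chord linearization $\EXP[\exp(\theta M_i)] \preceq I + \frac{e^{\theta\Lambda}-1}{\Lambda}\EXP[M_i]$ together with operator monotonicity of $\log$, and the optimization $\theta = \frac{1}{\Lambda}\log(1+\alpha)$, which indeed yields the exponent $\frac{\mumax}{\Lambda}\bigl[\alpha-(1+\alpha)\log(1+\alpha)\bigr]$ and hence the stated tail $d\,\bigl[\frac{e^{\alpha}}{(1+\alpha)^{1+\alpha}}\bigr]^{\mumax/\Lambda}$. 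Treating Lieb's theorem as a black box is the standard and acceptable move here.
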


To apply the above lemma, we will set $M_i = x_i x_i\trans$ for each $x_i \in \TC$. We also establish the following result to estimate the two important quantities $\Lambda$ and $\mumax$, where the proof crucially explores the localization scheme in the concept and instance spaces.

\begin{lemma}\label{lem:spectrum}
Suppose $x$ is randomly drawn from $D_{u, b}$. Then 
\begin{equation*}
\lambdamax\del[2]{ \EXP\sbr[1]{ x x\trans } } \leq \frac{4 C_2 (b^2 + r^2)}{r^2}.
\end{equation*}
In addition, with probability $1-\delta$,
\begin{equation*}
\lambdamax\del[1]{ xx\trans } \leq K_1 \cdot d  \log^2\frac{1}{b \delta}
\end{equation*}
for some constant $K_1 > 0$.
\end{lemma}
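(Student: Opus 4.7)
The lemma splits into two bounds, which I would handle by very different techniques: the first is purely analytic and reduces to Lemma~\ref{lem:E[wx^2]} via a reparameterization, while the second is a probabilistic tail bound specialized to the conditioned distribution $D_{u,b}$.

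\emph{The expectation bound.} Since $M := \EXP_{x \sim D_{u,b}}[xx\trans]$ is positive semidefinite, I have $\lambdamax(M) = \sup_{\twonorm{v}=1} v\trans M v = \sup_{\twonorm{v}=1} \EXP[(v \cdot x)^2]$. The plan is to relate an arbitrary unit vector $v$ to a vector in the ball $\{w : \twonorm{w-u} \leq r\}$ where Lemma~\ref{lem:E[wx^2]} is available. Setting $w := u + rv$ gives $\twonorm{w-u} = r$, and then writing $r (v \cdot x) = (w-u) \cdot x$ and applying the elementary inequality $(a-b)^2 \leq 2(a^2+b^2)$ yields
\begin{equation*}
r^2\, \EXP[(v \cdot x)^2] \;\leq\; 2\, \EXP[(w \cdot x)^2] + 2\, \EXP[(u \cdot x)^2].
\end{equation*}
The first term is at most $2 C_2 (b^2+r^2)$ by Lemma~\ref{lem:E[wx^2]}. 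For the second, by the band restriction defining $D_{u,b}$ we have $\abs{u \cdot x} \leq b$ almost surely, hence $\EXP[(u \cdot x)^2] \leq b^2 \leq C_2(b^2+r^2)$ (using $C_2 \geq 1$). Dividing by $r^2$ and taking supremum over unit $v$ gives the claimed bound $\lambdamax(M) \leq 4 C_2 (b^2+r^2)/r^2$.

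\emph{The high-probability bound.} Observe that $\lambdamax(xx\trans) = \twonorm{x}^2$, so I need a tail bound for $\twonorm{x}^2$ under $D_{u,b}$. The plan is to start from the standard isotropic log-concave tail bound under $D$ (which I will invoke as part of Lemma~\ref{lem:logconcave}): there exists an absolute constant $c > 0$ such that $\Pr_{x \sim D}(\twonorm{x} \geq t \sqrt{d}) \leq \exp(-c(t-1))$ for $t \geq 1$. To pass from $D$ to $D_{u,b}$, I use the fact that the one-dimensional marginal of an isotropic log-concave distribution along the unit direction $u$ has density bounded above by an absolute constant, so $\Pr_{x \sim D}(\abs{u \cdot x} \leq b) \geq c' b$ for some constant $c' > 0$. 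Conditioning can therefore inflate any event's probability by at most a factor $1/(c' b)$, yielding
\begin{equation*}
\Pr_{x \sim D_{u,b}}\bigl(\twonorm{x} \geq t\sqrt{d}\bigr) \;\leq\; \frac{1}{c' b}\, \exp\bigl(-c(t-1)\bigr).
\end{equation*}
Choosing $t = \Theta(\log \tfrac{1}{b\delta})$ with a sufficiently large hidden constant makes the right-hand side at most $\delta$, and then $\twonorm{x}^2 \leq K_1 \cdot d \log^2 \tfrac{1}{b\delta}$ for an appropriate absolute constant $K_1$.

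\emph{Main obstacle.} The first part is essentially bookkeeping once the right parameterization $w = u + rv$ is identified. The real work is in the second part, where I need a quantitative comparison between $D_{u,b}$ and $D$; this rests on a uniform upper bound for the density of the one-dimensional projection of $D$ along $u$, which is precisely the kind of structural fact that isotropic log-concavity buys us. The reason both halves of the lemma are essential is that Lemma~\ref{lem:matrix-chernoff} will later be instantiated with $\Lambda$ equal to (a high-probability bound on) $\lambdamax(x_i x_i\trans)$ and $\mumax = \abs{\TC} \lambdamax(\EXP[xx\trans])$, and the ratio $\mumax / \Lambda$ controls precisely the sample complexity that drives Theorem~\ref{thm:var}.
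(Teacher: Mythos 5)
Your proof is correct and follows essentially the same route as the paper: the expectation bound via the reparameterization $w = u + rv$ together with $(a-b)^2 \leq 2(a^2+b^2)$ and Lemma~\ref{lem:E[wx^2]}, and the tail bound via the $\exp(-\Omega(t))$ decay of $\twonorm{x}$ under $D$ combined with the $\Omega(b)$ lower bound on the band's probability mass. One small slip in the justification of the second part: to conclude $\Pr_{x\sim D}(\abs{u\cdot x}\leq b)\geq c' b$ you need the one-dimensional marginal density to be bounded \emph{below} by an absolute constant near the origin (Part~\ref{item:ilc:anti-concen} of Lemma~\ref{lem:logconcave}), not bounded above as you wrote; the stated conclusion is right, only the cited reason is inverted.
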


By setting $\alpha = 1$ in Lemma~\ref{lem:matrix-chernoff} and incorporating the results in Lemma~\ref{lem:spectrum}, we have the following:
\begin{proposition}\label{prop:var}
Let $\TC$ be a set of i.i.d. instances drawn from $D_{u, b}$. If $\abs{\TC} \geq d \cdot \polylog{d, \frac{1}{b}, \frac{1}{\delta}}$, then with probability $1-\delta$, $\lambdamax(M) \leq O(1)$. 
\end{proposition}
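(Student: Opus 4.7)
The plan is to apply the matrix Chernoff inequality (Lemma~\ref{lem:matrix-chernoff}) to the random matrices $M_i \defeq x_i x_i\trans$ for $x_i \in \TC$, with deviation parameter $\alpha = 1$. Writing $n := \abs{\TC}$ and noting $\lambdamax(M) = \lambdamax(\sum_{i=1}^n M_i)/n$, it suffices to show $\lambdamax(\sum_i M_i) \leq O(n)$ with probability at least $1 - \delta$. Lemma~\ref{lem:spectrum} supplies both of the parameters needed by matrix Chernoff: the first part gives $\lambdamax(\EXP[M_i]) \leq 4 C_2 (b^2 + r^2)/r^2 = O(1)$, where the last equality uses $r = \Theta(b)$ as prescribed by Algorithm~\ref{alg:main}; the second part gives an almost-sure-type control $\lambdamax(M_i) \leq K_1 d \log^2(1/(b\delta'))$ that holds with probability $1 - \delta'$ for any $\delta' \in (0, 1)$ of our choosing.

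The only technical wrinkle is that Lemma~\ref{lem:matrix-chernoff} demands an \emph{almost-sure} upper bound on $\lambdamax(M_i)$, whereas the second part of Lemma~\ref{lem:spectrum} only holds with high probability. I would resolve this by a standard truncation argument: set $\Lambda \defeq K_1 d \log^2(2n/(b\delta))$ and define $\tilde{M}_i \defeq M_i \cdot \ind{\lambdamax(M_i) \leq \Lambda}$. Invoking Lemma~\ref{lem:spectrum} with failure probability $\delta/(2n)$ together with a union bound over $i = 1, \ldots, n$, the event $\mathcal{E} \defeq \cbr{\tilde{M}_i = M_i \text{ for all } i}$ holds with probability at least $1 - \delta/2$. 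Because truncation can only shrink each matrix in the PSD order, $\EXP[\tilde{M}_i] \preceq \EXP[M_i]$, and hence $\mumax \defeq \lambdamax(\sum_i \EXP[\tilde{M}_i]) \leq n \cdot \lambdamax(\EXP[M_1]) \leq c_0 n$ for an absolute constant $c_0$; independence of the $\tilde{M}_i$ is automatic since they are measurable functions of the i.i.d.\ $x_i$.

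Now I would invoke Lemma~\ref{lem:matrix-chernoff} on the truncated sequence $\cbr{\tilde{M}_i}$ with $\alpha = 1$ and almost-sure bound $\Lambda$. The conclusion reads $\lambdamax(\sum_i \tilde{M}_i) \leq 2 \mumax = O(n)$, with failure probability at most $d \cdot (e/4)^{\mumax/\Lambda}$. Requiring this to be at most $\delta/2$ amounts to $\mumax/\Lambda \geq \Omega(\log(d/\delta))$; since $\Lambda = \tilde{O}(d)$ and $\mumax = \Theta(n)$, the requirement becomes $n \geq d \cdot \polylog{d, \frac{1}{b}, \frac{1}{\delta}}$, matching the hypothesis on $\abs{\TC}$. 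A final union bound over $\mathcal{E}$ and the Chernoff event then yields $\sum_i M_i = \sum_i \tilde{M}_i$ on the good event, whence $\lambdamax(M) \leq O(1)$ with probability at least $1 - \delta$.

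The main obstacle I foresee is justifying the matching \emph{lower} bound $\mumax = \Omega(n)$, since Lemma~\ref{lem:spectrum} only supplies an upper bound while the Chernoff failure probability $(e/4)^{\mumax/\Lambda}$ is only useful when $\mumax$ is not too small. Such a bound does follow from standard properties of isotropic log-concave distributions---for any unit $v$ orthogonal to $u$, the conditional second moment $\EXP_{x \sim D_{u,b}}\sbr{(v \cdot x)^2}$ stays $\Theta(1)$, because conditioning on the slab $\abs{u \cdot x} \leq b$ distorts the marginals in directions orthogonal to $u$ by at most a constant factor---and should be straightforward to record. Alternatively, a routine extension of Tropp's proof permits $\mumax$ to be replaced in both the conclusion and the probability bound by any upper estimate $R \geq \mumax$, which sidesteps the issue entirely.
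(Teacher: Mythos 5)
Your proof follows the same route as the paper's---matrix Chernoff applied to $M_i = x_i x_i\trans$ with $\alpha = 1$ and parameters supplied by Lemma~\ref{lem:spectrum}---and is correct, but it is more careful in two places where the paper's own write-up is loose. First, the paper reconciles the almost-sure requirement on $\lambdamax(M_i)$ with Lemma~\ref{lem:spectrum}'s high-probability bound by ``conditioning on these events,'' which technically perturbs $\EXP[M_i]$; your truncation $\tilde{M}_i := M_i \cdot \ind{\lambdamax(M_i) \leq \Lambda}$, for which $\EXP[\tilde{M}_i] \preceq \EXP[M_i]$ holds exactly and independence is preserved, is the clean resolution. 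Second, you correctly observe that plugging the \emph{upper} bound $\mumax \leq K \abs{\TC}$ into the tail $(e/4)^{\mumax/\Lambda}$ goes in the wrong direction under the literal statement of Lemma~\ref{lem:matrix-chernoff}; the paper is implicitly invoking the monotone strengthening of Tropp's bound (the Laplace-transform step lets one replace $\mumax$ by any upper estimate $\bar\mu\ge\mumax$ simultaneously in the threshold $(1+\alpha)\bar\mu$ and in the exponent $\bar\mu/\Lambda$), which is precisely your second suggested remedy and is lighter-weight than proving $\mumax = \Omega(\abs{\TC})$ from isotropy in directions orthogonal to $u$. Either fix closes the gap; your version is the tidier one.
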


Now we are in the position to prove Theorem~\ref{thm:var}.
\begin{proof}[Proof of Theorem~\ref{thm:var}]
In fact, by \eqref{eq:var-surrogate} and Proposition~\ref{prop:var} we immediately have
\begin{equation*}
\sup_{w \in W} \frac{1}{\abs{\TC}} \sum_{x \in \TC} (w \cdot x)^2 \leq O(r^2) + b^2 \leq O(r^2 + b^2),
\end{equation*}
which is the desired result.
\end{proof}

Next, we need to translate the bound of $\abs{\TC}$ to that of the number of calls to $\oraclex$. To do so, we give a sufficient condition on the number of calls to $\oraclex$ under which, there are as many instances in $\TC$ as required in Theorem~\ref{thm:var}. This has been set out in \citet{awasthi2017power} where the primary observation is that under Assumption~\ref{as:x}, the probability mass of the band $X_{u, b}$ is $\Theta(b)$. Hence, by calling $\oraclex$ for $O(n/b)$ times it is guaranteed to gather $n$ instances to form $T$. Also, it is possible to show that the empirical noise rate within $T$ will be $O(\xi)$ where $\xi \in [0, 1/2)$ is a small constant, implying that $\abs{\TC} \geq \frac{1}{2}n$. By backward induction, the sample complexity in each phase is $\tilde{O}(d/b)$.

Formally, we have the following two lemmas.

\begin{lemma}\label{lem:N}
Assume $\eta < \frac{1}{2}$. By making a number of $N = O\del[2]{\frac{1}{ b}\del[1]{ n + \log\frac{1}{\delta} }}$ calls to $\oraclex$, we will obtain $n$ instances to form $T$ with probability $1- \delta$.
\end{lemma}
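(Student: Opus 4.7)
The plan is to reduce Lemma~\ref{lem:N} to a standard one-sided Chernoff bound on the number of clean calls to $\oraclex$ whose returned instance lands in the band $X_{u,b}$. The first step is to invoke the well-known band estimate for isotropic log-concave distributions (which is one of the standard parts of Lemma~\ref{lem:logconcave} referenced elsewhere in the paper): for any unit vector $u \in \Rd$ and any $b \in (0, 1)$, $\Pr_{x \sim D}(\abs{u \cdot x} \leq b) = \Theta(b)$. Combining this with $\eta < \tfrac{1}{2}$, every call to $\oraclex$ independently returns a clean instance lying in $X_{u,b}$ with some probability $p = \Omega(b)$.

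Next I would define, for $i = 1, \ldots, N$, the indicator $Z_i$ of the event that the $i$-th call is clean \emph{and} that its instance lies in the band, and set $Z = \sum_{i=1}^{N} Z_i$. The $Z_i$ are i.i.d.\ Bernoulli$(p)$. By the definition of $T$ in step~6 of Algorithm~\ref{alg:main}, every clean in-band instance is placed into $T$, so $\abs{T} \geq Z$; any dirty instances that happen to fall in the band can only enlarge $\abs{T}$ and are safely ignored in this lower bound.

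The third step is a standard multiplicative Chernoff inequality applied to $Z$. If $N$ is large enough that $N p \geq 2 n$ (so that $\EXP[Z] \geq 2n$) and, simultaneously, $N p \geq c \log(1/\delta)$ for a suitable absolute constant $c$, then $Z \geq N p / 2 \geq n$ with probability at least $1-\delta$. Both conditions are met by taking $N = \Theta\del{(n + \log(1/\delta))/b}$, matching the announced bound and translating the in-band sample size requirement of Theorem~\ref{thm:var} into a sample complexity guarantee on $\oraclex$.

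There is no deep obstacle in this argument; the one point that warrants care is the bookkeeping of the absolute constants, so that a single choice of $N$ of the claimed order dominates both the mean requirement $N p \geq 2 n$ and the deviation requirement $N p \geq \Omega(\log(1/\delta))$ in the Chernoff tail. A minor subtlety worth flagging in the write-up is that the lower bound $\abs{T} \geq Z$ deliberately discards any dirty in-band samples; this is harmless here because the lemma is a lower bound on $\abs{T}$, while control of the fraction of dirty samples inside $T$ is handled separately (and is the subject of the subsequent lemma on the empirical noise rate within $T$).
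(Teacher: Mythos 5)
Your argument is correct and matches the paper's proof in essence: both lower-bound the probability that a call to $\oraclex$ yields a clean in-band instance by $\Omega(b)$ using the log-concave band mass estimate and $\eta < \tfrac12$, then apply a multiplicative Chernoff lower-tail bound with $\alpha = 1/2$ to conclude $N = O\del{(n + \log(1/\delta))/b}$ suffices. The only cosmetic difference is that the paper plugs directly into its stated Lemma~\ref{lem:chernoff} with the explicit constant $c_8$, whereas you defer the constant bookkeeping; both reach the same bound by the same route.
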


\begin{lemma}\label{lem:pruning}
Assume $\eta \leq c_5 \epsilon$ for small constant $c_5 > 0$. If $\abs{T} \geq 24\ln\frac{1}{\delta}$, then with probability $1 - {\delta}$, $\abs{\TC} \geq \frac{3}{4} \abs{T}$.
\end{lemma}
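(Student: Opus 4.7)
The plan is to control $|\TD|$ via a Chernoff bound on the number of adversarial samples surviving the band filter, then split into two regimes depending on whether the expected dirty count dominates the logarithmic slack.

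View the $N_k$ calls to $\oraclex$ in phase $k$ as independent trials: each is dirty with probability $\eta$ (the adversary may place the instance in the band or not), and each is clean with probability $1-\eta$, in which case the instance falls in $X_{w_{k-1}, b_k}$ independently with probability $p = \Theta(b_k)$ by the standard band anti-concentration estimate for isotropic log-concave distributions in Lemma~\ref{lem:logconcave}. Consequently, $|\TD|$ is stochastically dominated by $\Bin{N_k}{\eta}$ while $|\TC|$ stochastically dominates $\Bin{N_k}{(1-\eta)p}$. A one-sided multiplicative Chernoff bound then gives that with probability at least $1-\delta/2$,
\[
|\TD| \leq M \defeq \max\{2\eta N_k,\ 6\ln(2/\delta)\}.
\]

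Now consider two cases. In the logarithmic regime $6\ln(2/\delta) \geq 2\eta N_k$, we have $M \leq 6\ln(2/\delta)$, and the hypothesis $|T| \geq 24\ln(1/\delta)$ immediately yields $|\TD| \leq M \leq |T|/4$. In the expectation-dominant regime $2\eta N_k > 6\ln(2/\delta)$, observe that because $b_k = \Theta(2^{-k}) \geq \Omega(\epsilon)$ across all phases of Algorithm~\ref{alg:main} and $\eta \leq c_5\epsilon$, we can force $\eta \leq c_6\, p$ for an appropriately small absolute constant $c_6$ (absorbing the isotropic log-concave band constant). This in turn makes $(1-\eta)p N_k$ far larger than $\ln(1/\delta)$, so a second multiplicative Chernoff bound (failure probability $\delta/2$) produces $|\TC| \geq \tfrac12 (1-\eta)p N_k \geq 8\eta N_k \geq 4M$, hence $|T| \geq |\TC| \geq 4M$ and again $|\TD| \leq M \leq |T|/4$. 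Taking a union bound over the two concentration events and rearranging yields $|\TC| = |T| - |\TD| \geq \tfrac34 |T|$.

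The main subtlety is that the hypothesis is phrased in terms of the realized $|T|$ while the Chernoff bounds act on the a priori quantity $N_k$; the two-regime split cleanly bridges this gap, with the first regime using the hypothesis on $|T|$ to dominate $M$ and the second using the Chernoff lower bound on $|\TC|$. The constant $c_5$ must be chosen small enough to simultaneously absorb the band-probability constant from Lemma~\ref{lem:logconcave} and the factor $8$ relating $|\TC|$ to $M$ in the expectation-dominant case.
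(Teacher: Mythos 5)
Your argument takes a genuinely different route from the paper's. The paper does not work with the $N_k$ raw calls to $\oraclex$ at all: it first bounds the \emph{conditional} noise rate inside the band via Lemma~\ref{lem:noise-rate-in-band}, obtaining $\Pr(x\ \text{dirty}\mid x\in X_{u,b}) \leq \tfrac{2\eta}{c_8 b}\leq \tfrac18$, and then applies the multiplicative Chernoff bound with $\alpha=1$ directly to the $|T|$ conditional Bernoulli indicators, giving $\Pr(|\TD|\geq \tfrac14|T|)\leq \exp(-|T|/24)$ in one step. This conditioning is precisely what makes the constant $24$ in the hypothesis fall out exactly, and it renders the two-regime case split and the quantities $N_k$, $p$, and the lower Chernoff bound on $|\TC|$ unnecessary. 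Your approach of stochastically dominating $|\TD|$ by $\Bin{N_k}{\eta}$ and $|\TC|$ from below by $\Bin{N_k}{(1-\eta)p}$ is sound in spirit and in fact mirrors the combination of Lemma~\ref{lem:N} and Lemma~\ref{lem:pruning} that the paper proves separately, but it imports more machinery than this lemma needs.

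There is also a concrete constant-level gap in your Case~1. From $M\leq 6\ln(2/\delta)$ and the hypothesis $|T|\geq 24\ln(1/\delta)$ you only get $|T|/4\geq 6\ln(1/\delta)$, and since $6\ln(2/\delta) = 6\ln 2 + 6\ln(1/\delta) > 6\ln(1/\delta)$, the claimed implication $M\leq |T|/4$ does not follow; for moderate $\delta$ (say $\delta=\tfrac12$) the shortfall is not negligible relative to $|T|/4$. This can be repaired with a tighter Chernoff tail in the small-mean regime (e.g.\ a $(e\mu/t)^t$-type bound) or by rebalancing the union-bound split, but as written the step fails and the exact threshold $24\ln(1/\delta)$ of the lemma is not recovered. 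The paper's conditional argument avoids this issue entirely because the hypothesis on $|T|$ is plugged directly into a single tail bound over $|T|$ trials.
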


\vspace{0.1in}
\noindent{\bfseries Algorithmic simplification.} \
The last ingredient of Algorithm~\ref{alg:main} is hinge loss minimization. A slight improvement in light of our new sample complexity bound is that there is no need to perform random sampling of the instances in $T$ as in \citet{awasthi2017power}. This is because in their original analysis, $\abs{T} = \tilde{O}(d^3)$ and for the sake of optimizing label complexity without sacrificing the error rate, random sampling of a subset of size $\tilde{O}(d)$ is an elegant approach. In contrast, we already have shown that the size of $T$ itself is $\tilde{O}(d)$, which can be labeled entirely by querying $\oracley$.

We are now in the position to prove Theorem~\ref{thm:malicious-informal}. We note that the key difference from the analysis in \citet{awasthi2017power} is how we show that the $\tilde{O}(d)$ sample size suffices for soft outlier removal. We will therefore highlight this distinction in our proof. For the full and detailed proof, readers can either refer to their original paper or to our full proof of Theorem~\ref{thm:nasty-informal} in Section~\ref{sec:nasty} since the malicious noise is a special case of the nasty noise.

\begin{proof}[Proof Sketch of Theorem~\ref{thm:malicious-informal}]
Consider phase $k \leq K$. Combining Lemma~\ref{lem:N} and Lemma~\ref{lem:pruning}, we know that it suffices to call $\oraclex$ for $\tilde{O}(d/b_k)$ times to ensure the carnality of $\TC$ satisfies the condition in Theorem~\ref{thm:var}; hence the soft outlier removal step succeeds. On the other side, as shown in Proposition~11 of \citet{shen2020attribute}, such sample complexity bound also suffices to guarantee the uniform concentration of hinge loss. Since $b_k \geq \epsilon$ for all $k \leq K$, the per-phase sample complexity is $\tilde{O}(d/b_k) \leq \tilde{O}(d/\epsilon)$. Recall that there are a total of $O(\log\frac{1}{\epsilon})$ phases. Hence, the overall sample complexity is $\tilde{O}(d/\epsilon) \cdot \log\frac{1}{\epsilon} = \tilde{O}(d/\epsilon)$. 

The analyses of failure probability, error rate, and computational complexity are standard; see, e.g. \citet{awasthi2017power,shen2020attribute}.
\end{proof}

%

\section{Learning with Nasty Noise}\label{sec:nasty}

In this section we show that  the algorithm and analysis of \citet{awasthi2017power} can be modified to tolerate the nasty noise of \citet{bshouty2002pac}, with near-optimal noise tolerance and sample complexity.


Although under the nasty noise, we can still decompose $T = \TC \cup \TD$ where $\TC$ is the set of clean instances in $X_{u, b}$, the main technical challenge to  generalize the results of the preceding section to the nasty noise model is that the instances in $\TC$ may no longer be i.i.d. draws from $D_{u, b}$ due to the extra operation of erasing clean instances. Therefore, many crucial results such as Proposition~\ref{prop:var} do not hold, making it subtle to characterize the performance of soft outlier removal. Denote by $\TE$ the clean instances residing in $X_{u, b}$ but were replaced with dirty instances by the adversary. What we can say is that $\TC \cup \TE$ are i.i.d. draws from $D_{u, b}$, though $\TE$ is not accessible to the learner. Our main technical insight to handle the nasty noise is that if the nasty noise rate $\eta$ is small, and we perform instance localization (i.e. rejection sampling) as in Algorithm~\ref{alg:main}, then it is still possible to construct an {\em extended} empirical distribution over $T \cup \TE$ through soft outlier removal, under which the reweighted hinge loss on $T$ is almost a good proxy to the hinge loss on the {\em original} clean instances $\TC \cup \TE$. We show that to do so, it suffices to have $\abs{T} = \tilde{O}(d)$. Then we can reuse the analysis in Section~\ref{sec:mal} to show that the margin-based active learning algorithm PAC learns the underlying halfspace in polynomial time using $\tilde{O}(d/\epsilon)$ samples.

First of all, we formally introduce the problem setup with a few useful notations that will frequently be used in our subsequent analysis.

\vspace{0.1in}
\noindent{\bfseries Passive learning under nasty noise.} \ 
In the passive learning model, the sample generation oracle $\nastyoraclexy$ takes as input a sample size $N$ requested by the learner, draws $N$ i.i.d. instances $x_1, \dots, x_N$ from $D$ and labels them correctly, i.e. each $y_i = \sign{w^* \cdot x_i}$, which forms the labeled clean instance set $\hat{A}' = \{(x_1, y_1), \dots, (x_N, y_N)\}$. It then chooses any $\ND = \eta N$  samples in $\hat{A}'$, and replaces them with arbitrary pairs in $\calX \times \calY$. This corrupted sample set, denoted by $\hat{A}$, is returned to the learner. If we pass the parameter $N=1$ to $\nastyoraclexy$ and repeatedly call it, then the problem reduces to learning with malicious noise. However, if we must pass a large parameter $N$, the adversary under nasty noise is more powerful than that under malicious noise: it can inspect all the clean samples and decide which of them to corrupt. In passive learning, $\nastyoraclexy$ is often called only once with $N$ being the total number of samples needed by the learner~\cite{bshouty2002pac,diakonikolas2018learning}.


\vspace{0.1in}
\noindent{\bfseries Active learning under nasty noise.} \ 
In the active learning setting, the sample generation process remains unchanged. However, instead of having direct access to $\nastyoraclexy$ which returns the labeled set, the learner calls $\nastyoraclex$ to obtain the unlabeled corrupted instance set $A$ (i.e. $A$ is obtained by removing all the labels in $\hat{A}$). It can then decide to reveal  the labels for some of the instances in $A$ by calling $\oracley$. The sample complexity refers to the size of $A$, and the label complexity refers to the number of calls to $\oracley$.

\subsection{Algorithm}
The nasty-noise-tolerant algorithm is given in Algorithm~\ref{alg:nasty}, which is almost the same as Algorithm~\ref{alg:main}, with the major difference that the learner passes a parameter $N$ to  $\nastyoraclex$ at the beginning of each phase and only keeps those lying in a band to form the actually used instance set $T$. Then, during hinge loss minimization, the label revealing oracle $\oracley$ is called to reveal the labels of all instances in $T$. Since the size of $T$ is no greater than $N$, the number of unlabeled samples, such active learning scheme reduces the labeling cost. 

Observe that we make a total of $K$ calls to $\nastyoraclex$, which is different from the passive learning algorithms of \citet{bshouty2002pac,diakonikolas2018learning} in that they  call the oracle only once throughout learning; thus the results here are not strictly comparable to theirs but are still more general than what we have presented in Section~\ref{sec:mal}.

\begin{algorithm}[t]
\caption{Efficient and Sample-Optimal Algorithm Tolerating Nasty Noise}
\label{alg:nasty}
\begin{algorithmic}[1]
\REQUIRE Error rate $\epsilon$, failure probability $\delta$, instance generation oracle $\nastyoraclex$, label revealing oracle $\oracley$.
\ENSURE Halfspace $\tilde{w}$ with  $\err_D(\tilde{w}) \leq \epsilon$ with probability $1-\delta$.
\STATE Initialize $w_0$ as the zero vector in $\Rd$.
\STATE $K \gets O(\log\frac{1}{\epsilon})$.
\FOR{phases $k = 1, 2, \dots, K$}
\STATE Clear the working set ${T}$.
\STATE $b_k \gets \Theta(2^{-k})$, $r_k \gets \Theta(2^{-k})$, $\tau_k \gets \Theta(2^{-k})$.
\STATE  Call $\nastyoraclex$ with $N = N_k$ to form instance set $A$. If $k = 1$, ${T} \gets A$; otherwise,  $T \gets \{x \in A: \abs{w_{k-1} \cdot x} \leq b_k\}$.

{\STATE  Apply Algorithm~\ref{alg:reweight} to $T$ with $u \leftarrow w_{k-1}$, $b \leftarrow b_k$, $r \leftarrow r_k$, $\xi \leftarrow \xi_k$, $c \leftarrow 4 C_2$, and let $q = \cbr{q(x)}_{x \in T}$ be the returned function. Normalize $q$ to form a probability distribution $p$ over $T$.\label{step:soft}}

\STATE $W_k \gets \{w: \twonorm{w} \leq 1, \twonorm{w - w_{k-1}} \leq r_k\}$, $\hat{T} \gets $ call $\oracley$ to reveal the labels of $T$. Find $v_k \in W_k$ with
\begin{equation*}
\ell_{\tau_k}(v_k; p \circ \hat{T}) \leq \min_{w \in W_k} \ell_{\tau_k}(w; p \circ \hat{T}) + \kappa.
\end{equation*}
\STATE $w_k \leftarrow \frac{v_k}{\twonorm{v_k}}$.

\ENDFOR

\STATE {\bfseries return} $\tilde{w} \gets w_K$.
\end{algorithmic}
\end{algorithm}

\subsubsection{Hyper-parameter setting}\label{subsec:param-setting}
We elaborate on our hyper-parameter setting that is used in Algorithm~\ref{alg:nasty} and in our analysis; note that such concrete setting also applies to Algorithm~\ref{alg:main}. Let $g(t) = c_2 \del[1]{ 2 t \exp(-t) + \frac{c_3 \pi}{4} \exp\del[1]{-\frac{c_4 t}{4\pi}}  + 16 \exp(-t) }$, where the constants are specified in Appendix~\ref{sec:app:constants}. Observe that there exists an absolute constant $\bar{c} \geq 8\pi / c_4$ satisfying $g(\bar{c}) \leq 2^{-8}\pi$, since the continuous function $g(t) \to 0$ as $t \to +\infty$ and all the involved quantities in $g(t)$ are absolute constants. Given such constant $\bar{c}$, we set the constant $\kappa = \exp(-\bar{c})$, $r_1 = 1$ and $r_k = 2^{-k-6}$ for $k \geq 2$, $b_k = \bar{c} \cdot r_k$, $\tau_k = c_0 \kappa \cdot  \min\{b_k, 1/9\}$, $\delta_k = \frac{\delta}{(k+1)(k+2)}$, and choose $\xi_k = \min\big\{\frac{1}{2}, \frac{\kappa^2}{16} \del[1]{1 + 4\sqrt{C_2} {z_k}/{\tau_k} }^{-2}\big\}$ where $z_k = \sqrt{b_k^2 + r_k^2}$. It is easy to see that all $\xi_k$'s are lower bounded by a constant $c_6 \defeq \min\Big\{\frac{1}{2}, \frac{\kappa^2}{16} \del[2]{ 1 + \frac{4 }{c_0 \kappa \bar{c}} \sqrt{C_2 \bar{c}^2 + C_2} }^{-2} \Big\}$ and are upper bounded by $\frac{1}{2}$, thus they behave as $\frac{1}{2} - \Theta(1)$. Our theoretical guarantee holds for any noise rate $\eta \leq c_5 \epsilon$, where the constant $c_5 := \frac{c_8}{2\pi}\bar{c} c_1 c_6$.

We set the total number of phases $K = \log\del[1]{\frac{\pi}{32 c_1 \epsilon}}$. For any phase $k \geq 1$, we set $N_k = \frac{d}{b_k} \cdot \polylog{d, \frac{1}{b_k}, \frac{1}{\delta_k}}$ which is the number of instances requested by the learner.

%

\subsection{Analysis}

Throughout the section, we always presume that we are addressing the nasty noise model under Assumption~\ref{as:x}.

We decompose $A = \AC \cup \AD$, where $\AC$ is the set of clean instances in $A$ and $\AD$ consists of the dirty instances in $A$. Let $A'$ be the unlabeled clean instance set obtained by removing all labels in $\hat{A}'$. We introduce the instance set $\AE = A' \backslash \AC$, which was erased from $A'$ by the adversary.

The following lemma follows directly from the noise model and the Chernoff bound, which states that there are not too many dirty instances in $A$.

\begin{lemma}\label{lem:nasty-|AD|}
Consider the nasty noise model with noise rate $\eta \leq c_5\epsilon$. Then $\abs{\AD} \leq \frac{1}{2}c_8 \xi b N$ and $\abs{\AC} \geq (1- \frac{1}{2}c_8 \xi b)N$.
\end{lemma}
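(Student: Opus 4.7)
The plan is to exploit the deterministic structure of the nasty adversary rather than any concentration inequality. By Definition~\ref{def:nasty}, when $\nastyoraclex$ is queried with parameter $N$, the oracle first draws $N$ i.i.d.\ instances from $D$ and then replaces at most an $\eta$ fraction of them with arbitrary adversarial pairs; in particular, $\abs{\AD} \leq \eta N$ holds with certainty. This worst-case count is essentially all the probabilistic content needed---the reference to Chernoff in the lemma statement is a holdover from the malicious-noise case where $\abs{\AD}$ is a $\Bin{N}{\eta}$ random variable and concentration is required.

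All that remains is a constant check. I would verify that the hyper-parameter schedule of Section~\ref{subsec:param-setting} is calibrated so that the hypothesis $\eta \leq c_5 \epsilon$ implies $\eta \leq \frac{1}{2} c_8 \xi b$ in every phase $k = 1, \dots, K$. The two ingredients are $\xi = \xi_k \geq c_6$ by construction of $\xi_k$, and $b = b_k \geq b_K = \bar{c}\, r_K$; with $K = \log\del[1]{\pi/(32 c_1 \epsilon)}$ the latter reduces to a fixed constant multiple of $\epsilon$. Substituting the definition $c_5 = \tfrac{c_8}{2\pi}\bar{c} c_1 c_6$ then reduces $c_5 \epsilon \leq \tfrac{1}{2} c_8 \xi_k b_k$ to a purely numerical comparison among the absolute constants pinned down in Appendix~\ref{sec:app:constants}.

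Combining the two steps yields $\abs{\AD} \leq \eta N \leq c_5 \epsilon N \leq \tfrac{1}{2} c_8 \xi b N$, which is the first claim. The second claim then follows immediately from the disjoint decomposition $A = \AC \cup \AD$ together with $\abs{A} = N$, giving $\abs{\AC} = N - \abs{\AD} \geq (1 - \tfrac{1}{2} c_8 \xi b) N$. The only genuine step is therefore the bookkeeping on absolute constants: several quantities ($\bar{c}, c_1, c_5, c_6, c_8$) must line up in a single chain of inequalities that has to hold uniformly across every phase, and that is where I would expect to spend the most care. Nothing probabilistic is needed beyond the deterministic replacement budget granted to the adversary in Definition~\ref{def:nasty}.
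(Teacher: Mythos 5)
Your proposal is correct and follows the same route as the paper's own proof: $\abs{\AD}\leq\eta N$ holds deterministically from Definition~\ref{def:nasty}, the constant check reduces to $c_5\epsilon\leq\tfrac12 c_8\xi b$ using $b\geq\epsilon$ and $\xi\geq c_6=\Omega(1)$, and the bound on $\abs{\AC}$ follows by complementation. Your observation that the Chernoff remark preceding the lemma is dispensable is also accurate — the paper's actual proof invokes no concentration at all.
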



Next, we have an important consequence showing that when localizing the instances in the band $X_{u, b}$, the nasty noise rate stays as a small constant and there are sufficient clean instances in $A$ that are retained.

\begin{lemma}\label{lem:nasty-sample-set-size}
Let $\eta \leq c_5 \epsilon$. By calling $\nastyoraclex$, the following hold simultaneously with probability $1- \delta$:
\begin{enumerate}

\item \label{item:nasty-emp-noise} $\frac{\abs{\TD}}{\abs{T}} \leq \xi$;
\item \label{item:nasty-TC-size} $\abs{\TC} \geq \frac{1}{2}c_8 (1-\xi) b N$ and $\abs{\TE}  \leq \frac{1}{2} c_8 \xi b N$.
\end{enumerate}
\end{lemma}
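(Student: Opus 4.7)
The plan is to reduce everything to two facts: (i) a Chernoff-type concentration on $\abs{A' \cap X_{u,b}}$, which is a sum of i.i.d.\ Bernoullis because the adversary has not yet acted when $A'$ is drawn; and (ii) the bound on the number of dirty instances in Lemma~\ref{lem:nasty-|AD|}. The key observation, which sidesteps the difficulty that the adversary may preferentially erase band instances (so $\AC$ is not i.i.d.\ from $D$), is the set identity $\AC = A' \setminus \AE$, which immediately yields $\TC = (A' \cap X_{u,b}) \setminus \TE$, and hence $\abs{\TC} = \abs{A' \cap X_{u,b}} - \abs{\TE}$.

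First, I would invoke the standard probability-mass estimate for the band under an isotropic log-concave distribution (a clause of Lemma~\ref{lem:logconcave} in the appendix), giving $p \defeq \Pr_{x\sim D}(x \in X_{u,b}) \geq c_8 b$. Applying the multiplicative Chernoff bound to the $N$ i.i.d.\ draws constituting $A'$ yields $\abs{A' \cap X_{u,b}} \geq \tfrac{3}{4} c_8 b N$ with probability at least $1-\delta/2$, which is comfortably ensured by the choice $N = N_k = \tilde{O}(d/b_k)$ (we need only $bN \gtrsim \log(1/\delta)$). Next, Lemma~\ref{lem:nasty-|AD|} yields $\abs{\AD} \leq \tfrac12 c_8 \xi b N$, and by the nasty noise definition $\abs{\AE} = \abs{\AD}$. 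Since $\TE \subseteq \AE$, this already gives the second inequality of item~2: $\abs{\TE} \leq \tfrac12 c_8 \xi b N$.

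Combining the two steps, on the intersection of these high-probability events,
\[
\abs{\TC} \;=\; \abs{A' \cap X_{u,b}} - \abs{\TE} \;\geq\; \tfrac{3}{4} c_8 b N - \tfrac{1}{2} c_8 \xi b N \;\geq\; \tfrac12 c_8 (1-\xi) b N,
\]
where the last inequality uses $\xi \leq \tfrac12$; this is the first inequality of item~2. Item~1 then follows by writing $T = \TC \sqcup \TD$, using $\abs{\TD} \leq \abs{\AD} \leq \tfrac12 c_8 \xi b N$, and noting monotonicity of $x \mapsto x/(\abs{\TC}+x)$:
\[
\frac{\abs{\TD}}{\abs{T}} \;\leq\; \frac{\tfrac12 c_8 \xi b N}{\tfrac12 c_8 (1-\xi) b N + \tfrac12 c_8 \xi b N} \;=\; \xi.
\]
A union bound with the failure event in Lemma~\ref{lem:nasty-|AD|} delivers the overall failure probability $\delta$. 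I do not anticipate a genuine obstacle; the only subtle point is making sure concentration is applied to $A'$ (pre-adversary) rather than to $\AC$, and tracking constants so that the clean form $\tfrac12 c_8 (1-\xi) b N$ emerges from the slack afforded by $\xi \leq \tfrac12$.
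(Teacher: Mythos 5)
Your proposal is correct and follows essentially the same route as the paper: apply the Chernoff bound to the band-membership indicators of the $N$ pre-adversary i.i.d.\ draws (i.e.\ to $\TC \cup \TE$, which you write as $A' \cap X_{u,b}$), use Lemma~\ref{lem:nasty-|AD|} together with $\abs{\AE}=\abs{\AD}$ to bound $\abs{\TE}$ and $\abs{\TD}$, and then subtract and take ratios. The only differences are cosmetic (Chernoff slack $\alpha=1/4$ vs.\ $\alpha=1/2$, and the appeal to $\xi\leq 1/2$ in your last displayed inequality is unnecessary since $\tfrac34-\tfrac12\xi \geq \tfrac12-\tfrac12\xi$ holds for all $\xi$).
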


By Part~\ref{item:nasty-TC-size} of the above lemma, we know that $\abs{\TC \cup \TE} \geq \Omega(bN)$. Hence results similar to Proposition~\ref{prop:var} immediately hold on the i.i.d. instance set $\TC \cup \TE$ provided that $N$ is large enough.

\begin{proposition}\label{prop:var-nasty}
Let $M = \frac{1}{\abs{\TC \cup \TE}} \sum_{x \in \TC \cup \TE} xx\trans$. If $N \geq \frac{d}{b}\cdot \polylog{d, \frac{1}{\delta}}$, then with probability $1-\delta$, $\lambdamax(M) \leq O(1)$.
\end{proposition}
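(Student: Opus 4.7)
The plan is to reduce Proposition~\ref{prop:var-nasty} to the matrix-Chernoff argument that established Proposition~\ref{prop:var}, using the structural observation that $\TC \cup \TE = A' \cap X_{u,b}$ \emph{is} an i.i.d.\ sample from $D_{u,b}$ even though $\TC$ alone is not. Concretely, the nasty adversary draws the batch $A'$ of $N$ clean instances from $D^N$ first, and only then decides which ones to replace; therefore $A' \cap X_{u,b}$, which consists of the clean instances lying in the band regardless of whether they were subsequently erased, is distributed, conditional on its cardinality, as i.i.d.\ draws from the conditional distribution $D_{u,b}$. This sidesteps the loss-of-independence issue that is the central subtlety of the nasty noise model.

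Given this observation, the proof proceeds in three short steps. First, Part~\ref{item:nasty-TC-size} of Lemma~\ref{lem:nasty-sample-set-size} gives $\abs{\TC \cup \TE} \geq \abs{\TC} \geq \Omega(bN)$ with high probability, so the hypothesis $N \geq (d/b)\cdot\polylog{d,1/\delta}$ forces $\abs{\TC \cup \TE} \geq d \cdot \polylog{d,1/\delta}$, which (after absorbing $\log\frac{1}{b}$ into the ambient polylog, since $b \geq \epsilon$) matches the sample-size prerequisite of Proposition~\ref{prop:var}. Second, conditional on the realized cardinality $m$, I would apply Lemma~\ref{lem:matrix-chernoff} to the rank-one matrices $M_i = x_i x_i\trans$ using the two spectrum estimates of Lemma~\ref{lem:spectrum}: a union bound over the $m$ instances gives $\Lambda \leq K_1 d \cdot \polylog{d, 1/(b\delta)}$ almost surely, while $\lambdamax(\EXP[xx\trans]) \leq 4 C_2 (b^2+r^2)/r^2 = O(1)$ yields $\mumax \leq O(m)$. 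Third, setting $\alpha = 1$ in the matrix Chernoff bound gives $\lambdamax(\sum_i M_i) \leq 2 \mumax$, hence $\lambdamax(M) \leq O(1)$, with failure probability $d \cdot (e/4)^{\mumax/\Lambda} \leq \delta$ provided $m \geq d\cdot\polylog{d, 1/\delta}$.

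The ``hard part'' is really no obstacle once the conditional-i.i.d.\ framing of $\TC \cup \TE$ is in place; everything downstream is a verbatim replay of the argument behind Proposition~\ref{prop:var}. A final union bound over the $O(1)$-many high-probability events (the cardinality lower bound from Lemma~\ref{lem:nasty-sample-set-size}, the per-instance spectrum bound from Lemma~\ref{lem:spectrum}, and the matrix Chernoff event) yields the claimed overall failure budget $\delta$.
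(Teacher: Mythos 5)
Your proposal is correct and mirrors the paper's proof: both reduce to Proposition~\ref{prop:var} by observing that $\TC \cup \TE$ is an i.i.d.\ sample from $D_{u,b}$ and invoking Part~\ref{item:nasty-TC-size} of Lemma~\ref{lem:nasty-sample-set-size} to guarantee $\abs{\TC \cup \TE} \geq \Omega(bN) \geq d \cdot \polylog{d, 1/\delta}$. The only difference is stylistic — you replay the matrix-Chernoff argument of Proposition~\ref{prop:var} inline, whereas the paper simply cites that proposition with $\TC$ there replaced by $\TC \cup \TE$ here.
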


The above proposition suffices to show that results similar to Theorem~\ref{thm:var} hold on $\TC \cup \TE$. Thus, if the learner were given $T \cup \TE$, then Proposition~\ref{prop:var-nasty} would imply the success of soft outlier removal under the nasty noise. Nevertheless, $\TE$ is in reality inaccessible to the learner; we will hence need a more careful analysis to establish the performance guarantee, which is the theme of the next theorem.

\begin{theorem}\label{thm:outlier-guarantee-nasty}
Let $\eta \leq c_5 \epsilon$ and $N \geq \frac{d}{b} \cdot \polylog{d, \frac{1}{\delta}}$. With probability $1- \delta$, Algorithm~\ref{alg:reweight} outputs a function $q: T \rightarrow [0, 1]$ in polynomial time with the following properties:
\begin{enumerate}
\item \label{item:outlier-nasty:q} for all $x \in T,\ q(x) \in [0, 1]$;
\item \label{item:outlier-nasty:avg-q} $\frac{1}{\abs{T}} \sum_{x \in T} q(x) \geq 1 - \xi$;


\item  \label{item:outlier-nasty:avg-var} $\sup_{w \in W} \frac{1}{\abs{T}} \sum_{x \in T} q(x) (w\cdot x)^2 \leq c \del{b^2 + r^2}$.
\end{enumerate}
\end{theorem}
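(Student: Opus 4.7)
The plan is to produce an explicit feasible witness for the convex program inside Algorithm~\ref{alg:reweight} and to appeal to a standard polynomial-time convex-programming solver. The witness I would use is the indicator of the clean instances in the band: set $q^*(x)=1$ if $x \in \TC$ and $q^*(x)=0$ if $x \in \TD$. Although the algorithm itself cannot separate $\TC$ from $\TD$, the function $q^*$ is well-defined on $T$, and only its \emph{existence} is needed for the feasibility argument.

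Constraints \ref{item:outlier-nasty:q} and \ref{item:outlier-nasty:avg-q} are handled immediately. The first is trivial, and for the second, Part~\ref{item:nasty-emp-noise} of Lemma~\ref{lem:nasty-sample-set-size} gives $\abs{\TD} \leq \xi \abs{T}$, hence
\begin{equation*}
\sum_{x \in T} q^*(x) \;=\; \abs{\TC} \;=\; \abs{T}-\abs{\TD} \;\geq\; (1-\xi)\abs{T}.
\end{equation*}

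The crux is constraint~\ref{item:outlier-nasty:avg-var}, and the main obstacle is that $\TC$ alone is not an i.i.d.\ sample from $D_{u,b}$, because the adversary erased clean instances adaptively; consequently Proposition~\ref{prop:var-nasty} cannot be invoked directly on $\TC$. I would circumvent this by embedding $\TC$ into the superset $\TC \cup \TE$, which \emph{is} an i.i.d.\ sample from $D_{u,b}$. Since $q^*$ is supported on $\TC \subseteq \TC \cup \TE$,
\begin{equation*}
\sup_{w \in W}\frac{1}{\abs{T}} \sum_{x \in T} q^*(x)(w\cdot x)^2 \;\leq\; \frac{\abs{\TC \cup \TE}}{\abs{T}} \cdot \sup_{w \in W}\frac{1}{\abs{\TC \cup \TE}}\!\! \sum_{x \in \TC \cup \TE}(w\cdot x)^2.
\end{equation*}
For the inner supremum I would reuse the reformulation leading to \eqref{eq:var-surrogate}: it is at most $r^2\lambdamax(M)+b^2$, where $M$ is the empirical second-moment matrix on $\TC \cup \TE$. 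Since $\abs{\TC \cup \TE} \geq \Omega(bN) \geq d\cdot\polylog{d,1/\delta}$ by Lemma~\ref{lem:nasty-sample-set-size} together with the hypothesis on $N$, Proposition~\ref{prop:var-nasty} yields $\lambdamax(M) = O(1)$, so the inner supremum is $O(b^2+r^2)$. For the prefactor, Part~\ref{item:nasty-TC-size} of Lemma~\ref{lem:nasty-sample-set-size} combined with $\xi \leq 1/2$ gives
\begin{equation*}
\frac{\abs{\TC \cup \TE}}{\abs{T}} \;\leq\; 1 + \frac{\abs{\TE}}{\abs{\TC}} \;\leq\; 1 + \frac{\xi}{1-\xi} \;\leq\; 2.
\end{equation*}
Tracking constants through Lemma~\ref{lem:spectrum} and the Matrix Chernoff bound of Lemma~\ref{lem:matrix-chernoff} (absorbing the factor~$2$ above into the choice $c \leftarrow 4C_2$ in Step~\ref{step:soft} of Algorithm~\ref{alg:nasty}) then establishes constraint~\ref{item:outlier-nasty:avg-var}.

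For the polynomial-time claim, the feasibility program for $q$ is convex: constraints~\ref{item:alg2:0-1} and \ref{item:alg2:err} are linear in $q$, and constraint~\ref{item:alg2:var} caps a quadratic form over the compact convex set $W$ and is expressible as a semidefinite constraint on the reweighted second-moment matrix $\sum_{x\in T} q(x)\, xx\trans$. Given existence of the feasible point $q^*$, a standard ellipsoid or interior-point solver produces some feasible $q$ in time $\poly{d,\abs{T}}$, which completes the plan.
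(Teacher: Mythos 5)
Your proposal is correct and takes essentially the same approach as the paper: both use the indicator of $\TC$ as the feasibility witness, reduce Part~\ref{item:outlier-nasty:avg-var} by embedding $\TC$ into the i.i.d.\ set $\TC \cup \TE$, apply the $O(1)$ spectral bound for the empirical second-moment matrix on that larger set, control the cardinality ratio via Part~\ref{item:nasty-TC-size} of Lemma~\ref{lem:nasty-sample-set-size} and $\xi \leq 1/2$, and conclude with the ellipsoid method for the polynomial-time claim. The only cosmetic difference is that you track the ratio $\abs{\TC \cup \TE}/\abs{T}$ directly rather than passing through $\abs{\TC \cup \TE}/\abs{\TC}$ as an intermediate step; the resulting constants are the same.
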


Observe that the above theorem already guarantees an $\tilde{O}(d / b)$ sample complexity bound for the success of soft outlier removal. It remains to show that the output of Algorithm~\ref{alg:nasty} has small error rate with respect to $D$ and $w^*$. To this end, we need to characterize the performance of hinge loss minimization. Our approach is to link the reweighted hinge loss over $T$ to the hinge loss over $\TC \cup \TE$. This is because the latter is a good approximation to the expected hinge loss on clean samples in light of uniform concentration, which itself acts as a surrogate of a localized error rate (that is of our interest).

Let $\hatTC = \{ (x, \sign{w^* \cdot x}): x \in \TC\}$ be the (unrevealed) labeled set of $\TC$ (note that $\TC$ only contains clean instances, hence they are labeled correctly by the adversary); likewise we  denote by $\hatTE$ the (unrevealed) labeled set of $\TE$. Define $\ell_{\tau}(w; p \circ \hat{T}) = \frac{1}{\abs{T}} \sum_{x \in T} p(x) \cdot \max\big\{0,  1 - \frac{1}{\tau} y_x w \cdot x \big\}$ be the reweighted hinge loss over $T$ where $y_x$ denotes the label of $x$ that the adversary is committed to and $p(x)$ was calculated in Step~7 of Algorithm~\ref{alg:nasty}.

\begin{proposition}\label{prop:l(TC)=l(p)-nasty}
Let $\eta \leq  c_5\epsilon$. If $N \geq \frac{d}{b} \cdot \polylog{d, \frac{1}{\delta}}$, then with probability $1- \delta$, 
\begin{equation*}
\sup_{w \in W}\abs{\ell_{\tau}(w; \hatTC \cup \hatTE) - \ell_{\tau}(w; p \circ \hat{T})} \leq \kappa,
\end{equation*}
where $\kappa$ was defined in Section~\ref{subsec:param-setting}.
\end{proposition}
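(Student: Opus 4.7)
The plan is to decompose $\ell_\tau(w;p\circ\hat T) - \ell_\tau(w;\hatTC\cup\hatTE)$ into contributions from the dirty set $\TD$, the missing clean set $\TE$, and the weighting mismatch on the retained clean set $\TC$, and then bound each using the variance guarantees of Theorem~\ref{thm:outlier-guarantee-nasty} on $T$ and of Proposition~\ref{prop:var-nasty} on the i.i.d.\ set $\TC\cup\TE$. Writing $L_w(x):=\max\{0,1-\tfrac{1}{\tau}y_x w\cdot x\}$, $Q:=\sum_{x\in T}q(x)$ (so effectively $p(x)=q(x)/Q$), and $M:=|\TC|+|\TE|$, I would split
\begin{equation*}
\ell_\tau(w;p\circ\hat T) - \ell_\tau(w;\hatTC\cup\hatTE) = \underbrace{\tfrac{1}{Q}\sum_{x\in\TD} q(x)L_w(x)}_{(\mathrm{I})} \;-\; \underbrace{\tfrac{1}{M}\sum_{x\in\TE} L_w(x)}_{(\mathrm{II})} \;+\; \underbrace{\sum_{x\in\TC}\Big(\tfrac{q(x)}{Q}-\tfrac{1}{M}\Big)L_w(x)}_{(\mathrm{III})}.
\end{equation*}
Throughout, the elementary bound $L_w(x)\leq 1+\tfrac{|w\cdot x|}{\tau}$ will reduce every piece to a Cauchy--Schwarz estimate paired with a quadratic-form guarantee.

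For (I), I would combine $|\TD|\leq\xi|T|$, $Q\geq(1-\xi)|T|$ (from Part~2 of Theorem~\ref{thm:outlier-guarantee-nasty}), and the variance bound $\sup_{w\in W}\tfrac{1}{|T|}\sum_{x\in T}q(x)(w\cdot x)^2\leq 4C_2 z^2$ with $z:=\sqrt{b^2+r^2}$ (Part~3) to obtain $(\mathrm{I})\leq\tfrac{\xi}{1-\xi}+\tfrac{2\sqrt{C_2\xi}\,z}{(1-\xi)\tau}$. For (II), Lemma~\ref{lem:nasty-sample-set-size} gives $|\TE|/M=O(\xi)$, and Proposition~\ref{prop:var-nasty}, combined with the reformulation from Section~\ref{sec:mal} (splitting $w\cdot x$ via $u\cdot x$ and $(w-u)\cdot x$ and using $|u\cdot x|\leq b$), yields $\sup_{w\in W}\tfrac{1}{M}\sum_{x\in\TC\cup\TE}(w\cdot x)^2\leq O(z^2)$, so the same Cauchy--Schwarz trick produces a bound of the same order $O(\xi)+O(\sqrt{\xi}\,z/\tau)$.

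For (III), the plan is to write $\tfrac{q(x)}{Q}-\tfrac{1}{M}=\tfrac{q(x)-1}{M}+q(x)\bigl(\tfrac{1}{Q}-\tfrac{1}{M}\bigr)$. The second summand contributes a prefactor $|1-M/Q|$, which is $O(\xi)$ since Lemmas~\ref{lem:nasty-|AD|} and~\ref{lem:nasty-sample-set-size} show that both $Q$ and $M$ equal $\Theta(c_8 b N)$ up to a $(1\pm O(\xi))$ multiplicative factor; it multiplies the uniform-weighted $\TC$-average of $L_w$, which is $O(1)$ by the uniform concentration of the expected hinge loss under $D_{u,b}$ (a standard Rademacher argument as in Proposition~11 of \citet{shen2020attribute}). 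The first summand is a ``missing weight'' term on $\TC$; using $\sum_{x\in\TC}(1-q(x))\leq\sum_{x\in T}(1-q(x))\leq\xi|T|$ together with Cauchy--Schwarz and Part~3 of Theorem~\ref{thm:outlier-guarantee-nasty} produces the same $O(\xi)+O(\sqrt{\xi}\,z/\tau)$ bound. Summing the three pieces,
\begin{equation*}
\sup_{w\in W}\abs{\ell_\tau(w;p\circ\hat T)-\ell_\tau(w;\hatTC\cup\hatTE)} \leq O(\xi) + O\!\left(\sqrt{\xi}\,z/\tau\right),
\end{equation*}
and the hyperparameter choice $\xi_k\leq\tfrac{\kappa^2}{16}\bigl(1+4\sqrt{C_2}\,z/\tau\bigr)^{-2}$ from Section~\ref{subsec:param-setting} is engineered precisely so that this right-hand side is at most $\kappa$. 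The main obstacle I anticipate is the bookkeeping in (III): one must carefully decompose the mismatched normalizer, verify that Part~3 of Theorem~\ref{thm:outlier-guarantee-nasty} (stated over $T$, not $\TC$) still suffices to control the $\TC$-only missing-weight sum, and, crucially, ensure every bound holds uniformly in $w\in W$ rather than merely pointwise.
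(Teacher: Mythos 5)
Your proposal is correct and arrives at the right bound, but it organizes the argument differently from the paper. The paper extends $q$ to $T\cup\TE$ (with $q\equiv 0$ on $\TE$), treats $p$ as a distribution over $T\cup\TE$, and then proves two one-sided inequalities separately: first $\ell_\tau(w;\hatTC\cup\hatTE)\leq\ell_\tau(w;p\circ\hat T)+O(\xi)+O(\sqrt{\xi}\,z/\tau)$ by writing $\sum_{\TC\cup\TE}\ell=\sum_{T\cup\TE}[q\ell+(\ind{\cdot\in\TC\cup\TE}-q)\ell]$ and bounding the residual via Cauchy--Schwarz against the i.i.d.\ second moment on $\TC\cup\TE$; then the reverse inequality by isolating $\sum_{\TD}p(x)\ell$ and bounding $\pD=\sum_{\TD}p(x)\leq 2\xi$ together with Cauchy--Schwarz against the $q$-weighted second moment on $T$. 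Your single signed decomposition into the dirty-set surplus (I), the erased-set deficit (II), and the weighting mismatch on the surviving clean set (III) is a cleaner and more transparent accounting of exactly the same quantities, and each of your three estimates is the same Cauchy--Schwarz/quadratic-form pairing the paper uses, so the final $O(\xi)+O(\sqrt{\xi}\,z/\tau)$ bound and its control by the $\xi_k$ choice coincide. Two small corrections: for the missing-weight summand in (III), the factor $\sqrt{\sum_{x\in\TC}(w\cdot x)^2}$ (or $\sqrt{\sum_{x\in\TC}(1-q(x))(w\cdot x)^2}$) must be controlled by the i.i.d.\ bound of Proposition~\ref{prop:var-nasty} on $\TC\cup\TE$, not by Part~3 of Theorem~\ref{thm:outlier-guarantee-nasty} as you wrote (Part~3 controls the $q$-weighted sum over $T$, which downweights precisely the mass you are trying to bound); and for the normalizer-mismatch summand in (III) you do not need a Rademacher concentration step, since $\tfrac{1}{Q}\sum_{x\in\TC}q(x)L_w(x)\leq\sum_{x\in T}p(x)L_w(x)\leq 1+\sqrt{2K_2}\,z/\tau$ already follows deterministically from the same Cauchy--Schwarz argument the paper records as Claim~\ref{claim:err-aux}. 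You flagged the first issue explicitly as a risk point, which is the right instinct; with those two touch-ups the argument is sound and every bound is uniform in $w\in W$ because the only $w$-dependence enters through the quadratic forms that Proposition~\ref{prop:var-nasty} and Theorem~\ref{thm:outlier-guarantee-nasty} bound uniformly.
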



The above robust approximation of hinge loss combined with standard uniform concentration bounds (which require sample complexity $\tilde{O}(d)$) suffices to establish the following key lemma: the error rate within the band is a constant.

\begin{lemma}\label{lem:err_k(v_k)}
Let $\eta \leq  c_5\epsilon$. Consider phase $k$ of Algorithm~\ref{alg:nasty} with hyper-parameter settings in Section~\ref{subsec:param-setting}. If $w^* \in W_k$, then with probability $1- \frac{\delta}{(k+1)(k+2)}$,
\begin{equation*}
\err_{D_{w_{k-1}, b_k}}(v_k) \leq 6\kappa.
\end{equation*}
\end{lemma}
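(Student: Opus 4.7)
My plan is to chain four inequalities that move the quantity we care about, $\err_{D_{w_{k-1}, b_k}}(v_k)$, through a sequence of closely related hinge losses, spending at most $\kappa$ at each transition. First, using the standard surrogate inequality $\ind{\sign{v_k \cdot x} \neq \sign{w^* \cdot x}} \leq \max\{0, 1 - y_x v_k \cdot x /\tau_k\}$, I would upper bound the error rate by the expected hinge loss
\begin{equation*}
L_k(v_k) \defeq \EE_{x \sim D_{w_{k-1}, b_k}}\sbr[2]{\max\cbr[1]{0, 1 - \tfrac{1}{\tau_k} y_x v_k \cdot x}}.
\end{equation*}
Uniform concentration of the hinge loss over $W_k$ on the i.i.d. clean set $\TC \cup \TE$ (whose size is $\Omega(b_k N_k) = \tilde\Omega(d)$ by Part~\ref{item:nasty-TC-size} of Lemma~\ref{lem:nasty-sample-set-size}) yields, with probability $1 - \delta_k/2$ and with the chosen sample size $N_k$,
\begin{equation*}
\sup_{w \in W_k} \abs{L_k(w) - \ell_{\tau_k}(w; \hatTC \cup \hatTE)} \leq \kappa.
\end{equation*}
This step is essentially a standard Rademacher/covering argument for $1/\tau_k$-Lipschitz hinge losses over the $r_k$-radius ball $W_k$, and is the same bound used in \citet{awasthi2017power, shen2020attribute}.

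Next I would chain: by uniform concentration, $L_k(v_k) \leq \ell_{\tau_k}(v_k; \hatTC \cup \hatTE) + \kappa$; by Proposition~\ref{prop:l(TC)=l(p)-nasty}, $\ell_{\tau_k}(v_k; \hatTC \cup \hatTE) \leq \ell_{\tau_k}(v_k; p \circ \hat{T}) + \kappa$; because $v_k$ is an approximate minimizer with slack $\kappa$ and $w^* \in W_k$, $\ell_{\tau_k}(v_k; p \circ \hat{T}) \leq \ell_{\tau_k}(w^*; p \circ \hat{T}) + \kappa$; applying Proposition~\ref{prop:l(TC)=l(p)-nasty} in the reverse direction, $\ell_{\tau_k}(w^*; p \circ \hat{T}) \leq \ell_{\tau_k}(w^*; \hatTC \cup \hatTE) + \kappa$; and finally uniform concentration again gives $\ell_{\tau_k}(w^*; \hatTC \cup \hatTE) \leq L_k(w^*) + \kappa$. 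Taking a union bound over the failure events of both uniform convergence bounds and of Proposition~\ref{prop:l(TC)=l(p)-nasty}, each tuned to a $\frac{\delta}{(k+1)(k+2)}$-fraction of the total failure budget, this chain gives $\err_{D_{w_{k-1}, b_k}}(v_k) \leq L_k(w^*) + 5\kappa$.

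It remains to bound $L_k(w^*)$ directly. Because clean instances are labeled by $w^*$, we have $y_x (w^* \cdot x) = \abs{w^* \cdot x}$, so
\begin{equation*}
L_k(w^*) \leq \Pr_{x \sim D_{w_{k-1}, b_k}}\del{\abs{w^* \cdot x} \leq \tau_k}.
\end{equation*}
Using the anti-concentration property of isotropic log-concave distributions (every unit-vector marginal has bounded density), the joint probability $\Pr_{x\sim D}(\abs{w^* \cdot x} \leq \tau_k,\ \abs{w_{k-1} \cdot x} \leq b_k)$ is at most $O(\tau_k)$, while the denominator $\Pr(\abs{w_{k-1} \cdot x} \leq b_k) = \Theta(b_k)$. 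Hence $L_k(w^*) \leq O(\tau_k / b_k)$, and by the hyper-parameter setting $\tau_k = c_0 \kappa \min\{b_k, 1/9\}$ this ratio is at most a small absolute constant times $\kappa$, which I would absorb into a single $\kappa$ by tuning $c_0$ (consistent with how $\kappa$ and the other constants were fixed in Section~\ref{subsec:param-setting}). Combining gives $\err_{D_{w_{k-1}, b_k}}(v_k) \leq 6\kappa$.

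The main obstacle is the careful bookkeeping of constants and failure probabilities, since the lemma asks for the precise bound $6\kappa$ rather than $O(\kappa)$; in particular I would need to verify that the uniform concentration, Proposition~\ref{prop:l(TC)=l(p)-nasty}, and the log-concave anti-concentration bound together allow each of the five slacks to be exactly $\kappa$ under the stated $N_k$. The log-concave bound on $L_k(w^*)$, although elementary, requires verifying that the joint anti-concentration for two unit vectors $w^*$ and $w_{k-1}$ really loses only a constant factor (one can either invoke the marginal-density bound of Part~\ref{item:ilc:tail} of Lemma~\ref{lem:logconcave} or a two-dimensional projection argument in the plane spanned by $w^*$ and $w_{k-1}$); this is where I expect the most subtle argument to lie.
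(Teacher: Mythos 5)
Your argument matches the paper's proof step for step: both bound $\err_{D_{w_{k-1},b_k}}(v_k)$ by the conditional expected hinge loss $L_k(v_k)$, then chain through uniform concentration on the i.i.d.\ set $\hatTC \cup \hatTE$, Proposition~\ref{prop:l(TC)=l(p)-nasty} to switch to $\ell_{\tau_k}(\cdot;p\circ\hat{T})$, the $\kappa$-approximate optimality of $v_k$ together with $w^*\in W_k$, and the same two bridges in reverse to land at $L_k(w^*)+5\kappa$, finishing with $L_k(w^*)\leq\kappa$. The only cosmetic differences are that you re-derive the $L_k(w^*)\leq\kappa$ step inline from anti-concentration rather than citing Lemma~\ref{lem:L(w*)} (Lemma~3.7 of Awasthi et al.), and your failure-probability bookkeeping should split $\delta_k=\frac{\delta}{(k+1)(k+2)}$ among the high-probability events (e.g.\ $\delta_k/2$ each) rather than allotting a full $\delta_k$ to each.
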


It is important to note a small  constant error rate within the band implies an $O(\epsilon)$ error rate of the final output $\tilde{w}$ over the distribution $D$~--~a well-known fact in margin-based active learning framework \cite{balcan2007margin,awasthi2017power}. Therefore, Lemma~\ref{lem:err_k(v_k)} has an immediate implication of the correctness of Theorem~\ref{thm:nasty-informal}; the full proof can be found in Appendix~\ref{sec:app:main-proof}.

\section{Conclusion}\label{sec:conclusion}

This paper provides an improved analysis on the sample complexity of a well-established algorithm for learning of homogeneous halfspaces under the malicious noise. It is shown that by leveraging a matrix Chernoff-type inequality with localization, the obtained sample complexity is optimal up to logarithmic factors. We also  extend our analysis to the stronger nasty noise model, and show the achievability of near-optimal noise tolerance and sample complexity by an efficient algorithm when the learner is permitted to communicate with the adversary for multiple rounds.

%

\clearpage

\bibliographystyle{alpha}
\bibliography{../../jshen_ref}

\newpage

\appendix

\section{Detailed Choices of Reserved Constants}\label{sec:app:constants}

The absolute constants $c_0$, $c_1$ and $c_2$ are specified in Lemma~\ref{lem:logconcave}, and $c_3$ and $c_4$ are specified in Lemma~\ref{lem:err outside band}. $c_5$ and $c_6$ are clarified in Section~\ref{subsec:param-setting}. The definition of $c_7$ and $c_8$ can be found in Lemma~\ref{lem:x-2norm} and Lemma~\ref{lem:P(x in band)} respectively. The absolute constant $C_1$ acts as an upper bound of all $b_k$'s, and by our choice in Section~\ref{subsec:param-setting}, $C_1 = \bar{c}/16$. The absolute constant $C_2$ is defined in Lemma~\ref{lem:E[wx^2]}. Other absolute constants, such as $C_3, C_4$ are not quite crucial to our analysis or algorithmic design. Therefore, we do not track their definitions. The subscript variants of $K$, e.g. $K_1$ and $K_2$, are also absolute constants but their values may change from appearance to appearance. We remark that the value of all these constants does not depend on the underlying distribution $D$ chosen by the adversary, but rather depends on the knowledge that $D$ is a member of the family of isotropic log-concave distributions.

\section{Omitted Proofs from Section~\ref{sec:mal}}\label{sec:app:proof-mal}

We will frequently use the well-known Chernoff bound in our analysis. For convenience, we record it below.

\begin{lemma}[Chernoff bound]\label{lem:chernoff}
Let $Z_1, Z_2, \dots, Z_n$ be $n$ independent random variables that take value in $\{0, 1\}$. Let $Z = \sum_{i=1}^{n} Z_i$. For each $Z_i$, suppose that $\Pr(Z_i =1) \leq \eta$.  Then for any $\alpha \in [0, 1]$
\begin{equation*}
\Pr\( Z \geq  (1+\alpha) \eta n\) \leq e^{-\frac{\alpha^2 \eta n}{3} }.
\end{equation*}
When $\Pr(Z_i =1) \geq \eta$, for any $\alpha \in [0, 1]$
\begin{equation*}
\Pr\( Z \leq  (1-\alpha) \eta n\) \leq e^{-\frac{\alpha^2 \eta n}{2} }.
\end{equation*}
\end{lemma}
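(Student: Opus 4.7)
The plan is to prove both tail bounds by the standard Chernoff--Cram\'er moment generating function (MGF) technique, handling the upper tail and lower tail in parallel. For the upper tail, I would fix any $t>0$, apply Markov's inequality to the nonnegative random variable $e^{tZ}$, and obtain
\begin{equation*}
\Pr\bigl(Z \geq (1+\alpha)\eta n\bigr) \;=\; \Pr\bigl(e^{tZ}\geq e^{t(1+\alpha)\eta n}\bigr) \;\leq\; e^{-t(1+\alpha)\eta n}\,\EE[e^{tZ}].
\end{equation*}
By independence, $\EE[e^{tZ}]=\prod_{i=1}^{n}\EE[e^{tZ_i}]$, and since each $Z_i\in\{0,1\}$ with $\Pr(Z_i=1)=p_i\leq\eta$, one has $\EE[e^{tZ_i}] = 1+p_i(e^t-1) \leq \exp\bigl(p_i(e^t-1)\bigr)$ (using $1+x\leq e^x$). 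Setting $\mu := \sum_{i=1}^n p_i \leq \eta n$, this yields $\EE[e^{tZ}]\leq e^{\mu(e^t-1)}$, and monotonicity in $\mu$ (for $t>0$) lets us replace $\mu$ by $\eta n$ in the final bound.

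Next I would optimize the free parameter. The natural choice is $t = \log(1+\alpha)$, which converts the bound into
\begin{equation*}
\Pr\bigl(Z\geq (1+\alpha)\eta n\bigr) \;\leq\; \Bigl[\tfrac{e^{\alpha}}{(1+\alpha)^{1+\alpha}}\Bigr]^{\eta n} \;=\; \exp\Bigl(-\eta n\bigl((1+\alpha)\log(1+\alpha)-\alpha\bigr)\Bigr).
\end{equation*}
The remaining task is the deterministic inequality $(1+\alpha)\log(1+\alpha)-\alpha \geq \alpha^2/3$ for $\alpha\in[0,1]$, which follows by comparing the two sides at $\alpha=0$ (both vanish) and verifying that the derivative of the difference is nonnegative on $[0,1]$, or equivalently by expanding $\log(1+\alpha)$ as an alternating series and truncating. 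This produces the advertised bound $e^{-\alpha^2\eta n/3}$.

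For the lower tail, I would run the symmetric argument with $t<0$: Markov's inequality on $e^{tZ}$ at the event $\{Z\leq(1-\alpha)\eta n\}$, then the same MGF factorization with the inequality $\EE[e^{tZ_i}] \leq \exp\bigl(p_i(e^t-1)\bigr)$ (valid for all real $t$). Now, however, we use $p_i\geq\eta$ together with the observation that $e^t-1<0$, so monotonicity again gives $\EE[e^{tZ}]\leq \exp\bigl(\eta n(e^t-1)\bigr)$. Choosing $t=\log(1-\alpha)$ and simplifying yields
\begin{equation*}
\Pr\bigl(Z\leq (1-\alpha)\eta n\bigr) \;\leq\; \exp\Bigl(-\eta n\bigl(\alpha + (1-\alpha)\log(1-\alpha)\bigr)\Bigr),
\end{equation*}
and the numerical inequality $\alpha+(1-\alpha)\log(1-\alpha) \geq \alpha^2/2$ for $\alpha\in[0,1]$ (again verifiable by differentiation, and in fact cleaner than the upper-tail version) finishes the proof.

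The argument is essentially routine, so there is no genuine obstacle, but the main place requiring care is the elementary inequalities $(1+\alpha)\log(1+\alpha)-\alpha\geq \alpha^2/3$ and $\alpha+(1-\alpha)\log(1-\alpha)\geq \alpha^2/2$; these account for the slightly different constants in the two tails and are the reason one cannot just state a symmetric bound. Everything else is bookkeeping: MGF of a Bernoulli, the inequality $1+x\leq e^x$, and the optimization step. Since this lemma is invoked purely as a black box in Lemma~\ref{lem:nasty-|AD|} and Lemma~\ref{lem:nasty-sample-set-size}, a one-line citation to a standard reference would also suffice in place of the self-contained derivation above.
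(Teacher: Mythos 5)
The paper does not actually prove this lemma; it is recorded verbatim as a ``well-known'' Chernoff bound and used as a black box (precisely as you note at the end), so there is no paper proof to compare against. Your Cram\'er--Chernoff MGF derivation is correct and is the standard one: the only place worth a second look is the upper-tail scalar inequality $(1+\alpha)\log(1+\alpha)-\alpha\geq\alpha^2/3$, where the second derivative of the difference $f(\alpha)=(1+\alpha)\log(1+\alpha)-\alpha-\alpha^2/3$ changes sign at $\alpha=1/2$, so ``the derivative of the difference is nonnegative'' needs a small extra check that $f'(1)=\log 2-2/3>0$ (equivalently, use $f'(0)=0$, $f'$ concave, $f'(1)>0$); the lower-tail inequality is clean since $g''\geq0$ throughout, exactly as you observe.
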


\subsection{Proof of Lemma~\ref{lem:rho}}
\begin{proof}
We note that $(\rho^+ - \rho^-)^2 \leq 4 (\rho^+)^2$. In addition, this inequality is almost tight up to a constant factor since $\rho^-$ can be as small as $0$. To see this, observe that $u \in W$ and $x$ is such that $\abs{u \cdot x} \leq b$.

Thus, it remains to upper bound $\rho^+$. Due to localized sampling, for any $w \in W$ we have
\begin{equation}\label{eq:wx-abs}
\abs{w \cdot x} \leq \abs{(w - u) \cdot x} + \abs{u \cdot x} \leq \twonorm{w-u} \cdot \twonorm{x} + b \leq r \cdot c_7 \sqrt{d} \log\frac{1}{b\delta} + b,
\end{equation}
where the first step follows from the triangle inequality, the second step uses Cauchy-Schwarz inequality and the fact $x \sim D_{u, b}$, and the last step applies Lemma~\ref{lem:x-2norm}. The lemma follows by noting that $r = \Theta(b)$.
\end{proof}


\subsection{Proof of Lemma~\ref{lem:spectrum}}

\begin{proof}
For any unit vector $v$, observe that $w := r v + u$ is such that $\twonorm{w - u} \leq r$. Hence,
\begin{align*}
\EXP\sbr{(v \cdot x)^2} &= \frac{1}{r^2} \EXP\sbr{(r \cdot v \cdot x)^2}\\
& \leq \frac{2}{r^2} \EXP\sbr{((r \cdot v + u) \cdot x)^2} + \frac{2}{r^2}\EXP \sbr{ (u \cdot x)^2 }\\
&\leq \frac{2}{r^2} \cdot C_2(b^2 + r^2) + \frac{2}{r^2} \cdot b^2\\
&\leq \frac{4 C_2 (b^2 + r^2)}{r^2},
\end{align*}
where in the second step we use the basic inequality $a_1^2 \leq 2 (a_1 - a_2)^2 + 2 a_2^2$, and in the third step we apply Lemma~\ref{lem:E[wx^2]}. This proves the first desired inequality.

Next, by Lemma~\ref{lem:x-2norm} we have  with probability $1 - \delta$, $\twonorm{x} \leq c_7 \sqrt{d} \log\frac{1}{b \delta}$. Then for any unit vector $v$, we have
\begin{equation*}
(v \cdot x)^2 \leq \twonorm{v}^2 \cdot \twonorm{x}^2 \leq c_7^2 \cdot d  \log^2\frac{1}{b \delta},
\end{equation*}
which implies the second desired inequality.
\end{proof}

\subsection{Proof of Proposition~\ref{prop:var}}

\begin{proof}
In Lemma~\ref{lem:matrix-chernoff}, we set $\alpha = 1$, $M_i = x_i x_i\trans$ where $x_i$ is the $i$-th instance in the set $\TC$. Lemma~\ref{lem:spectrum} implies that $\mumax \leq \frac{4{C}_2 (b^2+r^2)}{r^2}  \abs{\TC} \leq K \cdot \abs{\TC}$ for some constant $K > 0$ since $r = \Theta(b)$, and with probability $1-\delta$, $\Lambda \leq K_1 \cdot d \log^2\frac{\abs{\TC}}{b\delta}$ by union bound. By conditioning on these events and putting all pieces together, Lemma~\ref{lem:matrix-chernoff} asserts that with probability $1 - d \cdot (\frac{e}{4})^{ \frac{K}{K_1} \cdot \frac{\abs{\TC}}{ d \log^2\frac{\abs{\TC}}{b\delta} } }$,
\begin{equation}\label{eq:tmp:max-eigen}
\lambdamax\del[2]{\sum_{x \in S} x x\trans } \leq 2 K \cdot \abs{\TC}.
\end{equation}
Equivalently, the above holds with probability $1-\delta$ as long as $\abs{\TC} \geq K_2 d \log^2\frac{\abs{\TC}}{b\delta} \cdot \log\frac{d}{\delta}$ for some constant $K_2 > 0$.
\end{proof}

\subsection{Proof of Lemma~\ref{lem:N}}

\begin{proof}
By Lemma~\ref{lem:P(x in band)}
\begin{equation*}
\Pr_{x \sim D}(x \in X) \geq c_8 b.
\end{equation*}
This implies that
\begin{align*}
&\ \Pr_{x \sim \oraclex}(x \in X_{u, b} \ \text{and}\  x \text{ is clean}) \\
=&\ \Pr_{x \sim \oraclex}(x \in X_{u, b} \mid  x \text{ is clean}) \cdot \Pr_{x \sim \oraclex}( x \text{ is clean}) \geq c_8 b (1-\eta).
\end{align*}
We want to ensure that by drawing $N$ instances from $\oraclex$, with probability at least $1- \delta$, $n$ out of them fall into the band $X_{u, b}$. We apply the second inequality of Lemma~\ref{lem:chernoff} by letting $Z_i=\ind{x_i \in X_{u, b} \ \text{and}\ x_i \text{ is clean}}$ and $\alpha = 1/2$, and obtain
\begin{equation*}
\Pr\del{ \abs{\TC} \leq \frac{c_8 b (1-\eta)}{2} N } \leq \exp\del{-\frac{c_8 b (1-\eta) N}{8}},
\end{equation*}
where the probability is taken over the event that we make a number of $N$ calls to $\oraclex$. Thus, when $N \geq \frac{8}{c_8 b (1- \eta)}\del{ n + \ln\frac{1}{\delta} }$, we are guaranteed that at least $n$ samples from $\oraclex$ fall into the band $X_{u, b}$ with probability $1 - \delta$. The lemma follows by observing $\eta < \frac{1}{2}$.
\end{proof}

\subsection{Proof of Lemma~\ref{lem:pruning}}

This is a simplified version of Lemma~30 of \citet{shen2020attribute}.

\begin{proof}
We calculate the noise rate within the band $X_k \defeq \{x: \abs{w_{k-1} \cdot x} \leq b_k\}$ by Lemma~\ref{lem:noise-rate-in-band}:
\begin{equation*}
\Pr_{x \sim \oraclex}( x\ \text{is\ dirty} \mid x \in X_{u, b}) \leq \frac{2\eta}{c_8 b} \leq \frac{2\eta}{c_8  \epsilon} \leq \frac{2c_5}{c_8} \leq \frac{1}{8},
\end{equation*}
where the second inequality applies the setting $b \geq \epsilon$, the third inequality is due to the condition $\eta \leq c_5 \epsilon$, and the last inequality is due to the condition that $c_5$ is assumed to be a sufficiently small constant. Now we apply the first inequality of Lemma~\ref{lem:chernoff} by specifying $Z_i = \ind{x_i\ \text{is\ dirty}}$, $\alpha = 1$ therein, which gives
\begin{equation*}
\Pr\del{ \abs{\TD} \geq \frac{1}{4} \abs{T} } \leq \exp\del{ - \frac{\abs{T}}{24}},
\end{equation*}
where the probability is taken over the draw of ${T}$. The lemma follows by setting the right-hand side to $\delta$ and noting that $\abs{\TC} = \abs{T} - \abs{\TD}$.
\end{proof}

\begin{lemma}\label{lem:noise-rate-in-band}
Assume $\eta < \frac{1}{2}$. We have
\begin{equation*}
\Pr_{x \sim \oraclex}\del{ x\ \text{is\ dirty} \mid x \in X_{u, b} } \leq \frac{2\eta}{c_8 b}
\end{equation*}
where $c_8$ was defined in Lemma~\ref{lem:P(x in band)}.
\end{lemma}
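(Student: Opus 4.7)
The plan is to apply Bayes' rule and then bound the numerator and denominator separately using the structure of the malicious oracle $\oraclex$. Write
\[
\Pr_{x \sim \oraclex}\del{ x\ \text{is\ dirty} \mid x \in X_{u, b} } = \frac{\Pr_{x \sim \oraclex}\del{ x\ \text{is\ dirty}\ \text{and}\ x \in X_{u, b} }}{\Pr_{x \sim \oraclex}\del{ x \in X_{u, b} }}.
\]
The numerator is trivially at most $\eta$ by dropping the band condition, since the malicious oracle returns a dirty instance with probability exactly $\eta$.

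For the denominator, I would decompose according to whether $x$ is clean or dirty. Dropping the dirty contribution can only decrease the probability, so it suffices to lower bound the clean contribution. The clean branch is activated with probability $1-\eta$ and, conditioned on that branch, $x \sim D$; hence
\[
\Pr_{x \sim \oraclex}\del{ x \in X_{u, b} } \geq (1-\eta) \cdot \Pr_{x \sim D}\del{ x \in X_{u, b} } \geq (1-\eta) \cdot c_8 b,
\]
where the final inequality is exactly Lemma~\ref{lem:P(x in band)}. Since $\eta < \tfrac{1}{2}$ implies $1 - \eta > \tfrac{1}{2}$, the denominator is at least $\tfrac{1}{2} c_8 b$.

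Combining the two bounds yields
\[
\Pr_{x \sim \oraclex}\del{ x\ \text{is\ dirty} \mid x \in X_{u, b} } \leq \frac{\eta}{\tfrac{1}{2} c_8 b} = \frac{2\eta}{c_8 b},
\]
which is the claimed inequality. There is no real obstacle here: the argument is a one-line Bayes computation, and the only nontrivial input is the lower bound $\Pr_{x \sim D}(x \in X_{u, b}) \geq c_8 b$ supplied by Lemma~\ref{lem:P(x in band)}, together with the assumption $\eta < \tfrac{1}{2}$ that is needed to convert the $(1-\eta)$ factor into the absolute constant $\tfrac{1}{2}$.
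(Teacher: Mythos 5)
Your proof is correct and matches the paper's argument essentially step for step: you lower-bound $\Pr_{x \sim \oraclex}(x \in X_{u,b})$ by the clean contribution $(1-\eta)\Pr_{x\sim D}(x\in X_{u,b}) \geq (1-\eta)c_8 b \geq \tfrac12 c_8 b$, upper-bound the joint probability by $\eta$, and divide. This is precisely what the paper does, with only cosmetic differences in exposition.
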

\begin{proof}
For an instance $x$, we use $\mathrm{tag}_x = 1$ to denote that $x$ is drawn from $D$, and use $\mathrm{tag}_x = -1$ to denote that $x$ is adversarially generated.

We first calculate the probability that an instance returned by $\oraclex$ falls into the band $X_{u, b}$ as follows:
\begin{align*}
&\ \Pr_{x \sim \oraclex} \del{x \in X_{u, b}} \\
=&\ \Pr_{x \sim \oraclex}\del{x \in X_{u, b} \ \text{and}\ \mathrm{tag}_x = 1} + \Pr_{x \sim \oraclex}\del{x \in X_{u, b} \ \text{and} \ \mathrm{tag}_x = -1}\\
\geq&\ \Pr_{x \sim \oraclex}\del{x \in X_{u, b} \ \text{and}\ \mathrm{tag}_x = 1}\\
=&\ \Pr_{x \sim \oraclex}\del{x \in X_{u, b} \mid \mathrm{tag}_x = 1} \cdot \Pr_{x \sim \oraclex}\del{\mathrm{tag}_x = 1}\\
=&\ \Pr_{x \sim D}\del{x \in X_{u, b}} \cdot \Pr_{x \sim \oraclex}\del{\mathrm{tag}_x = 1}\\
\stackrel{\zeta}{\geq}&\ c_8 b \cdot (1- \eta)\\
\geq&\ \frac{1}{2}c_8 b,
\end{align*}
where in the inequality $\zeta$ we applied Part~\ref{item:prob-band:refined-lower} of Lemma~\ref{lem:P(x in band)}. It is thus easy to see that
\begin{equation*}
\Pr_{x \sim \oraclex}\del{\textrm{tag}_x = -1 \mid x \in X_{u, b}} \leq \frac{ \Pr_{x \sim \oraclex}\del{ \textrm{tag}_x = -1} }{ \Pr_{x \sim \oraclex} \del{x \in X_{u, b}} } \leq \frac{2\eta}{c_8 b},
\end{equation*}
which is the desired result.
\end{proof}


\subsection{Rademacher analysis leads to suboptimal sample complexity for quadratic functions}

To see why a general Rademacher analysis may not suffice, we can, for example, think of the quadratic function $(w \cdot x)^2$ as a composition of the functions $\phi(f) = f^2$ and $f_w(x) = w\cdot x$. Recall that we showed with high probability that $\abs{w\cdot x} \leq O(b \sqrt{d})$ (omitting logarithmic factors for convenience). Now, the gradient of $\phi(\cdot)$ is $2 w \cdot x$ which is upper bounded by ${O}(b \sqrt{d})$ and the function value of $\phi(\cdot)$ is upper bounded by ${O}(b^2 d)$. For the Rademacher complexity $\calR_\calF$ of the class of linear functions $\calF := \{ f_w(x) = w\cdot x: w \in W\}$ on $\TC = \{x_1, \dots, x_n\}$, let $V = \{v \in \Rd: \twonorm{v} \leq 1$ and note that for any $w \in W$, $w = u + r v$. We have by definition
\begin{align*}
\calR_\calF &= \frac{1}{n} \EXP \sup_{w \in W} \sum_{i=1}^n \sigma_i (w \cdot x_i)\\
&= \frac{1}{n} \EXP \sup_{w \in W} {w} \cdot { \sum_{i=1}^n \sigma_i  x_i }\\
&\leq \frac{r}{n} \EXP \sup_{v \in V} v \cdot { \sum_{i=1}^n \sigma_i  x_i } + \frac{1}{n} \EXP u \cdot { \sum_{i=1}^n \sigma_i  x_i }\\
&\leq \frac{r}{n} \EXP \twonorm{  \sum_{i=1}^n \sigma_i  x_i }\\
&\leq \frac{r}{n} \cdot \sqrt{n} \max_{1 \leq i \leq n} \twonorm{x_i},
\end{align*}
where the expectation is taken over the i.i.d. Rademacher variables $\sigma_1, \dots, \sigma_n$. By Lemma~\ref{lem:x-2norm}, $\calR_\calF \leq \frac{r}{\sqrt{n}} \sqrt{d}$ with high probability. By the contraction lemma, the Rademacher complexity of the class of quadratic functions is $O(\frac{br d}{\sqrt{n}})$, and thus uniform concentration through Rademacher analysis requires $O(d^2)$ samples.

Similarly, a straightforward application of local Rademacher analysis~\cite{bartlett2005local} may not suffice as well. However, our discussion here does not rule out the possibility that a more sophisticated exploration of these techniques would lead to the desired sample complexity bound; we leave it as an open problem.

\section{Omitted Proofs from Section~\ref{sec:nasty}}\label{sec:app:nasty}

We present a full proof of the results in Section~\ref{sec:nasty}. Observe that the malicious noise is a special case of the nasty noise; hence this section can also be thought of as providing a complete proof for the results in Section~\ref{sec:mal}.

To improve the transparency, we collect useful notations in Table~\ref{tb:notation}.

\begin{table}[h]
\centering
\caption{Summary of useful notations associated with the working set ${T}$ at each phase $k$ for learning with nasty noise.}
\vspace{1em}
\begin{tabular}{ll}
\toprule
$\hat{A}'$ & labeled clean instance set obtained by drawing $N$ instances from $D$ and labeling them by $w^*$\\
$A'$ & (clean) instance set obtained by hiding all the labels in $\hat{A}'$\\
$\hat{A}$ & labeled corrupted instance set obtained by replacing $\eta N$ samples in $\hat{A}'$\\
$A$ & (corrupted) instance set obtained by hiding all the labels in $\hat{A}$\\
$\AC$ & set of clean instances in $A$\\
$\AD$ & set of dirty instances in $A$, i.e. $A \backslash \AC$\\
$\AE$ & set of clean instances erased from ${A}'$ by the adversary\\
${T}$ & set of instances in $A$ that satisfy $\abs{w_{k-1} \cdot x} \leq b_k$\\
$\TC$ & set of clean instances in ${T}$\\
$\TD$ & set of dirty instances in ${T}$, i.e. ${T} \backslash \TC$\\
$\hatTC$ & unrevealed labeled set of $\TC$\\
$\hatTE$ & unrevealed labeled set of $\TE$\\
\bottomrule
\end{tabular}
\label{tb:notation}
\end{table}

\subsection{Proof of Lemma~\ref{lem:nasty-|AD|}}

\begin{proof}
Since $\eta \leq c_5 \epsilon$ and $b \geq \epsilon$, we have $\eta \leq c_5 b \leq \frac{1}{2}c_8 \xi b$ where the second inequality follows from the fact that $c_5$ is a small constant and $\xi \geq \Omega(1)$. Thus $\abs{\AD} = \eta N \leq \frac{1}{2}c_8 \xi b N$ and $\abs{\AC} = N - \abs{\AD} \geq (1- \frac{1}{2}c_8 \xi b)N$.
\end{proof}

\subsection{Proof of Lemma~\ref{lem:nasty-sample-set-size}}

\begin{proof}
We first show that the following two events hold simultaneously with probability $1-\frac{\delta_k}{24}$:
\begin{align*}
&E_1: \abs{\AC} \geq \del[2]{1- \frac{1}{2} c_8 \xi b}N \ \text{and}\ \abs{\AD} \leq \frac{1}{2} c_8 \xi b N,\\
&E_2: \abs{\TC} \geq \frac{1}{2}c_8 (1-\xi) b N \ \text{and}\ \abs{\TE} \leq \frac{1}{2} c_8 \xi b N.
\end{align*}
Observe that $E_1$ holds with certainty due to Lemma~\ref{lem:nasty-|AD|}.

To see why $E_2$ holds with high probability, we recall that Part~\ref{item:prob-band:refined-lower} of Lemma~\ref{lem:P(x in band)} shows that $\Pr_{x \sim D}\del{x \in X_{u, b} } \geq c_8 b$. For each $x_i \in \AC \cup \AE$, define $Z_i = \ind{ x_i \in X_{u, b}}$. Since $\AC \cup \AE$ are i.i.d. draws from $D$, by applying the second part of Lemma~\ref{lem:chernoff} with $\alpha = 1/2$, we have
\begin{equation*}
\Pr\del[3]{ \sum_{i=1}^{N} Z_i \leq \frac{1}{2} c_8 b N } \leq \exp\del{- \frac{c_8 b N}{8}}.
\end{equation*}
This shows that
\begin{equation*}
\abs{\TC} + \abs{\TE} \geq \frac{1}{2}c_8 b N
\end{equation*}
with probability $1- \delta$ provided that $N \geq \frac{8}{c_8 b}\ln\frac{1}{\delta}$. On the other side, we have $\abs{\TE} \leq \abs{\AE} = \abs{\AD} \leq \frac{1}{2} c_8 \xi b N$. Thus it follows that $\abs{\TC} \geq \frac{1}{2}c_8 (1-\xi) b N$.

For Part~\ref{item:nasty-emp-noise}, we have
\begin{equation}
\frac{\abs{\TC}}{\abs{\TD}} \geq \frac{1-\xi}{\xi},
\end{equation}
where the inequality follows from  $E_2$ and the fact $\abs{\TD} = \abs{\TE}$. Therefore,
\begin{equation}
\frac{\abs{\TD}}{\abs{{T}}} = \frac{1}{1 + \abs{\TC} / \abs{\TD}} \leq \xi.
\end{equation}

Part~\ref{item:nasty-TC-size} of the lemma simply follows $E_2$.
\end{proof}

\subsection{Proof of Proposition~\ref{prop:var-nasty}}
\begin{proof}
Since $N \geq \frac{d}{b} \cdot \polylog{(d, \frac{1}{\delta}}$, we have by Part~\ref{item:nasty-TC-size} that $\abs{\TC \cup \TE} \geq \abs{\TC} \geq d \cdot \polylog{d, \frac{1}{\delta}}$. Therefore, we can directly apply Proposition~\ref{prop:var} by thinking of $\TC$ therein as $\TC \cup \TE$ in the current proposition.
\end{proof}

\subsection{Proof of Theorem~\ref{thm:outlier-guarantee-nasty}}

\begin{proof}
We first show the existence of a feasible function $q(x)$ to Algorithm~\ref{alg:reweight}. Consider the specific function $q: T \rightarrow [0, 1]$ as follows: $q(x) = 1$ for all $x \in \TC$ and $q(x) = 0$ otherwise. We have
\begin{equation*}
\frac{1}{\abs{T}} \sum_{x \in T} q(x) = \frac{\abs{\TC}}{\abs{T}} = 1 - \frac{\abs{\TD}}{\abs{T}} \geq 1 - \xi,
\end{equation*}
in view of Part~\ref{item:nasty-emp-noise} of Lemma~\ref{lem:nasty-sample-set-size}.

To show Part~\ref{item:outlier-nasty:avg-var}, we note that $\TC \cup \TE$ are i.i.d. draws from $D_{u, b}$ and Lemma~\ref{lem:nasty-sample-set-size} shows that $\abs{\TC \cup \TE} \geq \Omega(b N)$. Therefore, as far as $N \geq \frac{d}{b} \cdot \polylog{d}$, Theorem~\ref{thm:var} implies that
\begin{equation*}
\frac{1}{\abs{\TC} + \abs{ \TE}} \sum_{x \in \TC \cup \TE} (w \cdot x)^2 \leq \frac{c}{2} (b^2 + r^2).
\end{equation*}
Since $(w\cdot x)^2$ is always non-negative, we have
\begin{equation*}
\frac{1}{\abs{\TC}} \sum_{x \in \TC} (w\cdot x)^2 \leq \frac{\abs{\TC} + \abs{ \TE}}{\abs{\TC}} \cdot \frac{1}{\abs{\TC} + \abs{ \TE}} \sum_{x \in \TC \cup \TE} (w\cdot x)^2 \leq \frac{\abs{\TC} + \abs{ \TE}}{\abs{\TC}} \cdot \frac{c}{2} (b^2 + r_k^2).
\end{equation*}
Part~\ref{item:nasty-TC-size} of Lemma~\ref{lem:nasty-sample-set-size} shows that $\abs{\TE} / \abs{\TC} \leq \frac{\xi}{1-\xi} \leq 1$ since $\xi \leq \frac{1}{2}$. Plugging this upper bound into the above inequality, we obtain
\begin{equation*}
\frac{1}{\abs{\TC}} \sum_{x \in \TC} (w \cdot x)^2 \leq c (b^2 + r^2).
\end{equation*}
In a nutshell, our construction of $q(x)$ ensures the feasibility to all constraints in Algorithm~\ref{alg:reweight}. By ellipsoid method we are able to find a feasible solution in polynomial time.
\end{proof}

\subsection{Proof of Proposition~\ref{prop:l(TC)=l(p)-nasty}}
Let $z = \sqrt{b^2 + r^2}$. We will in fact prove a stronger result, i.e.,
\begin{align}
\ell_{\tau}(w; \hatTC \cup \hatTE) &\leq \ell_{\tau}(w; p \circ \hat{T}) + 2 \xi \del{2 + \sqrt{2K_2} \cdot \frac{z}{\tau}}  + \sqrt{2K_2 \xi} \cdot \frac{z}{\tau},\\
\ell_{\tau}(w; p \circ \hat{T}) &\leq \ell_{\tau}(w; \hatTC \cup \hatTE) + 2 \xi + \sqrt{4K_2 \xi} \cdot \frac{z}{\tau}.
\end{align}
The claim in the proposition immediately follows since $z/\tau = \Theta(1)$ and $\xi$ can be chosen as an arbitrarily small constant.

Let $\{q(x)\}_{x \in T}$ be the output of Algorithm~\ref{alg:reweight} under the nasty noise model. We extend the domain of $q(x)$ from $T$ to $T \cup \TE$ as follows: for any $x \in T$, the value $q(x)$ remains unchanged; for any $x \in \TE$, we set $q(x) = 0$. With this in mind, we can, for the purpose of analysis, think of the probability mass function $\{p(x)\}_{x \in T}$ obtained in Algorithm~\ref{alg:main} as over $T \cup \TE$, with the value $p(x)$ stays unchanged for $x \in T$ and $p(x) = 0$ for all $x \in \TE$. 

Now with the {\em extended} probability mass function $\{p(x)\}_{x \in T \cup \TE}$, we can prove the proposition.

\begin{proof}
Let $\hatTC$ and $\hatTE$ be the labeled set of $\TC$ and $\TE$ that is correctly annotated by $w^*$ respectively. For any $x$ in the instance space, let $y_x$ be the label that the adversary is committed to. Recall that the empirical distribution $\{ p(x) \}_{x \in T \cup \TE}$ was defined as follows: $p(x) = \frac{q(x)}{\sum_{x \in T} q(x)}$ for $x \in T$ and $p(x) = 0$ for $x \in \TE$. The reweighted hinge loss on $T \cup \TE$ using $p(x)$ is given by
\begin{equation}
\ell_{\tau}(w; p \circ \hat{T}) = \frac{1}{\abs{T \cup \TE}} \sum_{x \in T\cup \TE} p(x) \cdot \max\Big\{ 0, 1 - \frac{1}{\tau} y_x w \cdot x \Big\}.
\end{equation}

The choice of $N$ guarantees that Proposition~\ref{prop:var-nasty}, Lemma~\ref{lem:nasty-sample-set-size}, and Theorem~\ref{thm:outlier-guarantee-nasty} hold simultaneously with probability $1- \delta$. We thus have for all $w \in W$
\begin{align}
\frac{1}{\abs{\TC \cup \TE}}\sum_{x \in \TC \cup \TE}(w \cdot x)^2 \leq K_1 z^2,\label{eq:tmp:wx}\\
\frac{\abs{\TD}}{\abs{T}} \leq \xi,\label{eq:tmp:xi}\\
\frac{1}{\abs{T}} \sum_{x \in T} q(x)(w\cdot x)^2 \leq K_2 z^2.\label{eq:tmp:qx-wx}
\end{align}
We now expand $T$ to $T \cup \TE$ for the last two inequalities. Indeed, from \eqref{eq:tmp:xi}, it is easy to show that
\begin{equation}\label{eq:tmp:xi-2}
\frac{\abs{\TD}}{\abs{T \cup \TE}} \leq \frac{\abs{\TD}}{\abs{T}} \leq  \xi.
\end{equation}
Next, since we defined $q(x) = 0$ for all $x \in \TE$, \eqref{eq:tmp:qx-wx} implies that
\begin{equation}\label{eq:tmp:qx-wx-2}
\frac{1}{\abs{T \cup \TE}} \sum_{x \in T \cup \TE} q(x) (w \cdot x)^2 = \frac{1}{\abs{T \cup \TE}} \sum_{x \in T} q(x) (w \cdot x)^2 \leq \frac{1}{\abs{T}} \sum_{x \in T} q(x)(w\cdot x)^2 \leq K_2 z^2.
\end{equation}
The remaining steps are exactly same as Proposition~33 of \citet{shen2020attribute} since all the analyses therein rely only on the conditions \eqref{eq:tmp:wx}, \eqref{eq:tmp:xi-2} and \eqref{eq:tmp:qx-wx-2}. For completeness, we present the full proof here.

It follows from Eq.~\eqref{eq:tmp:xi-2} and $\xi \leq 1/2$ that
\begin{equation}\label{eq:tmp:|W|/|T_C|}
\frac{\abs{T \cup \TE}}{\abs{\TC \cup \TE}} \leq \frac{\abs{T \cup \TE}}{\abs{\TC}}  = \frac{\abs{T\cup \TE}}{\abs{T\cup \TE} - \abs{\TD}}  = \frac{1}{1 - \abs{\TD} / \abs{T\cup \TE}} \leq \frac{1}{1- \xi} \leq 2.
\end{equation}
In the following, we condition on the event that all these inequalities are satisfied.

\vspace{0.1in}
\noindent{\bfseries Step 1.}
First we upper bound $\ell_{\tau}(w; \hatTC \cup \hatTE)$ by $\ell_{\tau}(w; p \circ \hat{T})$.
\begin{align}
\abs{\TC \cup \TE} \cdot \ell_{\tau}(w; \hatTC \cup \hatTE) &=  \sum_{x \in \TC \cup \TE} \ell(w; x, y_x) \notag\\
&= \sum_{x \in T \cup \TE} \sbr{ q(x) \ell(w; x, y_x) + \big(\ind{x \in \TC \cup \TE} - q(x)\big) \ell(w; x, y_x) } \notag\\
&\stackrel{\zeta_1}{\leq}   \sum_{x \in T \cup \TE} q(x) \ell(w; x, y_x) + \sum_{x \in \TC \cup \TE} (1 - q(x)) \ell(w; x, y_x) \notag\\
&\stackrel{\zeta_2}{\leq}  \sum_{x \in T \cup \TE} q(x) \ell(w; x, y_x) + \sum_{x \in \TC \cup \TE} (1 - q(x)) \del{1 + \frac{\abs{w\cdot x}}{\tau}} \notag\\
&\stackrel{\zeta_3}{\leq} \sum_{x \in T \cup \TE} q(x) \ell(w; x, y_x) + \xi \abs{T \cup \TE} + \frac{1}{\tau} \sum_{x \in \TC \cup \TE} (1-q(x)) \abs{w\cdot x} \notag\\
&\stackrel{\zeta_4}{\leq} \sum_{x \in T \cup \TE} q(x) \ell(w; x, y_x) \notag\\
&\quad+ \xi \abs{T \cup \TE} + \frac{1}{\tau} \sqrt{\sum_{x \in \TC \cup \TE} (1-q(x))^2} \cdot \sqrt{\sum_{x \in \TC \cup \TE} (w\cdot x)^2} \notag\\
&\stackrel{\zeta_5}{\leq} \sum_{x \in T \cup \TE} q(x) \ell(w; x, y_x) + \xi \abs{T \cup \TE} + \frac{1}{\tau} \sqrt{\xi \abs{T \cup \TE}} \cdot  \sqrt{K_1 \abs{\TC \cup \TE}} \cdot {z}, \label{eq:tmp:l(T_C)-1}
\end{align}
where $\zeta_1$ follows from the simple fact that 
\begin{align*}
\sum_{x \in T \cup \TE} \del[1]{\ind{x \in \TC \cup \TE} - q(x)} \ell(w; x, y_x) &= \sum_{x \in \TC\cup \TE} (1-q(x)) \ell(w; x, y_x) + \sum_{x \in \TD} (-q(x)) \ell(w; x, y_x) \\
&\leq \sum_{x \in \TC\cup \TE} (1-q(x)) \ell(w; x, y_x),
\end{align*}
$\zeta_2$ explores the fact that the hinge loss is always upper bounded by $1 + \frac{\abs{w\cdot x}}{\tau}$ and that $1-q(x) \geq 0$, $\zeta_3$ follows from Part~\ref{item:outlier-nasty:avg-q} of Theorem~\ref{thm:outlier-guarantee-nasty}, $\zeta_4$ applies Cauchy-Schwarz inequality, and $\zeta_5$ uses Eq.~\eqref{eq:tmp:wx}.

In view of Eq.~\eqref{eq:tmp:|W|/|T_C|}, we have $\frac{\abs{T \cup \TE}}{\abs{\TC \cup \TE}} \leq 2$. Continuing Eq.~\eqref{eq:tmp:l(T_C)-1}, we obtain
\begin{align}
\ell_{\tau}(w; \hatTC \cup \hatTE) &\leq \frac{1}{\abs{\TC \cup \TE}}\sum_{x \in T \cup \TE} q(x) \ell(w; x, y_x) + 2 \xi + \sqrt{2K_1 \xi} \cdot \frac{z}{\tau} \notag\\
&= \frac{\sum_{x \in T \cup \TE} q(x)}{\abs{\TC \cup \TE}} \sum_{x \in T \cup \TE} p(x) \ell(w; x, y_x) + 2 \xi + \sqrt{2 K_1 \xi} \cdot \frac{z}{\tau} \notag\\
&= \ell_{\tau}(w; p \circ \hat{T}) + \del{\frac{\sum_{x \in T \cup \TE} q(x)}{\abs{\TC \cup \TE}} - 1} \sum_{x \in T \cup \TE} p(x) \ell(w; x, y_x) + 2 \xi + \sqrt{2 K_1 \xi} \cdot \frac{z}{\tau} \notag\\
&\leq \ell_{\tau}(w; p \circ \hat{T}) + \del{\frac{\abs{T \cup \TE}}{\abs{\TC \cup \TE}} - 1} \sum_{x \in T \cup \TE} p(x) \ell(w; x, y_x) + 2 \xi + \sqrt{2 K_1 \xi} \cdot \frac{z}{\tau} \notag\\
&\leq \ell_{\tau}(w; p \circ \hat{T}) + 2\xi \sum_{x \in T \cup \TE} p(x) \ell(w; x, y_x) + 2 \xi + \sqrt{2 K_1 \xi} \cdot \frac{z}{\tau},\label{eq:tmp:l(T_C)-2}
\end{align}
where in the last inequality we use the fact that $\abs{\TE} = \abs{\TD}$ and $T \cap \TE = \emptyset$, and thus
\begin{equation*}
\frac{\abs{T \cup \TE}}{\abs{\TC \cup \TE}} - 1 = \frac{\abs{T} + \abs{\TD}}{\abs{T}} - 1 = \frac{\abs{\TD}}{\abs{T}} \leq \xi.
\end{equation*}

On the other hand, we have the following result which will be proved later on.
\begin{claim}\label{claim:err-aux}
$\sum_{x \in T \cup \TE} p(x) \ell(w; x, y_x) \leq 1 + \sqrt{2K_2} \cdot \frac{z}{\tau}.$
\end{claim}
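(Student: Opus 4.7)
The plan is to upper bound the reweighted hinge loss by a constant plus a Cauchy–Schwarz term involving the second moment $\sum_x p(x)(w\cdot x)^2$, and then invoke the variance bound \eqref{eq:tmp:qx-wx} together with the lower bound $\sum_{x\in T} q(x)\geq (1-\xi)|T|$ guaranteed by Part~\ref{item:outlier-nasty:avg-q} of Theorem~\ref{thm:outlier-guarantee-nasty}.

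First I would use the elementary inequality $\ell(w; x, y_x) = \max\{0, 1 - \tfrac{1}{\tau} y_x w\cdot x\} \leq 1 + \tfrac{|w\cdot x|}{\tau}$. Since $\{p(x)\}_{x \in T \cup \TE}$ is a probability distribution that vanishes on $\TE$, this yields
\begin{equation*}
\sum_{x \in T \cup \TE} p(x)\, \ell(w; x, y_x) \leq 1 + \frac{1}{\tau} \sum_{x \in T} p(x)\, |w\cdot x|.
\end{equation*}
Then, applying Cauchy–Schwarz and using $\sum_{x \in T} p(x) = 1$, I bound
\begin{equation*}
\sum_{x \in T} p(x)\, |w\cdot x| \leq \sqrt{\sum_{x \in T} p(x)} \cdot \sqrt{\sum_{x \in T} p(x)\, (w\cdot x)^2} = \sqrt{\sum_{x \in T} p(x)\, (w\cdot x)^2}.
\end{equation*}

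Next, I would translate the weighted second moment from $p(x)$ back to $q(x)$. Since $p(x) = q(x)/\sum_{x' \in T} q(x')$ for $x \in T$, the identity
\begin{equation*}
\sum_{x \in T} p(x)\, (w\cdot x)^2 = \frac{\sum_{x \in T} q(x)\, (w\cdot x)^2}{\sum_{x' \in T} q(x')}
\end{equation*}
combined with \eqref{eq:tmp:qx-wx} in the numerator and Part~\ref{item:outlier-nasty:avg-q} of Theorem~\ref{thm:outlier-guarantee-nasty} in the denominator gives
\begin{equation*}
\sum_{x \in T} p(x)\, (w\cdot x)^2 \leq \frac{K_2 z^2 \abs{T}}{(1-\xi)\abs{T}} \leq 2 K_2 z^2,
\end{equation*}
where the last step uses $\xi \leq 1/2$. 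Plugging this back yields $\sum_{x \in T \cup \TE} p(x)\, \ell(w; x, y_x) \leq 1 + \sqrt{2K_2}\cdot \tfrac{z}{\tau}$, as desired.

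There is no real obstacle here; the argument is a clean combination of Cauchy–Schwarz with the variance guarantee of the soft outlier removal. The only delicate point is that the extension of $p$ to $\TE$ by zero does not affect the computation, so both the normalization identity and the Cauchy–Schwarz step can be carried out on $T$ alone without loss.
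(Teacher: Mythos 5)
Your proof is correct and takes essentially the same route as the paper: bound the hinge loss pointwise by $1 + \abs{w\cdot x}/\tau$, apply Cauchy–Schwarz, and control the reweighted second moment via the variance constraint of Algorithm~\ref{alg:reweight} together with the lower bound $\sum_{x\in T} q(x) \geq (1-\xi)\abs{T}$. The paper packages the last step as the inequality labelled \eqref{eq:tmp:sum p(x)(wx)^2}, but the calculation is the same.
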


Therefore, continuing Eq.~\eqref{eq:tmp:l(T_C)-2} we have
\begin{equation*}
\ell_{\tau}(w; \hatTC \cup \hatTE) \leq \ell_{\tau}(w; p \circ \hat{T}) + 2 \xi \del{2 + \sqrt{2K_2} \cdot \frac{z}{\tau}}  + \sqrt{2K_2 \xi} \cdot \frac{z}{\tau}.
\end{equation*}
which proves the first inequality of the proposition.

\vspace{0.1in}
\noindent{\bfseries Step 2.}
We move on to prove the second inequality of the theorem, i.e. using $\ell_{\tau}(w; \hatTC \cup \hatTE)$ to upper bound $\ell_{\tau}(w; p \circ \hat{T})$. Let us denote by $\pD = \sum_{x \in \TD} p(x)$ the probability mass on dirty instances. Then
\begin{equation}\label{eq:tmp:p_D}
\pD = \frac{\sum_{x \in \TD} q(x)}{\sum_{x \in T} q(x)} \leq \frac{\abs{\TD}}{(1-\xi)\abs{T}}\leq \frac{\xi}{1-\xi} \leq 2 \xi,
\end{equation}
where the first inequality follows from $q(x) \leq 1$ and Part~\ref{item:outlier-nasty:avg-q} of Theorem~\ref{thm:outlier-guarantee-nasty}, the second inequality follows from \eqref{eq:tmp:xi}, and the last inequality is by our choice $\xi \leq 1/2$.

Note that by Part~\ref{item:outlier-nasty:avg-q} of Theorem~\ref{thm:outlier-guarantee-nasty} and the choice $\xi \leq 1/2$, we have
\begin{equation*}
\sum_{x \in T} q(x) \geq (1-\xi) \abs{T} \geq \abs{T}/2
\end{equation*}
Hence
\begin{align}\label{eq:tmp:sum p(x)(wx)^2}
\sum_{x \in T  } p(x) (w\cdot x)^2 &= \frac{1}{\sum_{x \in T } q(x)} \sum_{x \in T } q(x) (w\cdot x)^2 \notag\\
&\leq \frac{2}{\abs{T}}  \sum_{x \in T } q(x) (w\cdot x)^2 \notag\\
&\leq 2 \cdot K_2 z^2
\end{align}
where the last inequality holds because of \eqref{eq:tmp:qx-wx}. Thus,
\begin{align*}
\sum_{x \in \TD} p(x) \ell(w; x, y_x) &\leq \sum_{x \in \TD} p(x) \del{1 + \frac{\abs{w\cdot x}}{\tau}}\\
&= \pD + \frac{1}{\tau}\sum_{x \in \TD} p(x) \abs{w\cdot x}\\
&= \pD +\frac{1}{\tau} \sum_{x \in T} \del {\ind{x \in \TD} \sqrt{p(x)} } \cdot \del{ \sqrt{p(x)}\abs{w\cdot x} }\\
&\leq \pD +  \frac{1}{\tau}  \sqrt{\sum_{x \in T} \ind{x \in \TD} {p(x)}} \cdot \sqrt{ \sum_{x \in T} p(x) (w\cdot x)^2 }\\
&\stackrel{\eqref{eq:tmp:sum p(x)(wx)^2}}{\leq} \pD + \sqrt{\pD} \cdot \sqrt{2K_2} \cdot  \frac{z}{\tau}.
\end{align*}
With the result on hand, we bound $\ell_{\tau}(w; p \circ \hat{T})$ as follows:
\begin{align*}
\ell_{\tau}(w; p \circ \hat{T}) &= \sum_{x \in \TC \cup \TE} p(x) \ell(w; x, y_x) + \sum_{x \in \TD} p(x) \ell(w; x, y_x)\\
&\leq \sum_{x \in \TC \cup \TE} \ell(w; x, y_x) +  \sum_{x \in \TD} p(x) \ell(w; x, y_x)\\
&= \ell_{\tau}(w; \hatTC \cup \hatTE) +  \sum_{x \in \TD} p(x) \ell(w; x, y_x)\\
&\leq \ell_{\tau}(w; \hatTC\cup \hatTE) + \pD + \sqrt{\pD} \cdot \sqrt{2K_2} \cdot  \frac{z}{\tau}\\
&\stackrel{\eqref{eq:tmp:p_D}}{\leq} \ell_{\tau}(w; \hatTC \cup \hatTE) + 2 \xi + \sqrt{4K_2 \xi } \cdot \frac{z}{\tau},
\end{align*}
which proves the second inequality of the proposition.

This completes the proof.
\end{proof}

\begin{proof}[Proof of Claim~\ref{claim:err-aux}]
Since $\ell(w; x, y_x) \leq 1 + \frac{\abs{w\cdot x}}{\tau}$, it follows that
\begin{align*}
\sum_{x \in T \cup \TE} p(x) \ell(w; x, y_x) &\leq \sum_{x \in T \cup \TE} p(x) \del{ 1 + \frac{\abs{w\cdot x}}{\tau}}\\
&= 1 + \frac{1}{\tau} \sum_{x \in T \cup \TE} p(x) \abs{w\cdot x}\\
&\leq 1 + \frac{1}{\tau} \sqrt{ \sum_{x \in T \cup \TE} p(x) (w\cdot x)^2}\\
&\stackrel{\eqref{eq:tmp:sum p(x)(wx)^2}}{\leq} 1 + \sqrt{2K_2} \cdot \frac{z}{\tau},
\end{align*}
which completes the proof of Claim~\ref{claim:err-aux}.
\end{proof}

\subsection{Proof of Lemma~\ref{lem:err_k(v_k)}}

For any phase $k$, let $L_{\tau_k}(w) = \EXP_{x \sim D_{w_{k-1}, b_{k}}} \sbr{ \ell_{\tau_k}(w; x, \sign{w^* \cdot x}) }$.

\begin{proof}

Proposition~35 of \citet{shen2020attribute} showed that if $\abs{\TC \cup \TE} \geq d \cdot \polylog{d, \frac{1}{b}, \frac{1}{\delta}}$, then by Rademacher complexity of the hinge loss we have that with probability $1- \frac{\delta}{2}$
\begin{equation}
\sup_{w \in W}\abs{ \ell_{\tau}(w; \hatTC \cup \hatTE) - \EXP_{x \sim D_{u, b}}[ \ell_{\tau}(w; x, \sign{w^* \cdot x})] } \leq \kappa.
\end{equation}
Combining the above with Proposition~\ref{prop:l(TC)=l(p)-nasty} gives that with probability $1-\delta$,
\begin{equation*}
\sup_{w \in W} \abs{ \ell_{\tau}(w; p \circ \hat{T}) - \EXP_{x \sim D_{u, b}}[ \ell_{\tau}(w; x, \sign{w^* \cdot x})] } \leq 2 \kappa.
\end{equation*}
Namely, in any phase $k \leq K$, if $\abs{\TC \cup \TE} \geq d \cdot \polylog{d, \frac{1}{b_k}, \frac{1}{\delta_k}}$, then with probability $1-\delta_k$,
\begin{equation}
\sup_{w \in W_k} \abs{\ell_{\tau_k}(w; p) - L_{\tau_k}(w) } \leq 2 \kappa.
\end{equation}

On the other hand, since the (rescaled) hinge loss is always an upper bound of the error rate, we have
\begin{align*}
\err_{D_{w_{k-1}, b_{k}}}(v_k) \leq L_k(v_k) \stackrel{\zeta_1}{\leq} \ell_{\tau_k}(v_k; p) + 2 \kappa \stackrel{\zeta_2}{\leq} \min_{w \in W_k}\ell_{\tau_k}(w; p) + 3 \kappa \leq \ell_{\tau_k}(w^*; p) + 3\kappa &\stackrel{\zeta_3}{\leq} L_k(w^*) + 5\kappa\\
& \stackrel{\zeta_4}{\leq} 6\kappa \leq 8\kappa,
\end{align*}
where we use the fact that $v_k \in W_k$ in $\zeta_1$, use the optimality condition of $v_k$ in $\zeta_2$, use $w^* \in W_k$ in $\zeta_3$, and use Lemma~\ref{lem:L(w*)} in $\zeta_4$.
\end{proof}

\begin{lemma}[Lemma~3.7 in \citet{awasthi2017power}]\label{lem:L(w*)}
Suppose Assumption~\ref{as:x} is satisfied. Then
\begin{equation*}
L_{\tau_k}(w^*)  \leq \frac{\tau_k}{c_0 \min\{b_k, 1/9\}}.
\end{equation*}
In particular, by our choice of $\tau_k$, it holds that
\begin{equation*}
L_{\tau_k}(w^*) \leq \kappa.
\end{equation*}
\end{lemma}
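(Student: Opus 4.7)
The plan is to exploit the key feature of the true halfspace: when $y = \sign{w^* \cdot x}$, the product $y \cdot (w^* \cdot x)$ equals $|w^* \cdot x|$, so the hinge loss at $w^*$ collapses to a function of $|w^* \cdot x|$ alone. Concretely, for any $x$,
\begin{equation*}
\ell_{\tau_k}(w^*; x, \sign{w^* \cdot x}) = \max\Big\{0,\ 1 - \tfrac{|w^* \cdot x|}{\tau_k}\Big\} \leq \ind{|w^* \cdot x| \leq \tau_k}.
\end{equation*}
Taking expectation under $D_{w_{k-1}, b_k}$ then gives $L_{\tau_k}(w^*) \leq \Pr_{x \sim D_{w_{k-1}, b_k}}\!\big(|w^* \cdot x| \leq \tau_k\big)$, and the whole task reduces to bounding the conditional probability that $w^* \cdot x$ falls in the thin slab $[-\tau_k, \tau_k]$ given that $x$ lies in the band $X_{w_{k-1}, b_k}$.

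Next I would rewrite the conditional probability as a ratio
\begin{equation*}
\Pr_{x \sim D_{w_{k-1}, b_k}}\!\big(|w^* \cdot x| \leq \tau_k\big) = \frac{\Pr_{x \sim D}\!\big(|w^* \cdot x| \leq \tau_k \text{ and } |w_{k-1} \cdot x| \leq b_k\big)}{\Pr_{x \sim D}\!\big(|w_{k-1} \cdot x| \leq b_k\big)}.
\end{equation*}
The numerator can be crudely upper bounded by discarding the second event and using the standard isotropic log-concave fact that the density of any one-dimensional marginal along a unit direction is bounded above by an absolute constant $c_2$, yielding $\Pr_{x \sim D}(|w^* \cdot x| \leq \tau_k) \leq 2 c_2 \tau_k$. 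For the denominator, I would apply Lemma~\ref{lem:P(x in band)} (i.e., the result used throughout Section~\ref{sec:mal}) which gives $\Pr_{x \sim D}(|w_{k-1} \cdot x| \leq b_k) \geq c_8 \min\{b_k, 1/9\}$; the $1/9$ truncation appears because for large $b_k$ the mass already saturates to a constant, and the log-concave lower bound on the density of a marginal is only guaranteed on a bounded interval around the origin.

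Combining the two estimates yields
\begin{equation*}
L_{\tau_k}(w^*) \leq \frac{2 c_2 \tau_k}{c_8 \min\{b_k, 1/9\}} \;=\; \frac{\tau_k}{c_0 \min\{b_k, 1/9\}},
\end{equation*}
absorbing the ratio of constants into the $c_0$ that the paper reserves for exactly this purpose (see Appendix~\ref{sec:app:constants}). For the ``in particular'' statement, I would just plug in the prescribed hyperparameter $\tau_k = c_0 \kappa \cdot \min\{b_k, 1/9\}$ from Section~\ref{subsec:param-setting}, and the two factors of $c_0 \min\{b_k, 1/9\}$ cancel, leaving $L_{\tau_k}(w^*) \leq \kappa$.

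The only mildly delicate point is the two-regime behaviour of the band probability: one has to recognize that the quantity $\min\{b_k, 1/9\}$ in the claim is not cosmetic but reflects exactly where the isotropic log-concave density lower bound holds. Beyond that the argument is essentially a one-line application of the hinge-vs-indicator bound followed by ratio estimates that are direct consequences of the log-concave facts already invoked in Lemma~\ref{lem:P(x in band)} and in the statement of Lemma~\ref{lem:logconcave}.
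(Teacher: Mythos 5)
Your argument is correct in structure and is essentially the standard proof of the cited Lemma~3.7 of Awasthi~et~al.\ (the paper itself simply cites that source without reproducing a proof): bound the hinge loss at $w^*$ by the indicator of the thin slab $\{\abs{w^*\cdot x}\le\tau_k\}$, then bound the resulting conditional probability as a ratio of an anti-anti-concentration estimate over a band-mass lower bound. One small bookkeeping point is worth flagging so the constants match exactly. For the numerator, the relevant fact in this paper is Part~\ref{item:ilc:anti-anti-concen} of Lemma~\ref{lem:logconcave}, which says the one-dimensional marginal density is bounded by $1$ (so $\Pr_{x\sim D}(\abs{w^*\cdot x}\le\tau_k)\le 2\tau_k$, with no extra constant; the symbol $c_2$ is reserved in this paper for the angle-to-error constant, not a density bound). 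For the denominator, Lemma~\ref{lem:P(x in band)} as stated gives $\ge c_8 b$; the $\min\{b_k,1/9\}$ form you want appears as the intermediate estimate $\Pr_{x\sim D}(\abs{w_{k-1}\cdot x}\le b_k)\ge 2c_0\min\{b_k,1/9\}$ inside that lemma's proof (coming from Part~\ref{item:ilc:anti-concen} of Lemma~\ref{lem:logconcave}). Dividing $2\tau_k$ by $2c_0\min\{b_k,1/9\}$ gives precisely $\tau_k/\bigl(c_0\min\{b_k,1/9\}\bigr)$, rather than having to wave at the constants being absorbed. The ``in particular'' step via $\tau_k=c_0\kappa\min\{b_k,1/9\}$ then cancels exactly as you say. (The $k=1$ case, where $w_0=0$ and the band is all of $\Rd$, is handled in the paper by the convention in the last paragraph of the proof of Lemma~\ref{lem:P(x in band)}.)
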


\begin{lemma}\label{lem:feasible-to-angle}
For any $1 \leq k \leq K$, if $w^* \in W_k$, then with probability $1-\delta_k$, $\theta(v_k, w^*) \leq 2^{-k-8} \pi$.
\end{lemma}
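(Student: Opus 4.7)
The plan is to convert the in-band error bound from Lemma~\ref{lem:err_k(v_k)} into a global angle bound between $v_k$ and $w^*$ via a band-decomposition argument standard in margin-based active learning under isotropic log-concave marginals. First I would invoke the fact that for isotropic log-concave $D$, the angle between any two unit vectors is (up to absolute constants) equivalent to their disagreement probability (a consequence of Lemma~\ref{lem:logconcave}). Thus it suffices to bound $\err_D(v_k) := \Pr_{x \sim D}(\sign{v_k \cdot x} \neq \sign{w^* \cdot x})$ by a sufficiently small constant multiple of $r_k$, so that the resulting angle lands below $2^{-k-8}\pi$.

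The key decomposition is
\begin{equation*}
\err_D(v_k) \leq \Pr(x \in X_{w_{k-1}, b_k}) \cdot \err_{D_{w_{k-1}, b_k}}(v_k) + \Pr\bigl(\sign{v_k \cdot x} \neq \sign{w^* \cdot x},\ \abs{w_{k-1} \cdot x} > b_k\bigr).
\end{equation*}
For the in-band contribution, Lemma~\ref{lem:P(x in band)} bounds the band mass by $O(b_k) = O(\bar c\, r_k)$, and Lemma~\ref{lem:err_k(v_k)} (which already holds with probability $1 - \delta_k$, absorbing our failure budget) gives $\err_{D_{w_{k-1}, b_k}}(v_k) \leq 6\kappa$, so the in-band term is at most $O(\bar c\, \kappa)\, r_k$. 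For the out-of-band contribution, since both $v_k$ and $w^*$ lie in $W_k$ and thus within distance $r_k$ of $w_{k-1}$, any disagreement of $v_k$ with $w^*$ outside the band forces at least one of them to disagree with $w_{k-1}$ there. A union bound combined with Lemma~\ref{lem:err outside band} applied to both $v_k$ and $w^*$ controls this by $g(\bar c)\, r_k$, where $g$ is precisely the function introduced in Section~\ref{subsec:param-setting}.

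Summing the two contributions yields $\err_D(v_k) \leq \bigl(C \bar c\, \kappa + g(\bar c)\bigr)\, r_k$ for an absolute constant $C$ arising from the band-mass bound. By the defining property $g(\bar c) \leq 2^{-8}\pi$ and the choice $\kappa = \exp(-\bar c)$, the parenthesized coefficient is at most $2^{-7}\pi$ (say) for all sufficiently large $\bar c$, giving $\err_D(v_k) \leq 2^{-7}\pi\, r_k$. Translating back through the log-concave angle-to-error equivalence and recalling $r_k \leq 2^{-k-6}$ then yields $\theta(v_k, w^*) \leq 2^{-k-8}\pi$, as desired.

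The main obstacle I anticipate is bookkeeping the absolute constants carefully: one must verify that the in-band slack $C \bar c\, \kappa$, which picks up a factor $\bar c$ from the band mass, is indeed dominated by the margin left in the inequality $g(\bar c) \leq 2^{-8}\pi$. This ultimately rests on the fact that $\kappa = e^{-\bar c}$ decays exponentially in $\bar c$ while the competing factor $C \bar c$ grows only linearly, so choosing $\bar c$ as prescribed (together with $\bar c \geq 8\pi/c_4$) leaves plenty of slack. The case $k = 1$, where $r_1 = 1$ and $W_1$ is essentially unconstrained, requires mild care but is handled identically since $b_1 = \bar c$ is an absolute constant and the same two-part decomposition applies.
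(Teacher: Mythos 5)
Your proposal follows the same route as the paper: decompose $\err_D(v_k)$ into an in-band part (controlled by Lemma~\ref{lem:err_k(v_k)} multiplied by the band mass) and an out-of-band part (controlled via Lemma~\ref{lem:err outside band} using $v_k, w^* \in W_k$), convert error to angle via Part~\ref{item:ilc:err=theta} of Lemma~\ref{lem:logconcave}, and close the constants with the choices of $\bar c$ and $\kappa = e^{-\bar c}$. Where you differ is the out-of-band term: you union-bound $\Pr(\sign{v_k\cdot x}\neq\sign{w^*\cdot x},\, x\notin X_k)$ by the disagreements of $v_k$ and of $w^*$ each with $w_{k-1}$ outside the band, and apply Lemma~\ref{lem:err outside band} twice with the band centered on $w_{k-1}$ — which is one of the two vectors in each application — whereas the paper applies the lemma once to the pair $(v_k, w^*)$ with the band around the third vector $w_{k-1}$. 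Your variant hews more closely to the literal statement of Lemma~\ref{lem:err outside band}, at the cost of a factor of $2$ from the union bound, and in fact gives a slightly better exponent ($c_4\bar c/(2\pi)$ rather than $c_4\bar c/(4\pi)$), so this is a clean and defensible alternative.

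A few items need tightening before the constants genuinely close. First, you write $\err_D(v_k) \leq (C\bar c\,\kappa + g(\bar c))\,r_k$ and then invoke $g(\bar c)\leq 2^{-8}\pi$, but $g$ in Section~\ref{subsec:param-setting} already carries the error-to-angle factor $c_2$: it is constructed to bound the coefficient of $2^{-k}$ in $\theta(v_k, w^*)$, not in $\err_D(v_k)$. As written, your final step picks up an unabsorbed extra $c_2$ when translating $\err_D(v_k)\leq 2^{-7}\pi\,r_k$ back to an angle, and the factor-of-two slack between $2^{-8}\pi$ and $2^{-7}\pi$ is not enough to swallow it for an arbitrary absolute constant $c_2$; the fix is to work directly with the angle as the paper does (so that $g$ is applied to the right quantity). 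Second, the upper bound $\Pr_{x\sim D}(x\in X_k)\leq 2b_k$ comes from Part~\ref{item:ilc:anti-anti-concen} of Lemma~\ref{lem:logconcave}, not from Lemma~\ref{lem:P(x in band)}, which only gives a lower bound on the band mass. Third, the case $k=1$ is not handled ``identically'': there $w_0$ is the zero vector, so $X_{w_0,b_1}$ is not a genuine band and there is no rejection sampling; the paper instead bounds $\err_D(v_1)$ directly by Lemma~\ref{lem:err_k(v_k)} with no band decomposition. None of these is a conceptual gap — the exponential decay of $\kappa$ in $\bar c$ leaves ample room — but each must be corrected for the argument to be airtight.
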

\begin{proof}

For $k = 1$, by Lemma~\ref{lem:err_k(v_k)} with the facts that we actually sample from $D$ and $w^* \in  \Rd =: W_1$, we immediately have
\begin{equation*}
\Pr_{x \sim D}\(\sign{v_1 \cdot x} \neq \sign{w^* \cdot x}\) \leq 8\kappa.
\end{equation*}
Hence Part~\ref{item:ilc:err=theta} of Lemma~\ref{lem:logconcave} indicates that
\begin{equation}\label{eq:tmp:theta_1}
\theta(v_1, w^*) \leq 8c_2 \kappa = 16 c_2 \kappa \cdot 2^{-1}.
\end{equation}

Now we consider $2 \leq k \leq  K$. Denote $X_k = \{x: \abs{w_{k-1}\cdot x} \leq b_k\}$, and $\bar{X}_k = \{x: \abs{w_{k-1}\cdot x} > b_k\}$. We will show that the error of $v_k$ on both $X_k$ and $\bar{X}_k$ is small, hence $v_k$ is a good approximation to $w^*$.

First, we consider the error on $X_k$, which is given by
\begin{align}
 &\ \Pr_{x \sim D}\( \sign{v_k \cdot x} \neq \sign{w^* \cdot x}, x \in X_k \) \notag\\
=&\ \Pr_{x \sim D}\( \sign{v_k \cdot x} \neq \sign{w^* \cdot x} \mid x \in X_k \) \cdot \Pr_{x \sim D}(x \in X_k) \notag\\
=&\ \err_{D_{w_{k-1}, b_{k}}}(v_k) \cdot \Pr_{x \sim D}(x \in X_k) \notag\\
\leq&\ 8\kappa \cdot 2b_k = 16 \kappa b_k,\label{eq:tmp:err in band}
\end{align}
where the inequality is due to Lemma~\ref{lem:err_k(v_k)} and Lemma~\ref{lem:logconcave}. Note that the inequality holds with probability $1 - \delta_k$ in view of Lemma~\ref{lem:err_k(v_k)}.

Next we derive the error on $\bar{X}_k$. Note that Lemma~10 of \citet{zhang2018efficient} states for any unit vector $u$, and any general vector $v$, $\theta(v, u) \leq \pi \twonorm{v - u}$. Hence,
\begin{align*}
\theta(v_k, w^*)  \leq \pi \twonorm{v_k - w^*} \leq \pi (\twonorm{v_k - w_{k-1}} + \twonorm{w^* - w_{k-1}}) \leq 2 \pi r_k,
\end{align*}
where we use the condition that both $v_k$ and $w^*$ are in $W_k$.

Recall that we set $r_k = 2^{-k-6} < 1/4$ in our algorithm and choose $b_k = \bar{c} \cdot r_k$ where $\bar{c} \geq 8\pi / c_4$, which allows us to apply Lemma~\ref{lem:err outside band} and obtain
\begin{align*}
\Pr_{x \sim D}\( \sign{v_k \cdot x} \neq \sign{w^* \cdot x}, x \notin X_k \) &\leq c_3 \cdot 2\pi r_k \cdot \exp\del{- \frac{c_4 \bar{c} \cdot r_k}{2 \cdot 2\pi r_k}} \\
&= 2^{-k} \cdot  \frac{c_3 \pi}{4} \exp\del{- \frac{c_4 \bar{c}}{4 \pi}}.
\end{align*}
This in allusion to \eqref{eq:tmp:err in band} gives
\begin{equation*}
\err_D(v_k) \leq 16\kappa \cdot \bar{c} \cdot r_k + 2^{-k} \cdot  \frac{c_3 \pi}{4} \exp\del{- \frac{c_4 \bar{c}}{4 \pi}} = \del{ 2\kappa \bar{c} + \frac{c_3 \pi}{4} \exp\del{- \frac{c_4 \bar{c}}{4 \pi}} } \cdot 2^{-k}.
\end{equation*}
Recall that we set $\kappa = \exp(-\bar{c})$. For convenience denote by $f(\bar{c})$ the coefficient of $2^{-k}$ in the above expression. By Part~\ref{item:ilc:err=theta} of Lemma~\ref{lem:logconcave}
\begin{equation}\label{eq:tmp:theta_k}
\theta(v_k, w^*) \leq c_2 \err_D(v_k) \leq c_2 f(\bar{c}) \cdot 2^{-k}.
\end{equation}

Now let $g(\bar{c}) = c_2 f(\bar{c}) + 16c_2 \exp(-\bar{c})$. By our choice of $\bar{c}$, $g(\bar{c}) \leq 2^{-8}\pi$. This ensures that for both \eqref{eq:tmp:theta_1} and \eqref{eq:tmp:theta_k}, $\theta(v_k, w^*) \leq 2^{-k-8}\pi$ for any $k \geq 1$.
\end{proof}

\begin{lemma}\label{lem:angle-to-feasible}
For any $1 \leq k \leq K$, if $\theta(v_k, w^*) \leq 2^{-k-8}\pi$, then $w^* \in W_{k+1}$.
\end{lemma}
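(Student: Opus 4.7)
The lemma is a short geometric compatibility check: given the angle guarantee from Lemma~\ref{lem:feasible-to-angle}, translate it into membership in the localized ball $W_{k+1}$. My approach is to verify the two defining conditions of
\[
W_{k+1} = \{w : \twonorm{w} \le 1,\ \twonorm{w - w_k} \le r_{k+1}\}
\]
for $w = w^*$ in turn, with essentially no heavy machinery required.

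First, since $w^*$ is the ground-truth halfspace normal and is a unit vector, $\twonorm{w^*} = 1$, so the norm constraint is trivial.

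Second, for the distance constraint, I would exploit that $w_k = v_k/\twonorm{v_k}$ is itself a unit vector and that normalization preserves direction, giving $\theta(w_k, w^*) = \theta(v_k, w^*)$. Then the standard chord-length identity $\twonorm{u-v} = 2\sin(\theta(u,v)/2)$ for unit vectors $u, v$, combined with the elementary inequality $\sin(x) \le x$ for $x \ge 0$, yields
\[
\twonorm{w^* - w_k} \;=\; 2\sin\!\bigl(\theta(v_k,w^*)/2\bigr) \;\le\; \theta(v_k, w^*) \;\le\; 2^{-k-8}\pi
\]
by the hypothesis of the lemma.

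The final step is the purely numerical comparison with $r_{k+1} = 2^{-(k+1)-6}$ from Section~\ref{subsec:param-setting}. This comparison is the only place the explicit schedule of radii interacts with the angle bound, and it is precisely what motivates the specific choice of the absolute constant $\bar{c}$ (engineered so that $g(\bar{c})$ is small enough to be compatible with the localization radii). I expect this to be the only mildly delicate point; the lemma carries no probabilistic content, no concentration, and no use of the matrix Chernoff machinery of Section~\ref{sec:mal}. The work is all front-loaded into Lemma~\ref{lem:feasible-to-angle}, so once we trust that guarantee, Lemma~\ref{lem:angle-to-feasible} reduces to the unit-vector chord-length bound together with the observation that normalizing $v_k$ does not change its angle with $w^*$.
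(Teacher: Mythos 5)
Your approach is exactly the paper's: verify $\twonorm{w^*}\leq 1$ trivially, then bound $\twonorm{w^*-w_k}$ via the chord-length identity. The paper sets $\hat{v}_k := v_k/\twonorm{v_k} = w_k$, writes $\twonorm{\hat{v}_k - w^*} = 2\sin\bigl(\theta(v_k,w^*)/2\bigr) \leq \theta(v_k,w^*)\leq 2^{-k-8}\pi$, and compares against $r_{k+1}$; you propose the same.

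However, the one step you defer --- the ``purely numerical comparison'' --- is where the argument actually breaks if you take the stated schedule literally. With $r_k = 2^{-k-6}$ from Section~\ref{subsec:param-setting}, one has $r_{k+1} = 2^{-k-7}$, so the needed inequality $2^{-k-8}\pi \leq r_{k+1}$ reduces to $\pi \leq 2$, which is false. The paper's own proof instead writes $2^{-k-6} = r_{k+1}$ (equivalently $r_m = 2^{-m-5}$), under which the comparison becomes $\pi \leq 4$ and does hold. This is an off-by-one inconsistency between the parameter section and the proof of the lemma; a complete write-up should flag it and adopt a consistent radius schedule rather than label it ``mildly delicate'' and move on. Separately, $\bar{c}$ plays no role in this lemma: it is chosen so that $g(\bar{c})\leq 2^{-8}\pi$, which is used in Lemma~\ref{lem:feasible-to-angle} to \emph{establish} the hypothesis $\theta(v_k,w^*)\leq 2^{-k-8}\pi$; once that bound is assumed, as it is here, the proof of Lemma~\ref{lem:angle-to-feasible} does not touch $\bar{c}$ at all.
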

\begin{proof}
We only need to show that $\twonorm{w_k - w^*} \leq r_{k+1}$. Let $\hat{v}_k = v_k / \twonorm{v_k}$. By algebra $\twonorm{\hat{v}_k - w^*} = 2 \sin\frac{\theta(v_k, w^*)}{2} \leq \theta(v_k, w^*) \leq 2^{-k-8}\pi \leq 2^{-k-6}$. Now we have
\begin{equation*}
\twonorm{w_{k} - w^*} = \twonorm{ \hat{v}_k - w^*} \leq 2^{-k-6} = r_{k+1}.
\end{equation*}
The proof is complete.
\end{proof}

\subsection{Proof of Theorem~\ref{thm:nasty-informal}}\label{sec:app:main-proof}
\begin{proof}
We will prove the theorem with the following claim.

\begin{claim}\label{claim:main-aux}
For any $1 \leq k \leq K$, with probability at least $1 - \sum_{i=1}^{k} \delta_i$, $w^*$ is in $W_{k+1}$.
\end{claim}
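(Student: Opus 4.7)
The plan is to prove the claim by induction on $k$, chaining together Lemma~\ref{lem:feasible-to-angle} (feasibility in the current ball implies a small angle between $v_k$ and $w^*$) and Lemma~\ref{lem:angle-to-feasible} (a small angle implies feasibility in the next ball), and then applying a simple union bound over the per-phase failure events.

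For the base case $k=1$, recall from Section~\ref{subsec:param-setting} that $r_1 = 1$ and $w_0 = \bzero$, so the constraint set is $W_1 = \{w: \twonorm{w} \leq 1,\ \twonorm{w - \bzero} \leq 1\} = \{w: \twonorm{w} \leq 1\}$. Since $w^*$ is a unit vector, $w^* \in W_1$ deterministically. Applying Lemma~\ref{lem:feasible-to-angle} with $k=1$, with probability at least $1 - \delta_1$ we have $\theta(v_1, w^*) \leq 2^{-9}\pi$, and then Lemma~\ref{lem:angle-to-feasible} immediately yields $w^* \in W_2$. This matches the claimed bound $1 - \sum_{i=1}^{1} \delta_i$.

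For the inductive step, suppose the claim holds at index $k-1$, i.e., $w^* \in W_k$ on an event $\mathcal{E}_{k-1}$ of probability at least $1 - \sum_{i=1}^{k-1} \delta_i$. Condition on $\mathcal{E}_{k-1}$. Since $w^* \in W_k$, the hypothesis of Lemma~\ref{lem:feasible-to-angle} is satisfied at phase $k$, so with conditional probability at least $1 - \delta_k$ we obtain $\theta(v_k, w^*) \leq 2^{-k-8}\pi$; call this conditional event $\mathcal{F}_k$. On $\mathcal{F}_k$, Lemma~\ref{lem:angle-to-feasible} guarantees $w^* \in W_{k+1}$. By the union bound,
\begin{equation*}
\Pr(\mathcal{E}_{k-1} \cap \mathcal{F}_k) \geq 1 - \sum_{i=1}^{k-1}\delta_i - \delta_k = 1 - \sum_{i=1}^{k}\delta_i,
\end{equation*}
which completes the induction. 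Note that the choice $\delta_k = \delta/((k+1)(k+2))$ from Section~\ref{subsec:param-setting} is telescoping and ensures $\sum_{i=1}^{K} \delta_i \leq \delta/2$, so the overall failure probability across all phases remains bounded by a small constant times $\delta$, as needed by Theorem~\ref{thm:nasty-informal}.

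The main ``obstacle'' here is conceptual rather than technical: we must be careful that the hypothesis $w^* \in W_k$ used in Lemma~\ref{lem:feasible-to-angle} (which in turn invokes Lemma~\ref{lem:err_k(v_k)}) is propagated through the induction, since that hypothesis is what licenses both the sample-complexity analysis of soft outlier removal in phase $k$ and the comparison $\ell_{\tau_k}(v_k;\, p\circ \hat{T}) \leq \ell_{\tau_k}(w^*;\, p\circ\hat{T}) + \kappa$ used to bound the within-band error. All other steps are routine chaining and a union bound.
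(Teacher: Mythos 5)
Your proposal is correct and takes essentially the same approach as the paper: an induction on $k$ that chains Lemma~\ref{lem:feasible-to-angle} and Lemma~\ref{lem:angle-to-feasible} with a union/multiplication bound over per-phase failure events. If anything, your base case is slightly more explicit than the paper's, which only observes $w^* \in W_1$ and leaves the first application of the two lemmas implicit; you also correctly note the telescoping of $\delta_k$.
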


Based on the claim, we immediately have that with probability at least $1 - \sum_{k=1}^{K} \delta_k \geq 1 - \delta$, $w^*$ is in $W_{K + 1}$. By our construction of $W_{K + 1}$, we have
\begin{equation*}
\twonorm{w^* - w_{K}} \leq 2^{-K-5}.
\end{equation*}
This, together with Part~\ref{item:ilc:err=theta} of Lemma~\ref{lem:logconcave} and the fact that $\theta(w^*, w_{K}) \leq \pi \twonorm{w^* - w_{K}}$ (see Lemma~10 of \citet{zhang2018efficient}), implies
\begin{equation*}
\err_D(w_{K}) \leq \frac{\pi}{c_1} \cdot 2^{-K-5} = \epsilon.
\end{equation*}

The sample complexity of the algorithm is given by 
\begin{equation*}
N := \sum_{k=1}^{K} N_k  =\sum_{k=1}^{K} \frac{d}{b_k} \cdot \polylog{d, \frac{1}{b_k}, \frac{1}{\delta_k}} \leq \frac{d}{\epsilon} \cdot \polylog{d, \frac{1}{\epsilon}, \frac{1}{\delta}},
\end{equation*}
where we use the fact that $b_k \geq K_1 \epsilon$ for some constant $K_1 > 0$ and $K = O(\log\frac{1}{\epsilon})$.

For each phase $k \leq K$, the number of calls to $\oracley$ equals the size of $T$. For the size of $\TC$, by Lemma~\ref{lem:logconcave} we know that the probability mass of the band $X_k = \{ x: \abs{w_{k-1} \cdot x} \leq b_k \}$ is at most $2b_k$, implying that $\abs{\TC} \leq O( b_k N_k)$ with high probability in view of Chernoff bound. On the other hand, by Part~\ref{item:nasty-TC-size} of Lemma~\ref{lem:nasty-sample-set-size} we have $\abs{\TD} = \abs{\TE} \leq O(b_k N_k)$ since $\xi_k = \Theta(1)$ as indicated in Section~\ref{subsec:param-setting}. Therefore, $\abs{T} \leq O(b_k N_k)$ and the label complexity $m$ of the algorithm is given by
\begin{equation*}
m \leq \sum_{k=1}^{K} b_k N_k = d \cdot \polylog{d, \frac{1}{\epsilon}, \frac{1}{\delta}}.
\end{equation*}

It remains to prove Claim~\ref{claim:main-aux} by induction. First, for $k=1$, $W_1 = \{w: \twonorm{w}\leq 1 \}$. Therefore, $w^* \in W_1$ with probability $1$. Now suppose that Claim~\ref{claim:main-aux} holds for some $k \geq 2$, that is, there is an event $E_{k-1}$ that happens with probability $1 - \sum_{i}^{k-1}\delta_i$, and on this event $w^* \in W_k$. By Lemma~\ref{lem:feasible-to-angle} we know that there is an event $F_k$ that happens with probability $1-\delta_k$, on which $\theta(v_k, w^*) \leq 2^{-k-8} \pi$. This further implies that $w^* \in W_{k+1}$ in view of Lemma~\ref{lem:angle-to-feasible}. Therefore, consider the event $E_{k-1} \cap F_k$, on which $w^* \in W_{k+1}$ with probability $\Pr(E_{k-1}) \cdot \Pr(F_k \mid E_{k-1}) = (1 - \sum_{i}^{k-1}\delta_i) (1 - \delta_k) \geq 1 - \sum_{i=1}^{k} \delta_i$.
\end{proof}

\section{Properties of Isotropic Log-Concave Distributions}\label{sec:logconcave-property}

We record some useful properties of isotropic log-concave distributions.

\begin{lemma}\label{lem:logconcave}
There are absolute constants $c_0, c_1, c_2 > 0$, such that the following holds for all isotropic log-concave distributions $D \in \calD$. Let $f_D$ be the density function. We have
\begin{enumerate}
\item \label{item:ilc:proj} Orthogonal projections of $D$ onto subspaces of $\Rd$ are isotropic log-concave;
\item \label{item:ilc:anti-anti-concen} If $d=1$, then $\Pr_{x \sim D}(a \leq x \leq b) \leq \abs{b-a}$;
\item \label{item:ilc:anti-concen} If $d=1$, then $f_D(x) \geq c_0$ for all $x \in [-1/9, 1/9]$;
\item \label{item:ilc:err=theta} For any two vectors $u, v \in \Rd$,
\begin{equation*}
c_1 \cdot \Pr_{x \sim D}\del{\sign{u\cdot x} \neq \sign{v \cdot x}} \leq \theta(u, v) \leq c_2 \cdot \Pr_{x \sim D}\del{\sign{u\cdot x} \neq \sign{v \cdot x}};
\end{equation*}
\item \label{item:ilc:tail} $\Pr_{x \sim D}\big( \twonorm{x} \geq t \sqrt{d} \big) \leq \exp(-t + 1)$.
\end{enumerate}
\end{lemma}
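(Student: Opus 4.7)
The plan is to verify each of the five parts separately; Parts~\ref{item:ilc:proj}--\ref{item:ilc:anti-concen} and~\ref{item:ilc:tail} are classical facts (appearing e.g.\ in Lov\'asz--Vempala), while Part~\ref{item:ilc:err=theta} is the only step that requires a genuine argument and will be the focus of the work.

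For Part~\ref{item:ilc:proj}, I would invoke Pr\'ekopa's theorem: marginalizing (hence orthogonally projecting) a log-concave density produces a log-concave density. Zero mean is preserved under any linear map, and for a subspace $V \subseteq \mathbb{R}^d$ the covariance of the orthogonal projection of an isotropic distribution equals the projector $\Pi_V$, which is the identity on $V$, so the projected distribution is isotropic on $V$. For Parts~\ref{item:ilc:anti-anti-concen} and~\ref{item:ilc:anti-concen}, the key one-dimensional fact is that every isotropic log-concave density $f_D$ on $\mathbb{R}$ satisfies $f_D(x) \leq 1$ for all $x$, which combined with unit variance and log-concavity is a standard exercise; Part~\ref{item:ilc:anti-anti-concen} then follows by integrating. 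For Part~\ref{item:ilc:anti-concen}, the classical anti-concentration estimate gives $f_D(0) \geq c$ for an absolute $c > 0$, and log-concavity forces $f_D$ to stay within a constant factor of $f_D(0)$ on any compact interval around the origin; one then checks that $[-1/9, 1/9]$ is small enough to certify a uniform lower bound $c_0$.

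For Part~\ref{item:ilc:err=theta}, I would reduce to the two-dimensional case. Since $\sign{u \cdot x}$ and $\sign{v \cdot x}$ depend on $x$ only through its projection onto $\mathrm{span}(u,v)$, Part~\ref{item:ilc:proj} lets us assume $D$ is a two-dimensional isotropic log-concave distribution. The disagreement region is a union of two opposite cones of total angular aperture $2\theta(u,v)$. Passing to polar coordinates $(r, \varphi)$, the probability of disagreement integrates the density over these angular wedges. The upper bound $\Pr(\cdot) \leq \theta(u,v)/c_1$ uses one-dimensional density upper bounds from Part~\ref{item:ilc:anti-anti-concen} applied to thin slices indexed by $\varphi$; the lower bound $\Pr(\cdot) \geq \theta(u,v)/c_2$ uses the pointwise density lower bound from Part~\ref{item:ilc:anti-concen} on a constant-size disk around the origin, which contributes a mass proportional to the wedge angle times a universal constant.

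For Part~\ref{item:ilc:tail}, I would appeal to the standard concentration estimate that the norm of an isotropic log-concave vector has sub-exponential tails with the explicit constant $\exp(-t+1)$; this follows by noting $\mathbb{E}[\twonorm{x}^2] = d$ and applying the one-dimensional log-concave tail bound along the ray through $x$. The main obstacle is Part~\ref{item:ilc:err=theta}, since it is the only step that genuinely couples the upper and lower density estimates of Parts~\ref{item:ilc:anti-anti-concen} and~\ref{item:ilc:anti-concen} inside a non-trivial (polar) integral; the other parts reduce quickly to well-known textbook statements.
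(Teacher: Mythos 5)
The paper does not supply a proof of Lemma~\ref{lem:logconcave}; all five items are recorded as known facts and attributed to the literature (Parts~\ref{item:ilc:proj},~\ref{item:ilc:anti-anti-concen},~\ref{item:ilc:anti-concen}, and~\ref{item:ilc:tail} to Lov\'asz and Vempala, 2007; Part~\ref{item:ilc:err=theta} to Vempala, 2010, and Balcan and Long, 2013). Your proposal therefore goes beyond what the paper itself does, by reconstructing the underlying arguments, and the sketches for Parts~\ref{item:ilc:proj}, \ref{item:ilc:anti-anti-concen}, \ref{item:ilc:anti-concen}, and the lower bound on the disagreement probability in Part~\ref{item:ilc:err=theta} are accurate reconstructions of the standard proofs.

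Two spots in your sketch are imprecise. For the upper bound in Part~\ref{item:ilc:err=theta}, applying ``one-dimensional density upper bounds from Part~\ref{item:ilc:anti-anti-concen} to thin slices indexed by $\varphi$'' is not a valid reduction: the density of a two-dimensional log-concave measure restricted to a ray through the origin is a conditional density, not a one-dimensional marginal, so the bound $f_D \leq 1$ (which is about marginals) does not control it pointwise. The usual argument instead bounds the mass of the disagreement wedge intersected with a ball of radius $O(\sqrt{d})$ (area $\Theta(\theta\, d)$ in the 2D projection, with density uniformly bounded above near the origin) and covers the rest with the tail estimate of Part~\ref{item:ilc:tail}. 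For Part~\ref{item:ilc:tail} itself, ``applying the one-dimensional log-concave tail bound along the ray through $x$'' has the same flavor of gap: a fixed one-dimensional projection $u\cdot x$ controls only $|u\cdot x|$ and not $\|x\|$ uniformly over directions, while conditioning on the random direction of $x$ destroys both isotropy and zero mean, so neither interpretation yields the claim. The Lov\'asz--Vempala route establishes that the scalar $\|x\|$ has a log-concave density in its own right and then applies a one-dimensional sub-exponential tail bound to that scalar together with $\mathbb{E}\|x\|^2 = d$; this is the missing ingredient your sketch should invoke.
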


We remark that Parts~\ref{item:ilc:proj},~\ref{item:ilc:anti-anti-concen},~\ref{item:ilc:anti-concen}, and~\ref{item:ilc:tail} are due to \citet{lovasz2007geometry}, and Part~\ref{item:ilc:err=theta} is from \citet{vempala2010random,balcan2013active}. 

The following lemma is implied by the proof of Theorem~21 of \citet{balcan2013active}, which shows that if we choose a proper band width $b > 0$, the error outside the band will be small. This observation is crucial for controlling the error over the distribution $D$, and has been broadly recognized in the literature~\cite{awasthi2017power,zhang2018efficient}.

\begin{lemma}[Theorem~21 of \citet{balcan2013active}]\label{lem:err outside band}
There are absolute constants $c_3, c_4 > 0$ such that the following holds for all isotropic log-concave distributions $D \in \calD$. Let $u$ and $v$ be two unit vectors in $\Rd$ and assume that $\theta(u, v) = \alpha < \pi/2$. Then for any $b \geq \frac{4}{c_4} \alpha$, we have
\begin{equation*}
\Pr_{x \sim D}(\sign{u\cdot x} \neq \sign{v \cdot x}\ \text{and}\ \abs{v\cdot x} \geq b) \leq c_3 \alpha \exp\(- \frac{c_4 b}{2 \alpha}\).
\end{equation*}
\end{lemma}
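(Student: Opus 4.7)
The plan is to reduce the inequality to a two-dimensional integration and exploit both the tail behavior and the density regularity of 2D isotropic log-concave distributions. First I would observe that the events $\{\sign{u \cdot x} \neq \sign{v \cdot x}\}$ and $\{|v \cdot x| \geq b\}$ are measurable with respect to the orthogonal projection of $x$ onto $H$, the subspace spanned by $u$ and $v$. By Part~\ref{item:ilc:proj} of Lemma~\ref{lem:logconcave}, this projection is 2D isotropic log-concave, so we may assume $d = 2$.

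In the plane I would choose coordinates with $v = (1, 0)$ and $u = (\cos\alpha, \sin\alpha)$, and write $x = (s, t)$. The disagreement event becomes $s(s\cos\alpha + t\sin\alpha) < 0$; combined with $|s| \geq b$, this forces $(s, t)$ to lie in one of two mirror-symmetric wedges that each have apex at the origin, angular opening $\alpha$, and sit entirely outside the disk of radius $R_0 := b/\sin\alpha$ (since on either wedge, $s^2 + t^2 \geq b^2 + b^2 \cot^2\alpha = b^2/\sin^2\alpha$). This is the crucial geometric observation: although the disagreement region on its own is wide, its intersection with the outside-the-band region has angular width only $\alpha$ and is pushed out to distance at least $b/\alpha$ from the origin.

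I would then integrate the 2D ILC density $f$ over one such wedge (by symmetry, twice the contribution of one suffices). Parametrizing in polar coordinates $(R, \phi)$ with $R \geq R_0$ and $\phi$ ranging over an interval of length $\alpha$, and using a pointwise decay estimate of the form $f(R\cos\phi, R\sin\phi) \leq K_1 e^{-K_2 R}$ that holds for all 2D isotropic log-concave densities (a standard consequence of log-concavity together with the isotropy normalization), the integral factors as an angular contribution of size $O(\alpha)$ multiplied by the radial tail $\int_{R_0}^\infty e^{-K_2 R} R\, dR = O((R_0/K_2) e^{-K_2 R_0})$. Since $\sin\alpha \leq \alpha$ gives $R_0 \geq b/\alpha$, and since the hypothesis $b \geq 4\alpha/c_4$ ensures $R_0$ is large enough that $\ln R_0 \leq K_2 R_0 / 2$, the polynomial $R_0$ factor can be absorbed into the exponential by halving the exponent, producing the claimed bound $c_3 \alpha \exp(-c_4 b/(2\alpha))$ for appropriate absolute constants $c_3, c_4$ depending on $K_1, K_2$.

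The main obstacle I anticipate is producing the linear $\alpha$ prefactor. A crude approach that merely notes the event is contained in $\{\|x\| \geq R_0\}$ and applies the radial tail bound in Part~\ref{item:ilc:tail} of Lemma~\ref{lem:logconcave} yields only an exponential-only bound, losing the $\alpha$ factor; yet this $\alpha$ is essential for the lemma to be useful when $\alpha$ is small (as is the case throughout the analysis of Algorithm~\ref{alg:nasty}). Extracting it requires integrating the density explicitly across the thin angular wedge, which needs a pointwise density upper bound rather than a distributional tail, and carefully managing the polynomial $R$ factor from the Jacobian — this is precisely where the hypothesis $b \geq 4\alpha/c_4$ earns its keep.
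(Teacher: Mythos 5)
The paper does not prove this lemma at all --- it simply imports it as ``implied by the proof of Theorem~21 of Balcan and Long (2013)'' --- so there is no in-paper proof to compare against. Your blind proof is, however, essentially the argument that appears in that reference, and it is correct in all essentials: project onto $\mathrm{span}(u,v)$ using Part~\ref{item:ilc:proj} of Lemma~\ref{lem:logconcave} to reduce to $d=2$, observe that the event forces the projected point into the union of two angular wedges of opening $\alpha$ lying entirely outside a disk of radius $R_0 = b/\sin\alpha$, and integrate a pointwise exponential-decay bound on the 2D density in polar coordinates. Your geometric computation ($s^2+t^2 \geq b^2/\sin^2\alpha$ on each wedge, angular width exactly $\alpha$) is right, and you correctly identify that one must use a pointwise density bound rather than a radial tail bound in order to extract the $\alpha$ prefactor.

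Two small imprecisions that do not affect correctness. First, ``by symmetry, twice the contribution of one suffices'' presumes the density is centrally symmetric, which a general log-concave density need not be; the fix is trivial (bound each of the two wedges separately by the same expression and add). Second, the hypothesis $b \geq \frac{4}{c_4}\alpha$ is not actually what makes the polynomial factor absorbable: the function $R\mapsto Re^{-K_2 R/2}$ is bounded on all of $[0,\infty)$, so the absorption works unconditionally (the hypothesis at most improves the resulting constant). Your remark that one needs $\ln R_0 \leq K_2 R_0/2$ and that this ``is precisely where the hypothesis earns its keep'' overstates its role.
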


\begin{lemma}\label{lem:x-2norm}
Suppose $x$ is randomly drawn from $D_{u, b}$. Then with probability $1-\delta$, $\twonorm{x} \leq c_7 \sqrt{d} \log\frac{1}{b\delta}$ for some constant $c_7 > 0$.
\end{lemma}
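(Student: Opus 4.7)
The plan is to combine the sub-exponential tail bound for isotropic log-concave distributions (Part~\ref{item:ilc:tail} of Lemma~\ref{lem:logconcave}) with the lower bound on the band's probability mass (Lemma~\ref{lem:P(x in band)}, which gives $\Pr_{x \sim D}(x \in X_{u,b}) \geq c_8 b$), via a one-line conditional-probability argument.

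First, I would observe that for any event $E$ on $\Rd$,
\begin{equation*}
\Pr_{x \sim D_{u,b}}(E) = \Pr_{x \sim D}(E \mid x \in X_{u,b}) = \frac{\Pr_{x \sim D}(E \cap \{x \in X_{u,b}\})}{\Pr_{x \sim D}(x \in X_{u,b})} \leq \frac{\Pr_{x \sim D}(E)}{c_8 b},
\end{equation*}
where the last step uses the lower bound on the mass of the band. I would then apply this with $E = \{\twonorm{x} \geq t\sqrt{d}\}$ and plug in the tail estimate $\Pr_{x \sim D}(\twonorm{x} \geq t\sqrt{d}) \leq \exp(-t+1)$ from Part~\ref{item:ilc:tail} of Lemma~\ref{lem:logconcave} to obtain
\begin{equation*}
\Pr_{x \sim D_{u,b}}\bigl(\twonorm{x} \geq t\sqrt{d}\bigr) \leq \frac{\exp(-t+1)}{c_8 b}.
\end{equation*}

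Finally, I would set the right-hand side equal to $\delta$ and solve for $t$: this gives $t = 1 + \log\tfrac{1}{c_8 b \delta}$, which is bounded by $c_7 \log\tfrac{1}{b\delta}$ for a suitable absolute constant $c_7 > 0$ (absorbing the constant $1/c_8$ and the additive $1$ into $c_7$; since $b,\delta \in (0,1]$, the logarithm is at least a positive constant, so this absorption is legitimate). Substituting back yields $\twonorm{x} \leq c_7 \sqrt{d}\log\tfrac{1}{b\delta}$ with probability at least $1 - \delta$, as desired. The only minor subtlety is ensuring the constants line up and that $c_8 b \delta \leq 1$ so that the logarithm is non-negative, which holds because $b, \delta \leq 1$ and $c_8$ can be taken $\leq 1$; otherwise the inequality is trivial. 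There is no serious obstacle here — the argument is a direct consequence of the two previously recorded lemmas.
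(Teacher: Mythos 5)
Your proof is correct and follows essentially the same approach as the paper: you upper-bound the conditional tail probability $\Pr_{x \sim D_{u,b}}(\twonorm{x} \geq t\sqrt{d})$ by $\frac{1}{c_8 b}\Pr_{x \sim D}(\twonorm{x} \geq t\sqrt{d})$, plug in the sub-exponential tail from Part~\ref{item:ilc:tail} of Lemma~\ref{lem:logconcave}, and solve for $t$. The only cosmetic difference is that you re-derive the conditioning inequality from the band-mass lower bound, whereas the paper invokes it directly as Part~\ref{item:prob-band:D-to-band} of Lemma~\ref{lem:P(x in band)}.
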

\begin{proof}
Using Part~\ref{item:prob-band:D-to-band} of Lemma~\ref{lem:P(x in band)}, we have
\begin{equation*}
\Pr_{x \sim D_{u, b}}( \twonorm{x} \geq \alpha) \leq \frac{1}{c_8 b} \Pr_{x \sim D}( \twonorm{x} \geq \alpha) \leq \frac{e}{c_8 b} \exp\del{- \alpha / \sqrt{d}},
\end{equation*}
where we applied Part~\ref{item:ilc:tail} of Lemma~\ref{lem:logconcave} in the last inequality. The lemma follows by setting the right-hand side to $\delta$.
\end{proof}

\begin{lemma}\label{lem:P(x in band)}
Let $c_8 =  \min\big\{2c_0, \frac{2c_0}{9 C_1}, \frac{1}{C_1}\big\}$. Then for all isotropic log-concave distributions $D \in \calD$,
\begin{enumerate}

\item \label{item:prob-band:refined-lower}  $\Pr_{x \sim D}\( \abs{u \cdot x} \leq b \) \geq c_8 \cdot b$;

\item \label{item:prob-band:D-to-band} $\Pr_{x \sim D_{u, b}}(E ) \leq \frac{1}{c_8 b} \Pr_{x \sim D}(E)$ for any  event $E$.
\end{enumerate}
\end{lemma}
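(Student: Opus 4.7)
The plan is to reduce the multidimensional statement to a one-dimensional anti-concentration computation by projecting $D$ along $u$, and then to do a simple case split on the size of $b$. Let $u \in \Rd$ be a unit vector and let $f$ denote the density of the one-dimensional marginal $u \cdot x$ where $x \sim D$. By Part~\ref{item:ilc:proj} of Lemma~\ref{lem:logconcave}, this marginal is isotropic log-concave on $\R$, so the anti-concentration bound in Part~\ref{item:ilc:anti-concen} applies: $f(t) \geq c_0$ for every $t \in [-1/9, 1/9]$. The key identity is simply
\begin{equation*}
\Pr_{x \sim D}\del{\abs{u \cdot x} \leq b} = \int_{-b}^{b} f(t)\,dt.
\end{equation*}

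For Part~\ref{item:prob-band:refined-lower}, I would split on whether $b \leq 1/9$ or $b > 1/9$, recalling from Appendix~\ref{sec:app:constants} that we always have $b \leq C_1$. In the first regime, the entire interval $[-b, b]$ lies inside $[-1/9, 1/9]$, so the integral is at least $2 c_0 b$, which is at least $c_8 \cdot b$ by the first term in the definition of $c_8$. In the second regime, we lower bound the integral by restricting to $[-1/9, 1/9]$ and again use $f \geq c_0$ there, giving $\int_{-b}^{b} f \geq \tfrac{2 c_0}{9}$; since $b \leq C_1$, this yields $\tfrac{2 c_0}{9} \geq \tfrac{2 c_0}{9 C_1} \cdot b \geq c_8 b$ thanks to the second term in the definition of $c_8$. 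The third term $1/C_1$ in the definition of $c_8$ plays no role in establishing the lower bound itself, but it guarantees $c_8 b \leq 1$ whenever $b \leq C_1$, keeping the bound consistent with the fact that the left-hand side is a probability.

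Part~\ref{item:prob-band:D-to-band} is then an immediate consequence of the definition of the conditional distribution: for any measurable event $E$,
\begin{equation*}
\Pr_{x \sim D_{u,b}}(E) = \frac{\Pr_{x \sim D}\del{E \cap \cbr{\abs{u \cdot x} \leq b}}}{\Pr_{x \sim D}\del{\abs{u \cdot x} \leq b}} \leq \frac{\Pr_{x \sim D}(E)}{c_8 b},
\end{equation*}
where the inequality in the numerator uses monotonicity of measure and the denominator uses Part~\ref{item:prob-band:refined-lower} proven just above.

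There is no real technical obstacle here; the entire argument is a routine application of the one-dimensional anti-concentration inequality combined with a two-case comparison. The only point that requires a bit of care is bookkeeping the constants so that $c_8$ as defined in the statement is indeed valid uniformly over both regimes of $b$ and so that Part~\ref{item:prob-band:refined-lower} does not collide with the trivial upper bound of $1$ on any probability, which is exactly what the three terms in $\min\{2c_0, \tfrac{2c_0}{9 C_1}, \tfrac{1}{C_1}\}$ respectively encode.
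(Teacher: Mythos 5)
Your argument for unit vectors $u$ is correct and is essentially the paper's own proof: project along $u$ via Part~\ref{item:ilc:proj} of Lemma~\ref{lem:logconcave}, apply the anti-concentration bound $f \geq c_0$ on $[-1/9,1/9]$, and split on $b \lessgtr 1/9$ (the paper writes $\min\{b, 1/9\}$ rather than an explicit case split, but it is the same computation). Part~\ref{item:prob-band:D-to-band} also follows identically.

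There is, however, one gap: the lemma is applied in Algorithm~\ref{alg:nasty} (and Algorithm~\ref{alg:main}) at phase $k=1$ with $u \leftarrow w_0 = 0$, and your proof silently assumes $u$ is a unit vector. When $u$ is the zero vector, the marginal $u \cdot x$ is a point mass at $0$, not isotropic log-concave, so the anti-concentration step does not apply. The paper handles this case separately: for $u=0$ and $b = C_1$, $\Pr_{x\sim D}(\abs{u\cdot x}\leq b) = 1 \geq c_8 b$, which is exactly where the third term $1/C_1$ in the $\min$ defining $c_8$ is used. Your remark that $1/C_1$ "plays no role in establishing the lower bound itself, but keeps the bound consistent" therefore misattributes its purpose: for unit vectors the first two terms already yield a valid bound (the inferred probability never exceeds $1$), and $1/C_1$ is not a sanity check but the load-bearing ingredient that makes Part~\ref{item:prob-band:refined-lower} (and hence Part~\ref{item:prob-band:D-to-band}) hold in the degenerate $u=0$ case.
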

\begin{proof}
We first consider the case that $u$ is a unit vector.

For the lower bound, Part~\ref{item:ilc:anti-concen} of Lemma~\ref{lem:logconcave}  shows that the density function of the random variable $u\cdot x$ is lower bounded by $c_0$ when $\abs{u \cdot x} \leq 1/9$. Thus
\begin{align*}
\Pr_{x \sim D}\( \abs{u \cdot x} \leq b\) \geq \Pr_{x \sim D}\( \abs{u \cdot x} \leq \min\{b, {1}/{9}\}\) \geq 2 c_0 \min\{b, {1}/{9}\} \geq 2c_0 \min\bigg\{1, \frac{1}{9 C_1}\bigg\} \cdot b
\end{align*}
where in the last inequality we use the condition $b \leq C_1$.

For any event $E$, we always have
\begin{equation*}
\Pr_{x \sim D_{u, b}}(E) \leq \frac{\Pr_{x \sim D}(E)}{\Pr_{x \sim D}({\abs{u\cdot x}\leq b})} \leq \frac{1}{c_8 b} \Pr_{x \sim D}(E).
\end{equation*}

Now we consider the case that $u$ is the zero vector and $b = C_1$. Then $\Pr_{x \sim D}\( \abs{u \cdot x} \leq b \) = 1 \geq c_8 \cdot b$ in view of the choice $c_8$. Thus Part~\ref{item:prob-band:D-to-band} still follows. The proof is complete.
\end{proof}

\end{document}